\def\eqref#1{equation~\ref{#1}}
\def\1{\bm{1}}
\def\rvrho{{\mathbf{\rho}}}
\def\rvv{{\mathbf{v}}}
\def\rvw{{\mathbf{w}}}
\def\rvx{{\mathbf{x}}}
\DeclareMathAlphabet{\mathsfit}{\encodingdefault}{\sfdefault}{m}{sl}
\SetMathAlphabet{\mathsfit}{bold}{\encodingdefault}{\sfdefault}{bx}{n}
\def\gB{{\mathcal{B}}}
\def\gD{{\mathcal{D}}}
\DeclareMathOperator*{\argmin}{arg\,min}
\newcommand\numberthis{\addtocounter{equation}{1}\tag{\theequation}}
\newtheorem{theorem}{Theorem}
\newtheorem{proposition}{Proposition}
\theoremstyle{definition}
\newtheorem{constraint}{Constraint}
\newtheorem{remark}{Remark}
\renewcommand{\sp}{\textup{sp}}
\newcommand{\inv}{\textup{inv}}
\newcommand{\indep}{\perp \!\!\!\! \perp} %
\newcommand{\Dclass}{\mathbb{D}}
\newcommand{\Dtrain}{{\gD}_\textup{train}}
\newcommand{\Dtest}{{\gD}_\textup{test}}
\newcommand{\major}{\textup{maj}}
\newcommand{\minor}{\textup{min}}
\newcommand{\whole}{\textup{all}}
\newcommand{\geomskew}{\kappa}
\newcommand{\spscale}{\gB}
\newif\ifcomments
\newcommand\behnam[1]{{\scriptsize  [\textcolor{blue}{Behnam: {#1}}]}}
\newcommand\vaishnavh[1]{{\scriptsize [\textcolor{red}{Vaishnavh: {#1}}]}}
\newcommand\vaish[1]{{\scriptsize [\textcolor{red}{Vaishnavh: {#1}}]}}
\newcommand\anders[1]{{\scriptsize  [\textcolor{ForestGreen}{Anders: {#1}}]}}
\newcommand\behnam[1]{}
\newcommand\vaishnavh[1]{}
\newcommand\vaish[1]{}
\newcommand\anders[1]{}
\newcommand\insertdataset[7]{
    \begin{minipage}[t]{\textwidth}
        \centering
            \begin{minipage}[t]{0.5\textwidth}
            \ifthenelse{\isempty{#1}}{}{\subcaption*{#1}}
                    \begin{minipage}[t]{0.5\textwidth} \adjincludegraphics[width=0.8\textwidth,padding=0.05in 0 0 0]{datasets/#3}
                    \end{minipage}%
                    \begin{minipage}[t]{0.5\textwidth} \adjincludegraphics[width=0.8\textwidth,padding=0 0 0.0in 0]{datasets/#4}
                    \end{minipage}
            \end{minipage}%
            \begin{minipage}[t]{0.5\textwidth}
            \ifthenelse{\isempty{#2}}{}{\subcaption*{#2}}
                    \begin{minipage}[t]{0.5\textwidth} \adjincludegraphics[width=0.8\textwidth,padding=0.05in 0 0 0]{datasets/#5}
                    \end{minipage}%
                    \begin{minipage}[t]{0.5\textwidth} \adjincludegraphics[width=0.8\textwidth,padding=0.0in 0 0 0]{datasets/#6}
                    \end{minipage}
            \end{minipage}
            \captionsetup[subfigure]{font=footnotesize,labelfont=footnotesize,skip=1pt}
            \subcaption*{#7}
    \end{minipage}
}
\title{Understanding the failure modes of out-of-distribution generalization}
\author{%
  Vaishnavh Nagarajan\thanks{Work performed in part while Vaishnavh Nagarajan was interning at Blueshift, Alphabet.} \\
  Carnegie Mellon University\\
  \texttt{vaishnavh@cs.cmu.edu} \\
  \And
    Anders Andreassen\\
  Blueshift, Alphabet\\
  \texttt{ajandreassen@google.com } \\
  \And
  Behnam Neyshabur \\
  Blueshift, Alphabet\\
  \texttt{neyshabur@google.com} \\
}
\begin{document}

\maketitle

\begin{abstract} 
Empirical studies suggest that machine learning models often rely on features, such as the background, that may be spuriously correlated with the label only during training time, resulting in poor accuracy during test-time. In this work, we identify the fundamental factors that give rise to this behavior, by explaining why models fail this way {\em even} in easy-to-learn tasks where one would expect these models to succeed. In particular, through a theoretical study of gradient-descent-trained linear classifiers on some easy-to-learn tasks, we uncover two complementary failure modes. These modes arise from how spurious correlations induce two kinds of skews in the data: one {\em geometric} in nature, and another, {\em statistical} in nature. Finally, we construct natural modifications of image classification datasets to understand when these failure modes can arise in practice. We also design experiments to isolate the two failure modes when training modern neural networks on these datasets.\footnote{Code is available at \url{https://github.com/google-research/OOD-failures}}

\end{abstract}

\section{Introduction}
\label{sec:intro}

A machine learning model in the wild (e.g., a self-driving car) must be prepared to make sense of its surroundings in rare conditions 
that may not have been well-represented in its training set. This could range from conditions such as mild glitches in the camera to strange weather conditions.
This out-of-distribution (OoD) generalization problem has been extensively studied within the framework of the domain generalization setting \citep{blanchard11generalizing,muandet13domain}.
Here, the classifier has access to training data sourced from {multiple} ``domains'' or distributions, but no data from test domains. By observing the various kinds of shifts exhibited by the training domains, we want the classifier to learn to be robust to such shifts.

The simplest approach to domain generalization is based on the Empirical Risk Minimization (ERM) principle \citep{vapnik98statistical}:
pool the data from all the training domains (ignoring the ``domain label'' on each point) and train a classifier by gradient descent to minimize the average loss on this pooled dataset. 
Alternatively, many recent studies \citep{ganin16domain,arjovsky19invariant,sagawa20distributionally} have focused on designing more sophisticated algorithms that do utilize the domain label on the datapoints e.g., by enforcing certain representational invariances across domains.

\begin{figure}[t!]
\centering
    \begin{minipage}[t]{\textwidth}
        \centering
        \begin{minipage}[t]{0.23\textwidth}
                \adjincludegraphics[width=0.9\textwidth,padding=0.0in 0in 0 0]{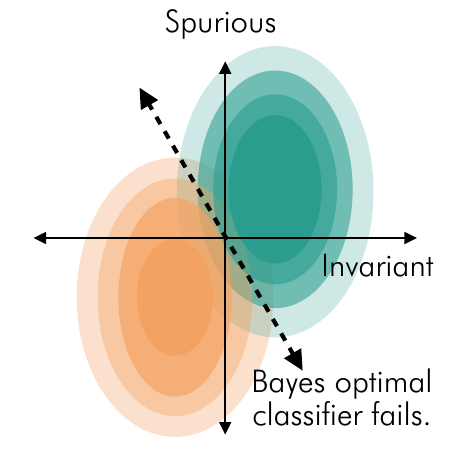}
                \subcaption{Earlier work:  Partially predictive invariant feature} 
                \label{fig:noisy-toy}
        \end{minipage}%
        \begin{minipage}[t]{0.23\textwidth}
                \adjincludegraphics[width=0.9\textwidth,padding=0.0in 0 0 0]{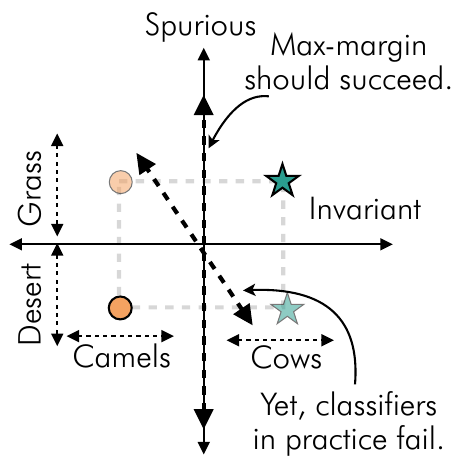}
                \subcaption{\textbf{This work:} 
                \textit{Fully} predictive \\ invariant feature }
                                
                \label{fig:four-points}
        \end{minipage}%
        \begin{minipage}[t]{0.25\textwidth} \adjincludegraphics[width=0.95\textwidth,padding=0.0in 0 0 0]{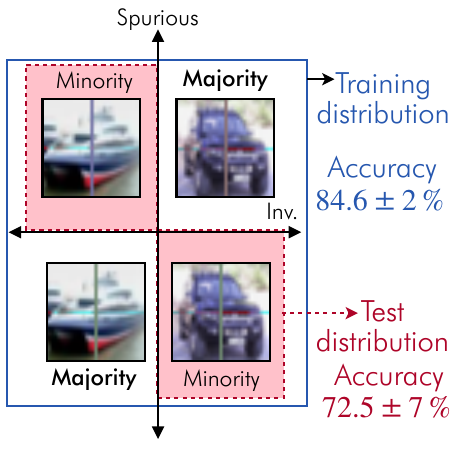}
                \subcaption{ResNet on CIFAR10
                with spuriously colored lines}
                \label{fig:cifar_sp}
        \end{minipage}%
        \begin{minipage}[t]{0.25\textwidth} \adjincludegraphics[width=0.95\textwidth,padding=0.0in 0 0 0]{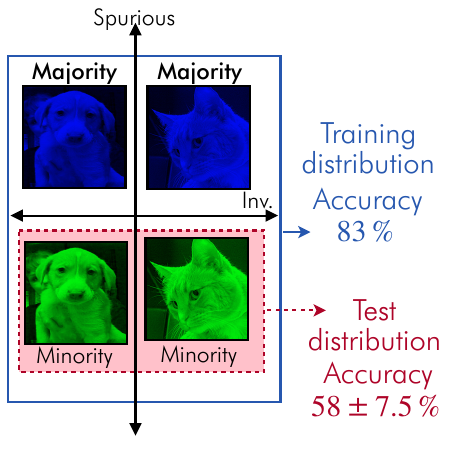}
                        \subcaption{OoD accuracy drop \\
                        despite no color-label correlation}
                        \label{fig:color-shift}
        \end{minipage}%
    \end{minipage}
    \hfill 
    \caption{\textbf{Unexplained OoD failure}: Existing theory can explain why classifiers rely on the spurious feature when the invariant feature is in itself not informative enough (\textbf{Fig~\ref{fig:noisy-toy}}).
    But when invariant features are fully predictive of the label, these explanations fall apart. E.g.,
    in the four-point-dataset of \textbf{Fig~\ref{fig:four-points}}, one would expect the max-margin classifier to easily ignore spurious correlations (also see Sec~\ref{sec:framework}). Yet, why do classifiers (including the max-margin) rely on the spurious feature, in so many real-world settings where the shapes are perfectly informative of the object label (e.g., \textbf{Fig~\ref{fig:cifar_sp}})?  We identify two fundamental factors behind this behavior.  In doing so, we also identify and explain other kinds of vulnerabilities such as the one in \textbf{Fig~\ref{fig:color-shift}} (see Sec~\ref{sec:geometric}).}
        \label{fig:banner}
\end{figure}

A basic premise behind pursuing such sophisticated techniques,
as emphasized by \cite{arjovsky19invariant}, is the empirical observation that ERM-based gradient-descent-training (or for convenience, just ERM) fails in a characteristic way. 
As a standard illustration, consider a cow-camel classification task \citep{beery18recognition} where the background happens to be spuriously correlated with the label in a particular manner only during training --- say, most cows are found against a grassy background and most camels against a sandy one. Then, during test-time, if the correlation is completely flipped (i.e., \textit{all} cows in deserts, and \textit{all} camels in meadows), one would observe that the accuracy of ERM drops drastically. Evidently, ERM, in its unrestrained attempt at fitting the data, indiscriminately relies on all kinds of informative features, including unreliable {\em spurious features} like the background. However, an algorithm that carefully uses domain label information can hope to identify and rely purely on {\em invariant features} (or ``core'' features  \citep{sagawa20investigation}). 
 
While the above narrative is an oft-stated motivation behind developing sophisticated OoD generalization algorithms, there is little formal explanation as to {\em why} ERM fails in this characteristic way. Existing works \citep{sagawa20investigation,tsipras19robustness,arjovsky19invariant,shah20pitfalls} provide  valuable answers to this question through concrete theoretical examples; however, their examples critically rely on certain factors to make the task difficult enough for ERM to rely on the spurious features. For instance, many of these examples have invariant features that are only partially predictive of the label (see Fig~\ref{fig:noisy-toy}).  Surprisingly though, ERM relies on spurious features even in much easier-to-learn tasks where these complicating factors are absent --- such as in tasks with fully predictive invariant features e.g., Fig~\ref{fig:cifar_sp} or the Waterbirds/CelebA examples in \citet{sagawa20distributionally} or for that matter, in any real-world situation where the object shape perfectly determines the label. This failure in easy-to-learn tasks, as we argue later, is not straightforward to explain (see Fig~\ref{fig:four-points} for brief idea).
This evidently implies that there must exist factors more general and fundamental than those known so far, that cause ERM to fail.

Our goal in this work is to uncover these fundamental factors behind the failure of ERM. The hope is that this will provide a vital foundation for future work to reason about OoD generalization. Indeed, recent empirical work \citep{gulrajani20search} has questioned whether existing alternatives necessarily outperform ERM on OoD tasks; however, due to a lack of theory, it is not clear how to hypothesize about when/why one algorithm would outperform another here.
 Through our theoretical study, future work can hope to be better positioned to precisely identify the key missing components in these algorithms, and bridge these gaps to better solve the OoD generalization problem.

\textbf{Our contributions.} To identify the most fundamental factors causing OoD failure, our strategy is to (a) study tasks that are ``easy'' to succeed at, and (b) to demonstrate that ERM relies on spurious features {\em despite} how easy the tasks are. More concretely:
\vspace{-6pt}
\begin{enumerate}[leftmargin=0.15in]
  \setlength\itemsep{0.00in}
    \item We formulate a set of constraints on how our tasks must be designed so that they are easy to succeed at (e.g., the invariant feature must be fully predictive of the label). Notably, this class of {\em easy-to-learn tasks}  
    provides both a theoretical test-bed for reasoning about OoD generalization and also a simplified empirical test-bed. In particular, this class encompasses
    simplified MNIST and CIFAR10-based classification tasks where we establish empirical failure of ERM.
    \item  We identify two {complementary} mechanisms of failure of ERM that arise from how spurious correlations induce two kinds of skews in the data: one  that is {\em geometric} and the other {\em statistical}. In particular, we theoretically isolate these failure modes by studying
 linear classifiers trained by gradient descent (on logistic/exponential loss) and its infinite-time-trained equivalent, the max-margin classifier \citep{soudry18implicit,ji18risk} on the easy-to-learn-tasks.
\item We also show that in any easy-to-learn task that does not have these geometric or statistical skews, these 
models do not rely on the spurious features. This suggests that these skews are not only a sufficient but also a necessary factor for failure of these models in easy-to-learn tasks. 
\item To empirically demonstrate the generality of our theoretical insights, we
(a) experimentally validate these skews in a range of MNIST and CIFAR10-based tasks and (b) demonstrate their effects on fully-connected networks (FNNs) and ResNets. We also identify and explain failure in scenarios where standard notions of spurious correlations do not apply (see Fig~\ref{fig:color-shift}). We perform similar experiments on a non-image classification task in App~\ref{sec:clinical}.
\end{enumerate}

\section{Related Work} 
\label{sec:related}

\textbf{Spurious correlations.} Empirical work has shown that deep networks find superficial ways to predict the label, such as by relying on the background \citep{beery18recognition,ribeiro16why} or other kinds of shortcuts \citep{mccoy19right,geirhos20shortcut}. Such behavior is of practical concern because accuracy can deteriorate under shifts in those features \citep{rosenfeld18elephant,hendrycks19benchmarking}. It can also lead to unfair biases and poor performance on minority groups
 \citep{dixon18measure,zhao17men,sagawa20investigation}.

\textbf{Understanding failure of ERM.} 
While the fact that ERM relies on spurious correlations has become  empirical folk wisdom, only a few studies have made efforts to carefully model this. Broadly, there are two kinds of existing models that explain this phenomenon. One existing model is to imagine that both the invariant and the spurious features are only partially predictive of the label \citep{tsipras19robustness,sagawa20investigation,arjovsky19invariant,ilyas19adversarial,khani20feature}, as a result of which the classifier that maximizes accuracy cannot ignore the spurious feature (see Fig~\ref{fig:noisy-toy}). 
The other existing model is based on the ``simplicity bias'' of gradient-descent based deep network training \citep{rahaman18spectral,neyshabur14search,kalimeris19sgd,arpit17closer,qin19training,combes18learning}. In particular, this model typically assumes that both the invariant and spurious features are fully predictive of the label, but crucially posits that the spurious features are simpler to learn (e.g., more linear) than the invariant features, and therefore gradient descent prefers to use them \citep{shah20pitfalls,nar60ce,hermann20what}.

While both these models offer simple-to-understand and useful explanations for why classifiers may use spurious correlations, we provide a more fundamental explanation. 
In particular, we empirically and theoretically demonstrate how ERM can rely on the spurious feature even in much easier tasks where these explanations would fall apart: these are tasks where unlike in the first model (a) the invariant feature is fully predictive \textit{and} unlike in the second model, (b) the invariant feature corresponds to a simple linear boundary \textit{and} (c) the spurious feature is not fully predictive of the label.
Further, we go beyond the max-margin settings analyzed in these works to analyze the dynamics of finite-time gradient-descent trained classifier on logistic loss. 
We would also like to point the reader to concurrent work of \cite{khani21removing} that has proposed a different model addressing the above points. While their model sheds insight into the role of overparameterization in the context of spurious features (and our results are agnostic to that), their model also requires the spurious feature to be ``dependent'' on the invariant feature, an assumption we don't require (see Sec~\ref{sec:framework}).

\textbf{Algorithms for OoD generalization.} Due to the empirical shortcomings of ERM, a wide range of sophisticated algorithms have been developed for domain generalization.
The most popular strategy is to learn useful features while constraining them to have similar distributions across domains \citep{ganin16domain,li18deep,albuquerque2020generalizing}. Other works constrain these features in a way that one can learn a classifier that is simultaneously optimal across all domains \citep{peters16causal,arjovsky19invariant,krueger20ood}. 
As discussed in \cite{gulrajani20search}, there are also many other existing non-ERM based methods, including that of meta-learning \citep{li18learning}, parameter-sharing \citep{sagawa20distributionally} and data augmentation \citep{zhang18mixup}. Through their extensive empirical survey of many of the above algorithms, \cite{gulrajani20search} suggest that ERM may be just as competitive as the state-of-the-art. But we must emphasize that this doesn't vindicate ERM of its failures but rather indicates that we may be yet to develop a substantial improvement over ERM.

\section{Easy-to-learn domain generalization tasks} 
\label{sec:framework}

Below, we first set up the basic domain generalization setting and the idea of ERM. Then, in Section~\ref{sec:easy-to-learn}, we formulate a class of domain-generalization tasks that are in many aspects ``easy'' for the learner (such as fully informative invariant features) -- what exactly makes a task ``easy''  will be discussed in Section~\ref{sec:easy-to-learn}).
 This discussion sets the ground for the later sections to show how ERM can fail even in these easy tasks, which will help uncover the  fundamental factors behind its failure.

\textbf{Notations.} 
Consider an input (vector) space $\mathcal{X}$ and label space $\mathcal{Y} \in \{ -1, 1\}$.
For any distribution $\gD$ over $\mathcal{X} \times \mathcal{Y}$, let $p_{\gD}(\cdot)$ denote its probability density function (PDF).  Let $\mathbb{H}$ denote a class of classifiers $h: \mathcal{X} \to \mathbb{R}$. 
Let the error of $h$ on $\gD$ be denoted as $L_{\gD}(h) := \mathbb{E}_{(\rvx,y) \sim \gD}[h(\rvx) \cdot y < 0]$. 

\textbf{The domain generalization setting and ERM.}
In the domain generalization setting, one considers an underlying class $\Dclass$ of data distributions over $\mathcal{X} \times \mathcal{Y}$ corresponding to different possible domains. 
The learner is given training data collected from multiple distributions from $\Dclass$.
For an ERM-based learner in particular, the training data will be pooled together,
so we can model the data as coming from a single (pooled) distribution $\Dtrain$,
which for simplicity, can be assumed to belong to $\Dclass$.
Given this data, the learner outputs a hypothesis $\hat{h} \in \mathbb{H}$ that is tested on a new 
distribution $\Dtest$ picked from $\Dclass$.
This can be potentially modeled by 
assuming that
all test
and training distributions are drawn from a common hyper-distribution over $\Dclass$. However, this assumption becomes pointless
in most practical settings where the training domains are not more than three to four in number (e.g., PACS \citep{asadi19towards}, VLCS \citep{fang13unbiased}), and therefore hardly representative of any hyper-distribution. Here, the problem becomes as hard as 
ensuring good performance
on a worst-case test-distribution without any hyper-distribution assumption; this boils down to minimizing $\max_{\gD \in \Dclass} L_{\gD}(\hat{h})$. Indeed, most works have studied the worst-case setting, both theoretically \citep{sagawa20investigation} and empirically \citep{arjovsky19invariant,sagawa20distributionally}. 

Similarly, for this work, we focus on the worst-case setting and define the optimal target function $h^\star$ to be $h^\star = \argmin_{h \in \mathbb{H}} \max_{\gD \in \Dclass} L_{\gD}(h)$. 
Then, we define the features that this ``robust'' classifier uses as invariant features $\mathcal{X}_{\inv{}}$ (e.g., the shape of the object), and the rest as spurious features $\mathcal{X}_{\sp{}}$ (e.g., the background). To formalize this, we assume that there exists a mapping $\Phi: \mathcal{X}_{\inv{}} \times \mathcal{X}_{\sp} \to \mathcal{X}$ such that
each $\gD \in \mathbb{D}$ is induced by a distribution over $\mathcal{X}_{\inv{}} \times \mathcal{X}_{\sp{}}$ (so we can denote any $\rvx$ as $\Phi(\rvx_{\inv}, \rvx_{\sp})$). With an abuse of notation we will use $p_{\gD}(\cdot)$ to also denote the PDF of the distribution over $\mathcal{X}_{\inv{}}\times \mathcal{X}_{\sp}$.
Then, the fact that $\mathcal{X}_{\sp{}}$ are features that $h^\star$ does not rely on, is mathematically stated as: $\forall \rvx_{\inv{}}$ and $\forall \, \rvx_{\sp{}} \neq \rvx_{\sp{}}', \; h^\star(\Phi(\rvx_{\inv{}}, \rvx_{\sp{}})) = h^\star(\Phi(\rvx_{\inv{}}, \rvx_{\sp{}}'))$. 
Finally, we note that, to make this learning problem tractable,
one has to impose further restrictions; we'll provide more details on those when we discuss the class of easy-to-learn domain generalization tasks in Sec~\ref{sec:easy-to-learn}.

\textbf{Empirical failure of ERM.} 
To guide us in constructing the easy-to-learn tasks, let us ground our study in a concrete empirical setup where an ERM-based linear classifier shows OoD failure.
Specifically, consider the following Binary-MNIST based task, where the first five digits and the remaining five digits form the two classes. First, we let $\Phi$ be the identity mapping, and so $\rvx = (\rvx_{\inv{}},\rvx_{\sp{}})$.
Then, we let $\rvx_{\inv{}}$ be a random ReLU features representation of the MNIST digit i.e., if $\rvx_{\text{raw}}$ represents the MNIST image, then $\rvx_{\inv{}} = \text{ReLU}(W\rvx_{\text{raw}})$ where $W$ is a matrix with Gaussian entries. 
We make this representation sufficiently high-dimensional so that the data becomes linearly separable. Next, we let the spurious feature take values in $\{+\spscale, -\spscale \}$ for some $\spscale > 0$, imitating the two possible background colors in the camel-cow dataset. Finally, on $\Dtrain$, for any $y$, we pick the image $\rvx_{\inv{}}$ from the corresponding class and independently set the ``background color'' ${x}_{\sp{}}$ so that there is some spurious correlation i.e., $\Pr_{\Dtrain}[ {x}_{\sp{}} \cdot y > 0] > 0.5$. During test time however, we flip this correlation around so that  $\Pr_{\Dtest}[ {x}_{\sp{}} \cdot y > 0] = 0.0$. 
In this task, we observe in Fig~\ref{fig:rf_mnist_accuracy} (shown later under Sec~\ref{sec:geometric}) that as we vary the train-time spurious correlation from none ($\Pr_{\Dtrain}[ {x}_{\sp{}} \cdot y > 0] =0.5$) to its maximum ($\Pr_{\Dtrain}[ {x}_{\sp{}} \cdot y > 0] = 1.0$), the OoD accuracy of a max-margin classifier progressively deteriorates.  (We present similar results for a CIFAR10 setting, and all experiment details in App~\ref{sec:rf_appendix}.) 
Our goal is now to theoretically demonstrate why ERM fails this way (or equivalently, why it relies on the spurious feature) even in tasks as ``easy-to-learn'' as these.

\vspace{2pt}
\subsection{Constraints defining easy-to-learn domain-generalization tasks.} 
\label{sec:easy-to-learn}
To formulate
{\em a class of easy-to-learn tasks}, we enumerate a set of constraints that the tasks must satisfy; notably, this class of tasks will encompass the empirical example described above.
The motivation behind this exercise is that restricting ourselves to the constrained set of tasks yields stronger insights --- it prevents us from designing complex examples where ERM is forced to rely on spurious features due to a not-so-fundamental factor. Indeed, each constraint here forbids a unique, less fundamental failure mode of ERM from occuring in the easy-to-learn tasks.

\begin{constraint}
\label{con:no-noise}
\textbf{(Fully predictive invariant features.)}  For all $\gD \in \Dclass$,  $L_{\gD}(h^\star) = 0$.
\end{constraint} 
\vspace{-10pt}
Arguably, our most important constraint is that the invariant features (which is what $h^\star$ purely relies on) are {\em perfectly informative} of the label. The motivation is that, when this is not the case (e.g., as in noisy invariant features like Fig~\ref{fig:noisy-toy} or in \citet{sagawa20investigation,tsipras19robustness}), 
failure can arise from the fact that 
the spurious features provide vital extra information 
that the invariant features cannot provide (see App~\ref{sec:constraints-and-failures} for more formal argument).
However this explanation quickly falls apart when the invariant feature  in itself is fully predictive of the label. 

\vspace{4pt}
 
\begin{constraint}
\label{con:no-invariant-shift}
\textbf{(Identical invariant distribution.)} Across all $\gD \in \Dclass$, $ p_{\gD}(\rvx_{\inv{}})$ is identical. 
\end{constraint} 
\vspace{-10pt}

This constraint demands that the (marginal) invariant feature distribution must remain stable across domains (like in our binary-MNIST example). While this may appear to be unrealistic,
(the exact distribution of the different types of cows and camels could vary across domains), we must emphasize that it is easier to make ERM fail when the invariant features are not stable (see example in App~\ref{sec:constraints-and-failures} Fig~\ref{fig:invariant-shift-failure}).

\vspace{4pt}
\begin{constraint}
\label{con:cond-indep}
\textbf{(Conditional independence.)} For all $\gD \in \Dclass$, $\rvx_{\sp{}} \indep \rvx_{\inv{}} | y$.
\end{constraint}
\vspace{-8pt}
This constraint 
reflects the fact that in the MNIST example, we chose the class label and then picked the color feature independent of the actual hand-written digit picked from that class. This prevents us from designing complex relationships between the background and the object shape to show failure (see example in App~\ref{sec:constraints-and-failures} Fig~\ref{fig:cond-dependence-failure} or the failure mode in \cite{khani21removing}).

\vspace{2pt}
\begin{constraint}
\label{con:discrete}
\textbf{(Two-valued spurious features.)} We set $\mathcal{X}_{\sp{}} = \mathbb{R}$ and the support of ${x}_{\sp{}}$ in $\Dtrain$ is $\{ -\spscale, +\spscale \}$.
\end{constraint}
\vspace{-10pt}
This constraint\footnote{Note that the discrete-value restriction holds only during training. This is so that, in our experiments,  we can study two kinds of  test-time shifts, one within the support $\{ -\spscale, +\spscale \}$ and one outside of it (the vulnerability to both of which boils down to the level of reliance on $x_{\sp{}}$).}
 captures the simplicity of the cow-camel example where the background color is limited to yellow/green. Notably, this excludes failure borne out of high-dimensional \citep{tsipras19robustness} or carefully-constructed continuous-valued \citep{sagawa20investigation,shah20pitfalls} spurious features.

\vspace{2pt}
 \begin{constraint} \textbf{(Identity mapping.)}
 \label{con:linear-mapping}
$\Phi$ is the identity mapping i.e., $\rvx = (\rvx_{\inv{}}, \rvx_{\sp{}})$.
 \end{constraint}
 \vspace{-10pt}
This final constraint\footnote{The rest of our discussion would hold even if $\Phi$ corresponds to any orthogonal transformation of $(\rvx_{\inv{}},\rvx_{\sp{}})$, since the algorithms we study are rotation-invariant. But for simplicity, we'll work with the identity mapping.}, also implicitly made in \cite{sagawa20investigation,tsipras19robustness}, prevents ERM from failing because of a hard-to-disentangle representation  (see example in App~\ref{sec:constraints-and-failures} Fig~\ref{fig:nonorthogonal-failure}).

Before we connect these constraints to our main goal, it is worth mentioning their value beyond that goal. First, as briefly discussed above and elaborated in App~\ref{sec:constraints-and-failures}, each of these constraints in itself corresponds to a {unique} failure mode of ERM, one that is worth exploring in future work.  Second, the resulting class of easy-to-learn tasks provides a theoretical (and a simplified empirical) test-bed that would help in broadly reasoning about OoD generalization. For example, any algorithm for solving OoD generalization should at the least hope to solve these easy-to-learn tasks well.  

\textbf{Why is it hard to show that ERM relies on the spurious feature in easy-to-learn tasks?} 
Consider the simplest easy-to-learn 2D task. Specifically, during training we set ${x}_{\inv{}} = y$ (and so Constraint~\ref{con:no-noise} is satisfied) and ${x}_{\sp{}}$ to be $y\spscale$ with probability $p \in [0.5,1)$ and $-y\spscale$ with probability $1-p$ (hence satisfying both Constraint~\ref{con:cond-indep} and ~\ref{con:discrete}). During test-time, the only shifts allowed are on the distribution of ${x}_{\sp{}}$ (to respect  Constraint~\ref{con:no-invariant-shift}). 
Observe from Fig~\ref{fig:four-points} that this distribution has a support of the four points in $\{-1, +1 \} \times \{-\spscale, +\spscale\}$ and is hence an abstract form of the cow-camel dataset which also has four groups of points: (a majority of) cows/camels against grass/sand  and (a minority of) cows/camels against sand/grass. 
Fitting
a max-margin classifier on $\Dtrain$
 leads us to a simple yet key observation: {\em owing to the geometry of the four groups of points, the max-margin classifier has no reliance on the spurious feature despite the spurious correlation}. In other words, even though this dataset distills what seem to be the core aspects of the cow/camel dataset, we are unable to reproduce
 the corresponding behavior of ERM.
In the next two sections, we will try to resolve this apparent paradox.

\section{Failure due to geometric skews}
\label{sec:geometric}

\begin{figure}[t!]
\centering
    \begin{minipage}[t]{\textwidth}
        \centering
        \begin{minipage}[t]{0.25\textwidth}
                \adjincludegraphics[width=\textwidth,padding=0.0in 0in 0 0]{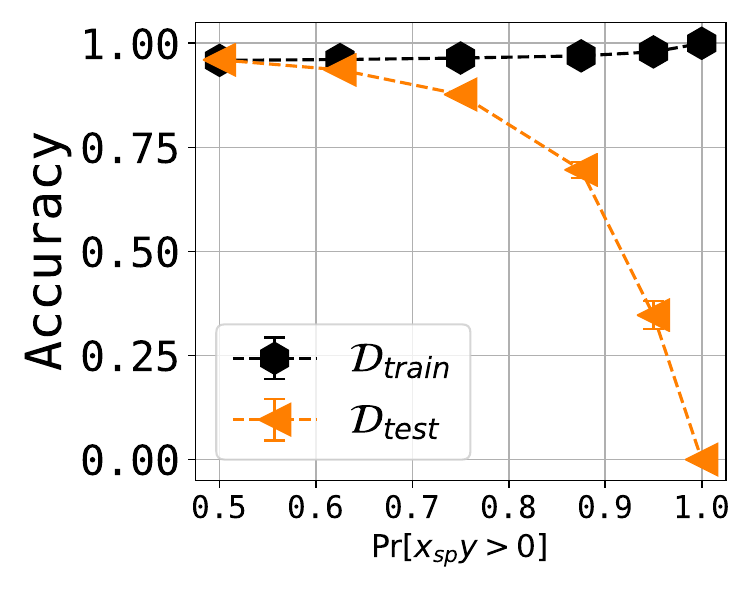}
                \subcaption{OOD failure in Sec~\ref{sec:framework} Binary-MNIST example} 
                \label{fig:rf_mnist_accuracy}
        \end{minipage}%
        \begin{minipage}[t]{0.25\textwidth}
                \adjincludegraphics[width=\textwidth,padding=0.0in 0 0 0]{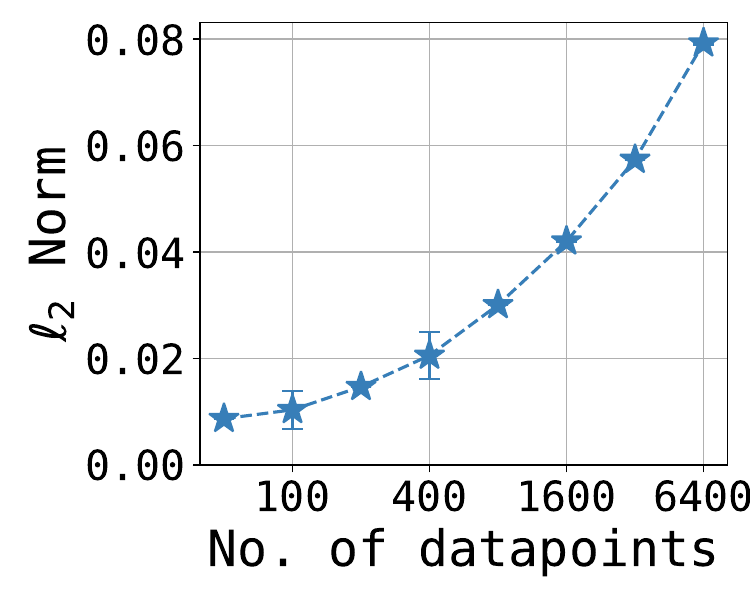}
                \subcaption{$\ell_2$ norm of max-margin}
                \label{fig:rf_mnist_norms}
        \end{minipage}%
        \begin{minipage}[t]{0.22\textwidth}
                \adjincludegraphics[width=\textwidth,padding=0.0in 0 0 0,margin=0in 0.0in 0 0]{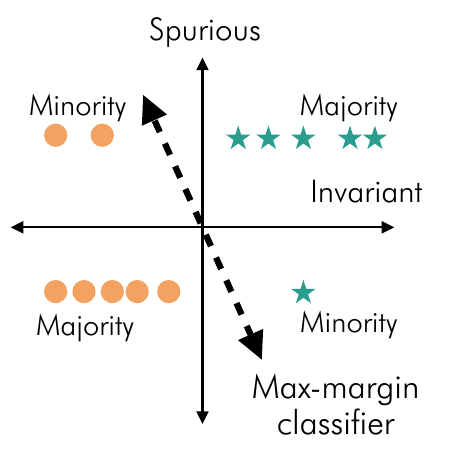}
                \subcaption{Geometric failure mechanism}
                \label{fig:geometric-skews-illustration}
        \end{minipage}%
        \begin{minipage}[t]{0.28\textwidth} \adjincludegraphics[width=\textwidth,padding=0.1in 0 0 0]{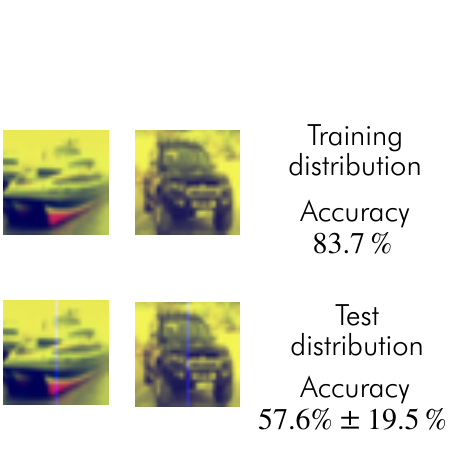}
                        \subcaption{CIFAR10 example with a spurious line}
                        \label{fig:cifar_line}
        \end{minipage}
    \end{minipage}
    \hfill 
    \caption{\textbf{Geometric skews:} \textbf{Fig~\ref{fig:rf_mnist_accuracy}} demonstrates failure of the max-margin classifier in the easy-to-learn Binary-MNIST task of Sec~\ref{sec:framework}. \textbf{Fig~\ref{fig:rf_mnist_norms}} forms the basis for our theory in Sec~\ref{sec:geometric} by showing that the random-features-based max-margin has norms increasing with the size of the dataset. In \textbf{Fig~\ref{fig:geometric-skews-illustration}}, we visualize the failure mechanism arising from the ``geometric skew'': with respect to the $\rvx_{\inv}$-based classifier, the closest minority point is farther away than the closest majority point, which crucially tilts the max-margin classifier towards $w_{\sp} > 0$. \textbf{Fig~\ref{fig:cifar_line}}, as further discussed in Sec~\ref{sec:geometric}, stands as an example for how a wide range of OoD failures can be explained geometrically.
    }
        \label{fig:geometric}
\end{figure}

A pivotal piece in solving this puzzle is a particular geometry underlying how the invariant features in real-world distributions are separated. 
To describe this, consider the random features representation of MNIST (i.e., $\rvx_{\inv{}}$) and fit a max-margin classifier on it, i.e., the least-norm $\rvw_{\inv{}}$ that achieves a margin of 
$y\cdot (\rvw_{\inv{}}\cdot \rvx_{\inv{}} + b) \geq 1$ on all data (for some $b$).
Then, we'd observe that {\em as the number of training points increase, the $\ell_2$ norm of this max-margin classifier grows} (see Figure~\ref{fig:rf_mnist_norms});
similar observations also hold for CIFAR10 (see App~\ref{sec:increasing-norms}). 
This observation (which stems from the geometry of the dataset) builds on ones that were originally made in \citet{neyshabur17exploring,nagarajan17generalization,nagarajan19uniform} for norms of overparameterized neural networks. Our contribution here is to empirically establish this for a linear overparameterized model and then to theoretically relate this to OoD failure. 
In the following discussion, we will take this ``increasing norms'' observation as a given\footnote{To clarify, we take the ``increasing norms'' observation as a given in that we don't provide an explanation for why it holds. Intuitively, we suspect that norms increase because as we see more datapoints, we also sample rarer/harder training datapoints. We hope future work can understand this explain better.}), and use it to explain why the max-margin classifier trained on all the features (including $x_{\sp{}}$) relies on the spurious feature.

Imagine every input $\rvx_{\inv{}}$ to be concatenated with a feature $x_{\sp{}} \in \{-\spscale, \spscale \}$ that is spuriously correlated with the label i.e., $\Pr[x_{\sp} \cdot y > 0] > 0.5$. 
The underlying spurious correlation implicitly induces two disjoint groups in the dataset $S$:
a {\em majority group} $S_{\major}$  where $x_{\sp} \cdot y > 0$ e.g., cows/camels with green/yellow backgrounds, and a {\em minority group} $S_{\minor}$ where $x_{\sp} \cdot y < 0$ e.g., cows/camels with yellow/green backgrounds. Next, let $\rvw_{\whole} \in \mathcal{X}_{\inv{}}$ denote the least-norm vector that (a) lies in the invariant space and (b) classifies all of $S$ by a margin of at least $1$. Similarly, let $\rvw_{\minor} \in \mathcal{X}_{\inv{}}$ denote a least-norm, purely-invariant vector that classifies all of $S_{\minor}$ by a margin of at least $1$. Crucially, since $S_{\minor}$ has much fewer points than $S$, by the ``increasing-norm'' property, we can say that $\|\rvw_{\minor}\|  \ll \| \rvw_{\whole}\|$. We informally refer to the gap in these $\ell_2$ norms as a geometric skew. 

Given this skew, we explain why the max-margin classifier must use the spurious feature. One way to classify the data is to use only the invariant feature, which would cost an $\ell_2$ norm of $\| \rvw_{\whole}\|$. But there is another alternative: use the spurious feature as a short-cut to classify the majority of the dataset $S_{\major}$ (by setting $w_{\sp{}} > 0$) and combine it with $\rvw_{\minor}$ to classify the remaining minority $S_{\minor}$.
Since $\|\rvw_{\minor}\|  \ll \| \rvw_{\whole}\|$, the latter strategy requires lesser $\ell_2$ norm, and is therefore the strategy opted by the max-margin classifier. 
We illustrate this failure mechanism in a 2D dataset in Fig~\ref{fig:geometric-skews-illustration}. Here, we have explicitly designed the data to capture the ``increasing norms'' property: the distance between purely-invariant classifier boundary (i.e., a vertical separator through the origin) and the closest point in $S_{\major{}}$ is smaller than
that of the closest point in $S_{\minor{}}$. In other words, a purely-invariant classifier would require much greater norm to classify the majority group by a margin of $1$ than to classify the minority group. We can then visually see that the max-margin classifier would take the orientation of a diagonal separator that uses the spurious feature rather than a vertical, purely-invariant one. 

The following result formalizes the above failure. In particular, for any arbitrary easy-to-learn task, we provide lower and upper bounds on $w_{\sp{}}$ that are larger for smaller values of $\|\rvw_{\minor}\|/ \| \rvw_{\whole}\|$. 
For readability, we state only an informal, version of our theorem below. In App~\ref{sec:geometric-proof}, we present the full, precise result along with the proof.

\begin{theorem}
\label{thm:geometric} \textbf{(informal)}
Let $\mathbb{H}$ be the set of linear classifiers, $h(x) = \rvw_{\inv} \rvx_{\inv} + w_{\sp{}} x_{\sp{}} + b$.
Then for any task satisfying all the constraints in Sec~\ref{sec:easy-to-learn} with $\spscale=1$,
the max-margin classifier satisfies:
\begin{align}
1-2 \sqrt{
\nicefrac{\|\rvw_{\minor}\|}{\|\rvw_{\whole}\|}
}
\leq w_{\sp} \leq \frac{1}{\nicefrac{\|\rvw_{\minor}\|}{\|\rvw_{\whole}\|}} -1.
\end{align}
\end{theorem}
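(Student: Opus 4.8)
The plan is to reduce the problem to a one‑dimensional optimization over the scalar $w_{\sp}$. Since $\spscale=1$, Constraint~\ref{con:discrete} forces $x_{\sp}\in\{-1,+1\}$ on $\Dtrain$, hence $y\,x_{\sp}=+1$ on the majority group $S_{\major}$ and $y\,x_{\sp}=-1$ on the minority group $S_{\minor}$. Writing the max‑margin classifier as the minimizer of $\|\rvw_{\inv}\|^2+w_{\sp}^2$ subject to a unit functional margin on $\Dtrain$ and substituting these values of $x_{\sp}$, the label constraints decouple: once $w_{\sp}=t$ is fixed, it remains to choose the least‑norm purely‑invariant $\rvw_{\inv}$ (with a bias) whose invariant‑margin $y(\rvw_{\inv}\cdot\rvx_{\inv}+b)$ is $\geq 1-t$ on $S_{\major}$ and $\geq 1+t$ on $S_{\minor}$. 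Call this optimal norm $\rho(t)$. Then the max‑margin classifier has spurious weight $w_{\sp}^\star=\argmin_t[\rho(t)^2+t^2]$, with objective value $\rho(w_{\sp}^\star)^2+(w_{\sp}^\star)^2$; write $\gamma:=\|\rvw_{\minor}\|/\|\rvw_{\whole}\|\in(0,1]$ and $W:=\|\rvw_{\whole}\|$. Everything reduces to bounding $\rho$ above and below and analyzing this scalar problem.

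For the upper bound on $w_{\sp}$, I would note that at the optimum $\rvw_{\inv}^\star$ has norm $\rho(w_{\sp}^\star)$ and achieves invariant‑margin $\geq 1+w_{\sp}^\star$ on $S_{\minor}$; rescaling by $1+w_{\sp}^\star$ (assuming $w_{\sp}^\star>-1$, otherwise $w_{\sp}^\star\leq 0\leq 1/\gamma-1$ and there is nothing to prove) and using the minimality of $\rvw_{\minor}$ gives $\rho(w_{\sp}^\star)\geq(1+w_{\sp}^\star)\|\rvw_{\minor}\|$. Since $(\rvw_{\whole},\,w_{\sp}=0)$ is feasible, the max‑margin objective is at most $W^2$, so $\rho(w_{\sp}^\star)\leq W$. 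Combining the two bounds yields $(1+w_{\sp}^\star)\|\rvw_{\minor}\|\leq W$, i.e.\ $w_{\sp}^\star\leq 1/\gamma-1$.

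For the lower bound I would first dispose of degenerate cases ($w_{\sp}^\star\geq 1$ gives $w_{\sp}^\star\geq 1>1-2\sqrt\gamma$; $\gamma\geq 1/4$ makes $1-2\sqrt\gamma\leq 0$; and $w_{\sp}^\star\geq 0$ is checked separately, since the positive spurious correlation, i.e.\ the majority group being the larger one, makes any negative spurious weight suboptimal). So assume $w_{\sp}^\star\in[0,1)$ and $\gamma<1/4$. Because $\rvw_{\inv}^\star$ achieves invariant‑margin $\geq 1-w_{\sp}^\star$ on all of $\Dtrain$, minimality of $\rvw_{\whole}$ gives $\rho(w_{\sp}^\star)\geq(1-w_{\sp}^\star)W$, hence the optimal objective is $\geq(1-w_{\sp}^\star)^2W^2+(w_{\sp}^\star)^2$. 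To upper‑bound the same objective I would exhibit the competitor with spurious weight $t\in[0,1)$ and invariant part $(1-t)\rvw_{\whole}+2t\rvw_{\minor}$: on $S_{\minor}$ both summands have invariant‑margin $\geq 1$, so the combination has invariant‑margin $\geq 1+t$ there; on $S_{\major}$ the $\rvw_{\whole}$‑part already gives $\geq 1$. The delicate point --- and the step I expect to be the main obstacle --- is controlling the residual contribution of the correction $2t\rvw_{\minor}$ on $S_{\major}$, since $\rvw_{\minor}$ is tuned only to $S_{\minor}$ and a priori may violate the $S_{\major}$ constraints; this is exactly where the precise (appendix) statement pays a small overhead or imposes a mild hypothesis (e.g.\ taking $\rvw_{\minor}$ least‑norm subject to the extra requirement of nonnegative margin on $S_{\major}$, at a cost absorbed into $\gamma$, or adding a bounded multiple of $\rvw_{\whole}$). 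Granting $\rho(t)\leq(1-t)W+2t\|\rvw_{\minor}\|$, optimality forces $(1-w_{\sp}^\star)^2W^2+(w_{\sp}^\star)^2\leq\min_{t\in[0,1)}\big[((1-t)W+2t\|\rvw_{\minor}\|)^2+t^2\big]$, and completing the square shows the right‑hand side equals $\frac{W^2}{1+(1-2\gamma)^2W^2}$. Dropping the nonnegative term $(w_{\sp}^\star)^2$ on the left and rearranging gives $1-w_{\sp}^\star\leq(1+(1-2\gamma)^2W^2)^{-1/2}$, and a final elementary estimate (using that $W$ is not too small, a condition folded into the precise statement) upgrades this to $1-w_{\sp}^\star\leq 2\sqrt\gamma$. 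The remaining work is standard SVM feasibility/duality bookkeeping and the one‑variable minimization over $t$.
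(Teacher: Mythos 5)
Your overall strategy is the paper's: compare the max-margin solution against (i) the purely invariant classifier $\rvw_{\whole}$ and (ii) a competitor that uses a minority-fitted invariant part together with the spurious coordinate, and convert the resulting norm comparisons into bounds on $w_{\sp}$. Your upper bound is essentially identical to the paper's, and you correctly anticipate the one technical wrinkle the paper handles by redefining $\rvw_{\minor}$ to additionally have nonnegative margin on $S_{\major}$ (the paper's $\tilde{\rvv}(S_{\minor})$). However, your lower bound contains a genuine error.

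The problem is the evaluation of $\min_{t}\big[\big((1-t)W+2t\|\rvw_{\minor}\|\big)^2+t^2\big]$. Completing the square gives the \emph{unconstrained} minimizer $t^\ast=\frac{(1-2\gamma)W^2}{1+(1-2\gamma)^2W^2}$, but your competitor $(1-t)\rvw_{\whole}+2t\rvw_{\minor}$ with spurious weight $t$ is only feasible for $t\in[0,1]$: for $t>1$ the coefficient $1-t$ is negative, so the term $(1-t)\cdot(\text{margin of }\rvw_{\whole})$ can no longer be lower-bounded by $1-t$, and the triangle-inequality bound on the norm also fails. One checks that $t^\ast>1$ exactly when $2\gamma(1-2\gamma)W^2>1$, which is the typical regime (it holds whenever $\|\rvw_{\minor}\|=\gamma W\gtrsim 1$, i.e.\ whenever the minority set itself needs nontrivial norm to separate, as in Fig.~\ref{fig:rf_mnist_norms}). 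The resulting claim $1-w_{\sp}\leq\big(1+(1-2\gamma)^2W^2\big)^{-1/2}$ is false: take a one-dimensional invariant feature with all majority points at $x_{\inv}=0.1y$, $x_{\sp}=y$ and all minority points at $x_{\inv}=y$, $x_{\sp}=-y$. Then $W=10$, $\|\rvw_{\minor}\|=1$, $\gamma=0.1$, and the max-margin solution (both constraints active) is $w_{\inv}=20/11$, $w_{\sp}=9/11\approx 0.818$, whereas your bound would force $w_{\sp}\geq 1-1/\sqrt{65}\approx 0.876$; equivalently, the true objective $481/121\approx 3.98$ exceeds your claimed competitor value $100/65\approx 1.54$. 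The repair is to take the constrained optimum at the endpoint $t=1$, i.e.\ drop $\rvw_{\whole}$ from the competitor entirely; that is precisely the paper's competitor and yields $(1-w_{\sp})^2W^2\leq 4\|\rvw_{\minor}\|^2+1$, i.e.\ $1-w_{\sp}\leq 2\sqrt{\gamma^2+\nicefrac{1}{(4W^2)}}$, which is the bound actually proved. Separately, your dismissal of the sign issue (``positive spurious correlation makes any negative spurious weight suboptimal'') is not a proof; the paper establishes $w_{\sp}\geq 0$ by a second minimality argument against $\rvv(S_{\major})$ and needs a quantitative smallness condition on $\nicefrac{\|\rvw_{\minor}\|}{\|\rvv(S_{\major})\|}$ for it to go through.
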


A salient aspect of this result is that it explains the varying dynamics between the underlying spurious correlation and spurious-feature-reliance in the classifier. First,
as the correlation increases ($Pr_{\Dtrain}[x_{\sp{}}\cdot y > 0] \to 1.0$), the size of the minority group decreases to
 zero. Then, we empirically know that ${\|\rvw_{\minor}\|}/{\|\rvw_{\whole}\|}$ progressively shrinks all the way down to $0$. 
Then, we can invoke the lower bound which implies that $w_{\sp{}}$ grows to $\approx 1$. This implies serious vulnerability to the test-time shifts: any flip in the sign of the spurious feature can reduce the original margin of $\approx 1$ by a value of $2|w_{\sp}x_{\sp}| \approx 2$ (since $\spscale=1$ here) making the margin negative (implying misclassification). On the other hand, when spurious correlations diminish ($Pr_{\Dtrain}[x_{\sp{}}\cdot y > 0] \to 0.5$) , the value of $\|\rvw_{\minor}\|$ grows comparable to $\|\rvw_{\whole}\|$, and our upper bound suggests that the spurious component must shrink towards $\approx 0$, thereby implying robustness to these shifts.

\textbf{Broader empirical implications.} While our theorem explains failure in  linear, easy-to-learn settings, the underlying geometric argument can be used to intuitively understand failure in more general settings i.e., setting where the classifier is non-linear and/or the task is not easy-to-learn. For illustration, we identify a few such unique non-linear tasks involving the failure of a neural network. The first two tasks below can be informally thought of as easy-to-learn tasks\footnote{Note that although technically speaking these tasks do break Constraint~\ref{con:discrete} (as the spurious feature does not take two discrete values) this is not essential to the failure.}:

\begin{itemize}[leftmargin=0.15in]
  \setlength\itemsep{0.00in}
    \item  In Fig~\ref{fig:cifar_sp}, we consider a CIFAR10 task where we add a line to  with its color spuriously correlated with the class only during training. 
    The $\gtrsim 10\%$ OoD accuracy drop of a ResNet here, we argue  (in App~\ref{sec:cifar10-example-1}), arises from the fact that it takes greater norms for the ResNet to fit larger proportions  CIFAR10.
    
    \item In Fig~\ref{fig:color-shift}, we consider a colored Cats vs. Dogs task \citep{elson07asirra}, where a majority of the datapoints are blue-ish and a minority are green-ish. During testing, we color all datapoints to be green-ish.
  Crucially, even though there is no correlation between the label and the color of the images, the OoD accuracy of an ResNet drops by $\gtrsim 20\%$. 
    To explain this, in App.~\ref{sec:cats-vs-dogs}, we identify an ``implicit'', non-visual kind of spurious correlation in this dataset, one between the label and a particular component of the difference between the two channels.
\end{itemize}

Next, we enumerate two not-easy-to-learn tasks. Here, one of the easy-to-learn constraints is disobeyed significantly enough to make the task hard, and this is essential in causing failure.  
In other words, the failure modes here correspond to ones that were outlined in Sec~\ref{sec:easy-to-learn}. Nevertheless, we argue in App~\ref{sec:broader-geometric} that even these modes can be reasoned geometrically as a special case of Theorem~\ref{thm:geometric}. The two not-easy-to-learn tasks are as follows:

\begin{itemize}[leftmargin=0.15in]
  \setlength\itemsep{0.00in}
\item  In Fig~\ref{fig:cifar_line}, we add a line to the last channel of CIFAR10 images regardless of the label, and make the line brighter during testing resembling a camera glitch, which results in 
 a $\gtrsim 27\%$ drop in a ResNet's accuracy. We geometrically argue 
how this failure arises from breaking Constraint~\ref{con:linear-mapping}.

\item In App~\ref{sec:mnist-example-2}, we consider an MNIST setting inspired by \cite{tsipras19robustness},
where failure arises (geometrically) due to high-dimensional spurious features
(breaking Constraint~\ref{con:discrete}).
\end{itemize}

We hope that these examples, described in greater detail in App~\ref{sec:broader-geometric}, provide (a) a broader way to think about how spurious correlations manifest, and (b) how a variety of resulting failure modes can be reasoned geometrically.

\section{Failure due to statistical skews} 
\label{sec:statistical}
Having theoretically studied max-margin classifiers,  let us now turn our attention to studying linear classifiers trained by gradient descent on logistic/exponential loss. Under some conditions, on linearly separable datasets, these classifiers would converge to the max-margin classifier given infinite time \citep{soudry18implicit,ji18risk}. So it is reasonable to say that even these classifiers would suffer from the geometric skews, even if stopped in some finite time.  However, are there any other failure modes that would arise here?

 To answer this, let us dial back to the easiest-to-learn task:  the setting with four points $\{-1, +1 \} \times \{-\spscale, +\spscale\}$, where in the training distribution (say $\gD_{\textup{2-dim}}$), we have  ${x}_{\inv{}} = y$ and ${x}_{\sp{}}$ to be $y\spscale$ with probability $p \in [0.5,1)$ and $-y\spscale$ with probability $1-p$. 
Here, even though the max-margin does not rely on $x_{\sp{}}$ for any level of spurious correlation $p \in [0.5,1)$
--- there are no geometric skews here after all ---
the story is more complicated when we empirically evaluate via gradient descent stopped in finite time $t$.
   Specifically, for various values of $p$ we plot $\nicefrac{w_{\sp{}}}{\sqrt{w_{\inv{}}^2 + w_{\sp{}}^2}}$ vs. $t$ (here, looking at $w_{\sp{}}$ alone does not make sense since the weight norm grows unbounded). 
We observe in Fig~\ref{fig:2d_gd} that
 the spurious component appears to stagnate around a value proportional to $p$, even after sufficiently long training, and even though it is supposed to converge to $0$. Thus, even though max-margin doesn't fail in this dataset, finite-time-stopped gradient descent fails. Why does this happen?
 
  To explain this behavior, a partial clue already exists in \cite{soudry18implicit,ji18risk}: gradient descent can have a frustratingly slow logarithmic rate of convergence to the max-margin i.e,. the ratio $|w_{\sp{}}/w_{\inv{}}|$ could decay to zero as slow as $1/\ln t$ . However, this bound is a distribution-independent one that does not explain why the convergence varies with the spurious correlation.
  To this end, we build on this result to derive a distribution-specific convergence bound in terms of $p$, that applies to any easy-to-learn task (where $\rvx_{\inv{}}$ may be higher dimensional unlike in $\gD_{\text{2-dim}}$).
  For convenience, we focus on continuous-time gradient descent under the exponential loss $\exp(-yh(\rvx))$ (the dynamics of which is similar to that of logistic loss as noted in \citet{soudry18implicit}). 
 Then we consider any easy-to-learn task and informally speaking, any corresponding dataset without geometric skews, so the max-margin wouldn't rely on the spurious feature. We then study the convergence rate of $\nicefrac{w_{\sp}(t)\cdot \spscale}{\rvw_{\inv}(t) \cdot \rvx_{\inv}}$  to $0$ i.e., the rate at which the ratio between the output of the spurious component to that of the invariant component converges to its corresponding max-margin value.
 We show that the convergence rate is $\Theta(1/\ln t)$, crucially scaled by an extra factor that monotonically increases in $[0,\infty)$ as a function of the spurious correlation, $p \in [0.5,1)$, thus capturing slower convergence for larger spurious correlation. 
 Another notable aspect of our result is that when there is no spurious correlation ($p=0.5$), both the upper and lower bound reduce to $0$, indicating quick convergence. We provide the full statement and proof of this bound in App~\ref{sec:statistical-proof}. For completeness, we also provide a more precise analysis of the dynamics for a 2D setting under both exponential and logistic loss in Theorem~\ref{thm:statistical-full} and Theorem~\ref{thm:statistical-logistic} in  App~\ref{sec:statistical-proof}.

\begin{theorem}
\label{thm:statistical-abstract}
\textbf{(informal)}
Let $\mathbb{H}$ be the set of linear classifiers $h(\rvx) = \rvw_{\inv{}} \cdot \rvx_{\inv{}} + w_{\sp{}} x_{\sp{}}$. Then,
for any easy-to-learn task, and for any dataset without geometric skews, continuous-time gradient descent training of $\rvw_{\inv{}}(t) \cdot \rvx_{\inv{}} + w_{\sp{}}(t) x_{\sp{}}$ to minimize the exponential loss, satisfies:
\begin{center}
\scalebox{1.0}{
$
\Omega \left( \frac{  \ln \frac{1+ p }{1+\sqrt{p(1-p)}}}{ \ln t} \right) \leq \frac{w_{\sp{}}(t) \spscale}{|\rvw_{\inv{}}(t)\cdot \rvx_{\inv{}}|} \leq \mathcal{O} \left( \frac{ \ln \frac{p}{1-p}}{\ln t} \right)
, \quad \text{where $p := Pr_{\Dtrain}[x_{\sp}\cdot y > 0] \in [0.5, 1)$.}
$
}
\end{center}
\end{theorem}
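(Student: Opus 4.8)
\textbf{Proof proposal for Theorem~\ref{thm:statistical-abstract}.} The plan is to reduce the gradient flow to a tractable scalar system using the easy-to-learn constraints, and then to bootstrap from the classical $\mathcal O(1/\ln t)$ max-margin convergence rate of \citet{soudry18implicit,ji18risk} a \emph{distribution-dependent} refinement in terms of $p$. Since $\Phi$ is the identity (Constraint~\ref{con:linear-mapping}) the invariant and spurious coordinates evolve independently, and since $x_{\sp}\in\{-\spscale,\spscale\}$ (Constraint~\ref{con:discrete}) every datapoint lies in the \emph{majority group} $S_\major$ ($y_ix_{\sp,i}=\spscale$) or the \emph{minority group} $S_\minor$ ($y_ix_{\sp,i}=-\spscale$). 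Writing $m^{\inv}_i(t):=y_i\,\rvw_{\inv}(t)\cdot\rvx_{\inv,i}$ for the invariant margin, the total margin of a majority (resp.\ minority) point is $m^{\inv}_i(t)+w_{\sp}(t)\spscale$ (resp.\ $m^{\inv}_i(t)-w_{\sp}(t)\spscale$), so with $A(t):=\sum_{i\in S_\major}e^{-m^{\inv}_i(t)}$ and $B(t):=\sum_{i\in S_\minor}e^{-m^{\inv}_i(t)}$ the flow on the spurious weight collapses to
\[
\dot w_{\sp}(t)=\spscale\Bigl(e^{-w_{\sp}(t)\spscale}A(t)-e^{w_{\sp}(t)\spscale}B(t)\Bigr),
\]
a stable relaxation of $w_{\sp}(t)\spscale$ toward the moving equilibrium $\tfrac12\ln\bigl(A(t)/B(t)\bigr)$. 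As a warm-up (and to recover the sharper two-dimensional statements of Theorems~\ref{thm:statistical-full} and~\ref{thm:statistical-logistic}) I would first solve $\gD_{\textup{2-dim}}$ exactly: there $m^{\inv}_i\equiv w_{\inv}(t)$ for all $i$, the majority/minority equations decouple into $\tfrac{d}{dt}e^{\,w_{\inv}\pm w_{\sp}\spscale}=\text{const}$, and one reads off $w_{\sp}(t)\spscale\to\tfrac12\ln\tfrac{p}{1-p}$ while $w_{\inv}(t)=\Theta(\ln t)$.

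For the general easy-to-learn case I would control the two factors of the ratio separately. The denominator is handled by the classical analysis: gradient flow on the exponential loss over a linearly separable set has total loss $\asymp 1/t$, which pins $|\rvw_{\inv}(t)\cdot\rvx_{\inv}|=\Theta(\ln t)$ (since $w_{\sp}$ will be shown bounded, the invariant margin agrees with the full margin up to $O(1)$). For the numerator I would trap $w_{\sp}(t)\spscale$ between $\ln\tfrac{1+p}{1+\sqrt{p(1-p)}}$ and $\mathcal O\!\bigl(\ln\tfrac{p}{1-p}\bigr)$. For the \emph{upper} bound: the ODE gives $w_{\sp}(t)\spscale\le\tfrac12\ln\bigl(A(t)/B(t)\bigr)+o(1)$, and Constraints~\ref{con:no-invariant-shift}--\ref{con:cond-indep} together with the no-geometric-skew hypothesis ($\|\rvw_\minor\|\approx\|\rvw_\whole\|$) force the multisets $\{m^{\inv}_i\}_{S_\major}$ and $\{m^{\inv}_i\}_{S_\minor}$ to be comparable, so $A(t)/B(t)\lesssim |S_\major|/|S_\minor|=p/(1-p)$ throughout. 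For the \emph{lower} bound: since $p\ge\tfrac12$ and $w_{\sp}(0)=0$, $w_{\sp}$ is nonnegative and increasing toward its equilibrium, and because $B(t)\asymp 1/t$ the relaxation rate $\int_0^t\spscale^2\sqrt{A/B}\,B(s)\,ds$ diverges, so $w_{\sp}(t)\spscale$ actually reaches $\tfrac12\ln\bigl(A/B\bigr)-o(1)$; a Cauchy--Schwarz/AM--GM estimate on $A,B$ under the structural constraints yields $\tfrac12\ln(A/B)\ge\ln\tfrac{1+p}{1+\sqrt{p(1-p)}}$ (this is where the $\sqrt{p(1-p)}$ term originates), and at $p=\tfrac12$ both bounds collapse to $0$. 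Dividing the numerator bounds by $\Theta(\ln t)$ yields the theorem.

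The main obstacle is the high-dimensional invariant dynamics: $\dot{\rvw}_{\inv}(t)$ couples all datapoints through the Gram matrix $\rvx_{\inv,i}\cdot\rvx_{\inv,j}$, so $A(t)$ and $B(t)$ are not governed by a closed low-dimensional system the way $\gD_{\textup{2-dim}}$ is. The crux is therefore to show that $A(t)/B(t)$ stays pinned near $p/(1-p)$ for all $t$ --- i.e.\ that gradient flow allocates its exponential-loss ``weight'' across majority and minority invariant features in proportion to the group sizes --- using only the no-geometric-skew assumption plus conditional independence. A secondary difficulty is obtaining the $\Theta(\ln t)$ estimate on the invariant output with constants sharp enough to be absorbed into the $\Omega(\cdot)$ and $\mathcal O(\cdot)$, and ruling out that $w_{\sp}$ overshoots its (slowly drifting) equilibrium. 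I expect these to be discharged by combining the $w_{\sp}$-ODE above with the direction-convergence machinery of \citet{soudry18implicit,ji18risk}.
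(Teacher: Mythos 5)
Your skeleton is essentially the paper's: reduce the spurious coordinate to the scalar flow $\dot w_{\sp}=\spscale\bigl(e^{-w_{\sp}\spscale}A(t)-e^{w_{\sp}\spscale}B(t)\bigr)$, read off the equilibrium $\tfrac{1}{2}\ln(A/B)$ as an upper bound via monotonicity, pin the invariant output at $\Theta(\ln t)$ using the implicit-bias results of \citet{soudry18implicit}, and lower-bound $w_{\sp}$ by showing it climbs a definite fraction of the way to equilibrium. But the step you flag as ``the crux'' --- proving that $A(t)/B(t)$ stays pinned at $p/(1-p)$ for all $t$ --- is not proven in the paper either; it is \emph{assumed away} by how the hypothesis is formalized. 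In the full statement (Theorem~\ref{thm:statistical-full-abstract}), ``no geometric skews'' means the empirical distribution of $\rvx_{\inv}$ conditioned on $x_{\sp}\cdot y>0$ is \emph{identical} to that conditioned on $x_{\sp}\cdot y<0$. Under that condition the majority and minority groups share the same multiset of invariant margins at every time $t$ (the invariant dynamics cannot distinguish them), so $A(t)/B(t)=p/(1-p)$ holds exactly and the common factor $\E[e^{-\rvw_{\inv}\cdot\rvx_{\inv}}]$ pulls out of $\dot w_{\sp}$ by construction. Under your weaker reading ($\|\rvw_{\minor}\|\approx\|\rvw_{\whole}\|$), the pinning of $A/B$ is genuinely open and I see no route to it from conditional independence alone; as written, your plan therefore has an unclosed step that the paper's version of the theorem does not require.

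On the lower bound, your mechanics also diverge from the paper's, in a way worth flagging. You argue that because $B(t)\asymp 1/t$ the relaxation integral diverges and $w_{\sp}$ reaches its equilibrium up to $o(1)$; that would give the numerator $\tfrac{1}{2}\ln\tfrac{p}{1-p}$, which is stronger than the stated $\Omega$ bound, but it requires a matching \emph{lower} bound on the loss prefactor ($\gtrsim 1/t$), which you assert rather than prove. The paper instead uses only the crude bound $\E[e^{-\rvw_{\inv}\cdot\rvx_{\inv}}]\ge (t+1)^{-2\mathcal{M}}$ (whose integral \emph{converges}), substitutes the already-established upper bound $e^{w_{\sp}\spscale}\le\sqrt{p/(1-p)}$ into the negative term to get $\dot w_{\sp}\gtrsim (t+1)^{-2\mathcal{M}}\spscale\bigl(pe^{-w_{\sp}\spscale}-\sqrt{p(1-p)}\bigr)$, and integrates this separable inequality; the constant $\ln\tfrac{c+p}{c+\sqrt{p(1-p)}}$ with $c=2(2\mathcal{M}-1)/\spscale^2$ falls out of that computation. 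So the $\sqrt{p(1-p)}$ originates from feeding the upper bound on $w_{\sp}$ back into the ODE, not from a Cauchy--Schwarz estimate on $A$ and $B$; your appeal to the latter is the vaguest part of the writeup and should be replaced by the substitution argument (or by a rigorous two-sided $\Theta(1/t)$ loss bound if you want the sharper constant).
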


The intuition behind this failure mode is that in the initial epochs, when the loss $\exp{(-y\rvw \cdot \rvx)}$ on all points are more or less the same, the updates $\nicefrac{1}{|S|}\cdot\sum_{(\rvx,y) \in S} {y \rvx} \exp(-y\rvw \cdot \rvx) $ roughly push along the direction,
 $\nicefrac{1}{|S|} \cdot \sum_{(\rvx,y) \in S} {y \rvx}$. This is the precise (mis)step where gradient descent ``absorbs'' the spurious correlation, as this step pushes $w_{\sp{}}$ along $p\spscale -(1-p)\spscale = (2p-1)\spscale$. While this update would be near-zero when there is only little spurious correlation ($p \approx 0.5$),
 it takes larger values for larger levels of spurious correlation ($p \approx 1$).
 Unfortunately, under exponential-type losses, the gradients decay with time, and so the future gradients, even if they eventually get rid of this absorbed spurious component, take an exponentially long time to do so.

 \begin{figure}[t!]
\centering
    \begin{minipage}[t]{\textwidth}
        \centering
        \begin{minipage}[t]{0.33\textwidth}
                \adjincludegraphics[width=\textwidth,padding=0.0in 0in 0 0]{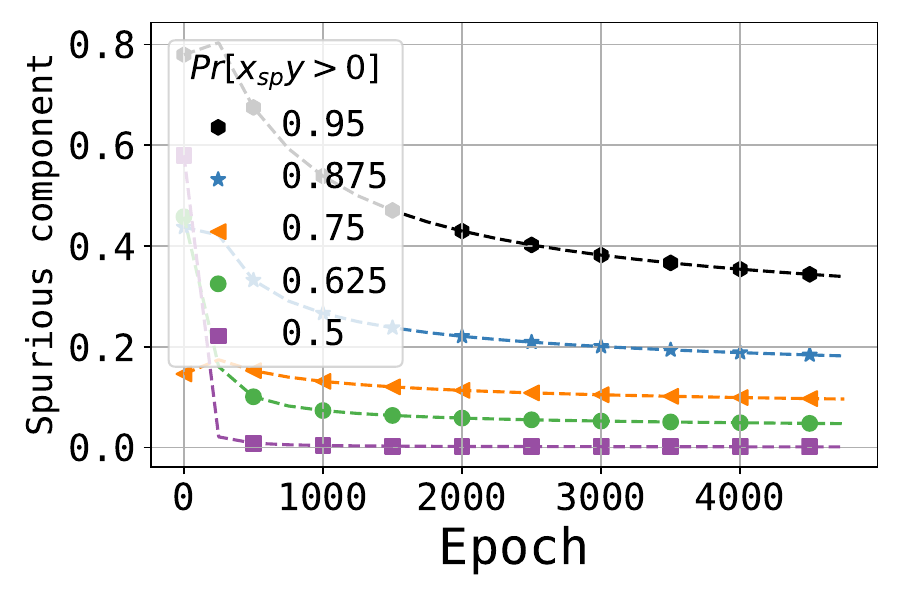}
                \subcaption{Failure of gradient-descent} 
                \label{fig:2d_gd}
        \end{minipage}
        \begin{minipage}[t]{0.33\textwidth}
                \adjincludegraphics[width=\textwidth,padding=0.0in 0 0 0]{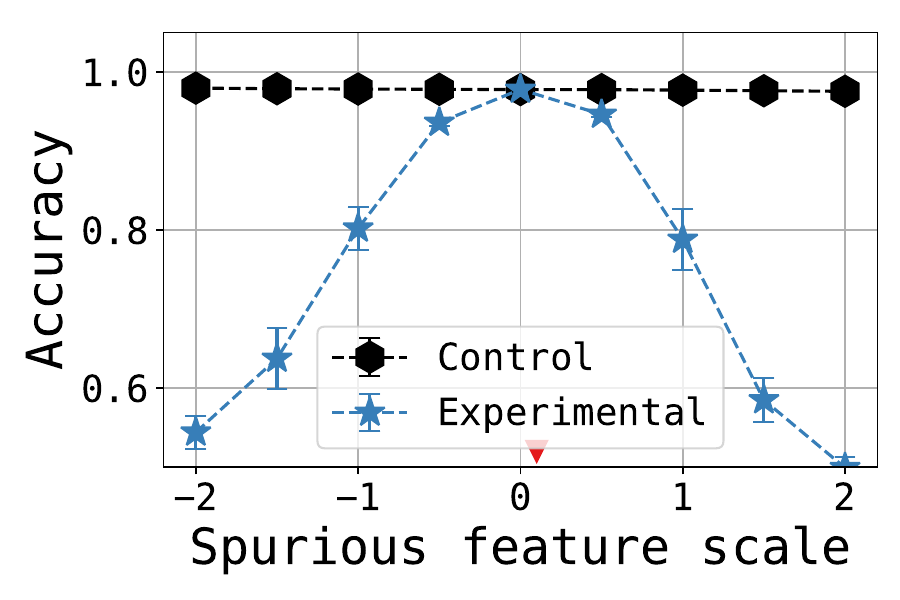}
                \subcaption{FCN + MNIST}
                \label{fig:skew_mnist}
        \end{minipage}%
        \begin{minipage}[t]{0.33\textwidth}
                        \adjincludegraphics[width=\textwidth,padding=0.1in 0 0 0]{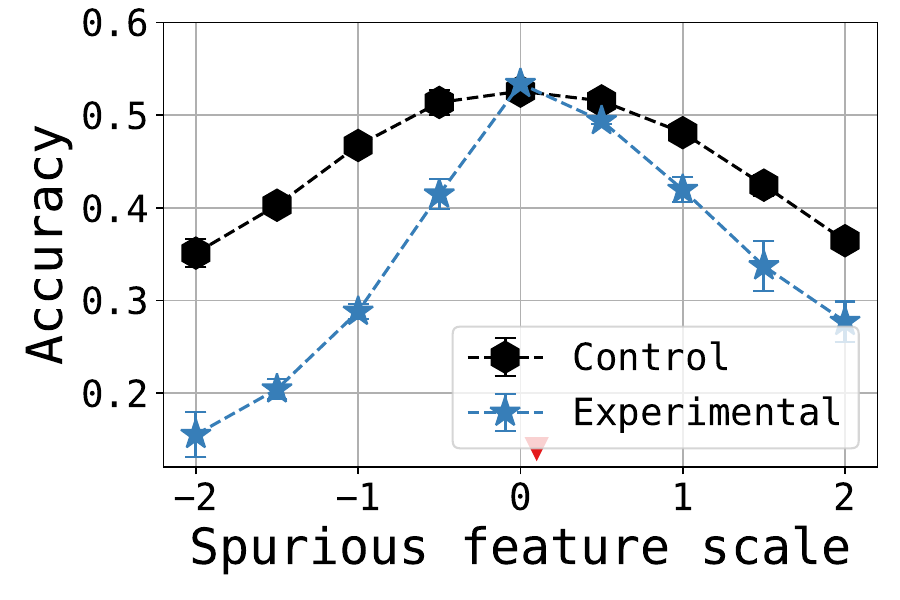}
                        \subcaption{ResNet + CIFAR10}
                        \label{fig:skew_cifar10}
        \end{minipage}%
    \end{minipage}
    \hfill 
    \caption{\textbf{Statistical skews:} In \textbf{Fig~\ref{fig:2d_gd}}, we plot the slow convergence of $\nicefrac{w_{\sp{}}}{\sqrt{w_{\inv{}}^2 + w_{\sp{}}^2}} \in [0,1]$ 
    under logistic loss with learning rate $0.001$ and a training set of $2048$ from $\gD_{\text{2-dim}}$ with $\spscale = 1$. In \textbf{Fig~\ref{fig:skew_mnist}} and \textbf{Fig~\ref{fig:skew_cifar10}}, we demonstrate the effect of statistical skews in neural networks. Here, during test-time we shift both the scale of the spurious feature (from its original scale of $0.1$ as marked by the red triangle) and its correlation. Observe that the network trained on the statistically-skewed $S_{\text{exp}}$ is more vulnerable to these shifts compared to no-skew $S_{\text{con}}$.\protect\footnotemark}
        \label{fig:statistical}
\end{figure}

\footnotetext{The overall accuracy on CIFAR10 is low because even though $|S_{\text{con}}| = 50k$ and $|S_{\text{exp}}|=455k$, the number of unique samples here is just $5k$. See App~\ref{sec:cifar10-statistical} for more explanation.}

\textbf{Broader empirical implications.} We now demonstrate the effect of statistical skews in more general empirical settings consisting of a non-linear easy-to-learn task learned using a neural network.
Isolating the statistical skew effect is however challenging in practice: any gradient-descent-trained model is likely to be hurt by both statistical skews and geometric skews, and we'd have to somehow disentangle the two effects. We handle this by designing the following experiment. We first create a control dataset $S_{\text{con}}$ where there are no geometric or statistical skews. For this, we take a set of images $S_{\inv{}}$ (with no spurious features), and create two copies of it, $S_{\major}$ and $S_{\minor}$ where we add spurious features, positively and negatively aligned with the label, respectively, and define $S_{\text{con}} = S_{\major} \cup S_{\minor}$. Next, we create an experimental dataset $S_{\text{exp}}$ with a statistical skew in it. We do this
by taking $S_{\text{con}}$ and duplicating 
$S_{\major}$ in it so that the ratio $|S_{\major}|:|S_{\minor}|$ becomes $10:1$. Importantly, this dataset has no geometric skews, since merely replicating points does not affect the  geometry. 
Then, if we were to observe that (stochastic) gradient descent on $S_{\text{exp}}$ results in greater spurious-feature-reliance than $S_{\text{con}}$, we would have isolated the effect of statistical skews. 

Indeed, we demonstrate this in two easy-to-learn tasks. First, we consider a Binary-MNIST task learned by an fully-connected network. Here, we concatenate a spurious channel where either all the pixels are ``on'' or ``off''.  Second, we consider a (multiclass) CIFAR10 
task (where we add a spuriously colored line) learned using a ResNet \citep{he16resnet}. In  Fig~\ref{fig:statistical}, we
demonstrate that training on $S_{\text{exp}}$ leads to less robust models than on $S_{\text{con}}$ in both these tasks. In other words, gradient descent on $S_{\text{exp}}$ leads to greater spurious-feature-reliance,
thus validating the effect of statistical skews in practice. More details are provided in App~\ref{sec:statistical-experiments}.

\section{Conclusions and future work}

We identify that spurious correlations during training can induce two distinct skews in the training set, one geometric and another statistical. These skews result in two complementary ways by which 
empirical risk minimization (ERM) via gradient descent is guaranteed to rely on those spurious correlations.
At the same time, our theoretical results (in particular,
the upper bounds on the spurious component of the classifier) show that when these skews do disappear, there is no failure within the considered tasks. 
This suggests that within the class of easy-to-learn tasks and for gradient-descent-trained linear models, the above discussion likely captures all possible failure modes.

However, when we do venture into the real-world to face more complicated tasks and use non-linear, deep models, many other kinds of failure modes would crop up (such as the ones we enumerate in Sec~\ref{sec:easy-to-learn}, in addition to the fundamental ones mentioned above).
Indeed, the central message of our work is that there is no one unique mechanism by which classifiers fail under spurious correlations, even in the simplest of tasks. This in turn has a key practical implication:
in order to improve our solutions to OoD generalization, it would be valuable to figure out whether or not a unified solution approach is sufficient to tackle all these failure mechanisms. While we outline some solutions in App~\ref{sec:solutions}, we hope that
the foundation we have laid in this study helps future work in better tackling out-of-distribution challenges.

\textbf{Acknowledgements.} We thank Hanie Sedghi for providing useful feedback on the draft. We also thank Qiao Rui (NUS) for comments that helped improved the clarity of one of our proofs.

\bibliography{references}
\bibliographystyle{iclr2021_conference}

\appendix
\section{Constraints and other failure modes}
\label{sec:constraints-and-failures}
Here, we elaborate on how each of the constraints we impose on the easy-to-learn tasks corresponds to eliminating a particular complicated kind of failure of ERM from happening. In particular, for each of these constraints, we'll construct a task that betrays that constraint (but obeys all the others) and that causes a unique kind of failure of ERM. 
We hope that laying these out concretely can provide a useful starting point for future work to investigate these various failure modes. Finally, it is worth noting that all the failure modes here can be explained via a geometric argument, and furthermore the failure modes for Constraint~\ref{con:linear-mapping} and Constraint~\ref{con:discrete} can be explained as a special case of 
the argument in Theorem~\ref{thm:geometric}.

\textbf{Failure due to weakly predictive invariant feature (breaking Constraint~\ref{con:no-noise}).}
We enforced in Constraint~\ref{con:no-noise} that the invariant feature be fully informative of the label. For a setting that breaks this constraint, consider a 2D task with \textit{noisy} invariant features, where
across all domains, we have a Gaussian invariant feature of the form $x_{\inv{}} \sim \mathcal{N}(y,\sigma_{\inv{}}^2)$ ---- this sort of a noisy invariant feature was critically used in \cite{tsipras19robustness,sagawa20investigation,arjovsky19invariant} to explain failure of ERM. 
Now, assume that during training we have a spurious feature $x_{\sp{}} \sim \mathcal{N}(y,\sigma_{\sp{}}^2)$ (say with relatively larger variance, while positively correlated with $y$).  
Then, observe that the Bayes optimal classifier on $\Dtrain$ is $\text{sgn}\left({{x}_{\inv{}}}/{\sigma_{\inv{}}} + {{x}_{\sp{}}}/{\sigma_{\sp{}}}\right)$ i.e., it must rely on the spurious feature. 

However, also observe that if one were to eliminate noise in the invariant feature by setting $\sigma_{\inv{}} \to 0$ (thus making the invariant feature {\em perfectly informative} of the label like in our MNIST example), the Bayes optimal classifier approaches $\text{sgn}\left({{x}_{\inv{}}}\right)$, thus succeeding after all.

\textbf{Failure due to ``unstable'' invariant feature (breaking Constraint~\ref{con:no-invariant-shift}).} 
If during test-time, we push the invariant features closer to the decision boundary (e.g., partially occlude the shape of every camel and cow), test accuracy will naturally deteriorate, and this embodies a unique failure mode.

Concretely, consider a domain generalization task where across all domains $x_{\inv} \geq -0.5$ determines the true boundary (see Fig~\ref{fig:invariant-shift-failure}). Now, assume that in the training domains, we see $x_{\inv} = 2y$ or $x_{\inv} = 3y$. This would result in learning a max-margin classifier of the form $x_{\inv} \geq 0$. Now, during test-time, if one were to provide ``harder'' examples that are closer to the true boundary in that $x_{\inv} = -0.5 + 0.1y$, then all the positive examples would end up being misclassified.

\textbf{Failure due to complex conditional dependencies
(breaking Constraint~\ref{con:cond-indep}).} 
This constraint imposed $x_{\inv{}} \indep x_{\sp{}} | y$. 
Stated more intuitively, given that a particular image is that of a camel, this constraint captures the fact that knowing the background color does not tell us too much about the precise shape of the camel.  
An example where this constraint is broken is one where $x_{\inv{}}$ and $x_{\sp}$ share a neat geometric relationship during training, but not during testing, which then results in failure as illustrated below.

Consider the example in Fig~\ref{fig:cond-dependence-failure} where for the positive class we set $x_{\inv{}} + x_{\sp{}} = 1$ and for the negative class we set $x_{\inv{}} + x_{\sp{}} = -1$, thereby breaking this constraint.  Now even though the invariant feature in itself is fully informative of the label, while the spurious feature is not, the max-margin here is parallel to the line $x_{\inv{}} + x_{\sp{}} = c$ and is therefore reliant on the spurious feature. 

\textbf{Failure mode due to high-dimensional spurious features (lack of Constraint~\ref{con:discrete}).} Akin to the setting in \cite{tsipras19robustness}, albeit without noise,
consider a task where the spurious feature has $D$ different co-ordinates,  $\mathcal{X}_{\sp{}} = \{-1, 1\}^{D}$ and the invariant feature just one $\mathcal{X}_{\inv{}} = \{-1, +1 \}$. Then, assume that the $i$th spurious feature $x_{\sp{},i}$ independently the value $y$ with probability ${p_i}$ and $-y$ with probability $1-p_i$, where without loss of generality $p_i > 1/2$.
Here, with high probability, all datapoints in $S$ can be separated simply by summing up the spurious features (given $D$ is large enough). Then, we argue that the max-margin classifier would provide some non-zero weight to this direction because it helps maximize its margin (see visualization in Fig~\ref{fig:highdim-failure}). 

One way to see why this is true is by invoking a special case of Theorem~\ref{thm:geometric}. In particular, if we define $\sum x_{\sp{},i}$ to be a single dimensional spurious feature $x_{\sp{}}$, this feature satisfies $x_{\sp{}} \cdot y > 0$ on all training points. In other words, this is an extreme scenario with no minority group. Then, Theorem~\ref{thm:geometric}  would yield a positive lower bound on the weight given to $x_{\sp{}}$, explaining why the classifier relies on the spurious pixels. For the sake of completeness, we formalize all this discussion below:

 \begin{figure}[t!]
\centering
    \begin{minipage}[t]{\textwidth}
        \centering
        \begin{minipage}[t]{0.25\textwidth}
                \adjincludegraphics[width=0.8\textwidth,padding=0.0in 0in 0 0]{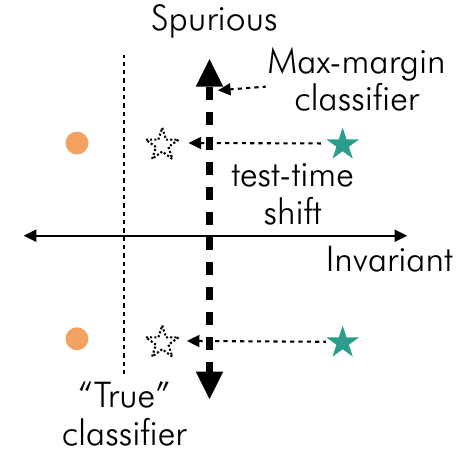}
                \subcaption{Breaking Constraint~\ref{con:no-invariant-shift}: Unstable invariant feature} 
                \label{fig:invariant-shift-failure}
        \end{minipage}%
        \begin{minipage}[t]{0.25\textwidth}
                \adjincludegraphics[width=0.8\textwidth,padding=0.0in 0 0 0]{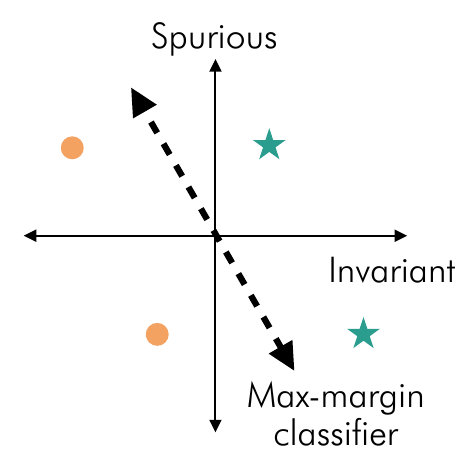}
                \subcaption{Breaking Constraint~\ref{con:cond-indep}: Conditionally dependent features}
                \label{fig:cond-dependence-failure}
        \end{minipage}%
        \begin{minipage}[t]{0.25\textwidth}
                        \adjincludegraphics[width=0.8\textwidth,padding=0.1in 0 0 0]{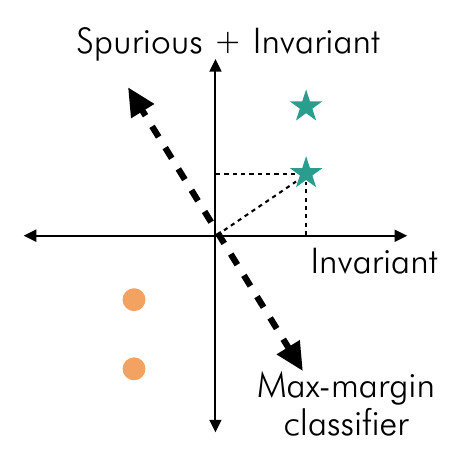}
                        \subcaption{Breaking Constraint~\ref{con:linear-mapping}: Hard-to-disentangle features}
                        \label{fig:nonorthogonal-failure}
        \end{minipage}%
        \begin{minipage}[t]{0.25\textwidth}
                        \adjincludegraphics[width=0.8\textwidth,padding=0.1in 0 0 0]{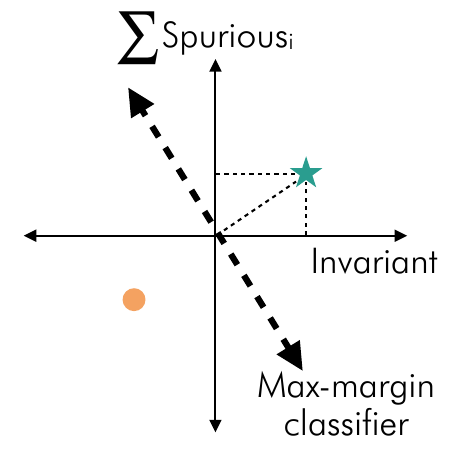}
                        \subcaption{Breaking Constraint~\ref{con:discrete}: High-dimensional spurious features}
                        \label{fig:highdim-failure}
        \end{minipage}%
    \end{minipage}
    \hfill 
    \caption{\textbf{Other failure modes:} We visualize the different (straightforward) ways in which a max-margin classifier can be shown to fail in tasks where one of the Constraints in Sec~\ref{sec:easy-to-learn} is disobeyed. More discussion in Sec~\ref{sec:constraints-and-failures}.
    }
        \label{fig:other-failures}
\end{figure}

\begin{proposition}
Let $c$ be a constant such that for all $i$, $ p_j >  \frac{1}{2} +   \frac{c}{2}$. Let $D$ be sufficiently large so that $D \geq \frac{1}{2c} \sqrt{2 \ln \frac{m}{\delta}}$ where $m$ is the number of training datapoints in $S$.  Then, w.h.p. of $1-\delta$ over the draws of $S$, the max-margin classifier corresponds is of the form $w_{\inv{}} x_{\inv{}} + \rvw_{\sp{}}\rvx_{\sp{}}$ where:

\begin{align}
\frac{\|\rvw_{\sp{}} \|}{w_{\inv{}}} \geq \frac{c\sqrt{D}}{2}.
\end{align}

\end{proposition}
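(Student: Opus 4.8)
The plan is to pin down the max-margin classifier via the KKT conditions of the hard-margin SVM, exploit that $x_{\inv{}}=y$ on every training point to obtain closed forms for $w_{\inv{}}$ and $\|\rvw_{\sp{}}\|$, and then show that the all-ones direction $\vone\in\R^D$ in spurious space already separates $S$ with a very large margin, which forces the max-margin to have small norm and hence a small invariant weight. Conceptually this is the ``no minority group'' instance of the geometric-skew mechanism of Theorem~\ref{thm:geometric} (here $\sum_i x_{\sp{},i}$ behaves like a single spurious feature that, w.h.p., is sign-aligned with the label on all of $S$), but since that theorem is stated only for two-valued spurious features I would run the argument directly. I suppress the bias term throughout, matching the statement; allowing one only adds the constraint $\sum_j\alpha_j y_j=0$ in what follows, which kills terms and changes nothing.

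Write $\rvw^\star=(w_{\inv{}},\rvw_{\sp{}})$ for the max-margin vector and $(\rvx_j,y_j)_{j=1}^m$ for $S$; since $(1,\vzero)$ separates $S$ with margin $1$, $\rvw^\star$ is well defined and nonzero. By the KKT conditions, $\rvw^\star=\sum_j\alpha_j y_j\rvx_j$ with $\alpha_j\ge0$, and $\alpha_j>0$ only when $y_j(\rvw^\star\cdot\rvx_j)=1$, i.e.\ $\rvw^\star\cdot\rvx_j=y_j$. Reading off the invariant coordinate and using $x_{\inv{},j}=y_j$ gives $w_{\inv{}}=\sum_j\alpha_j>0$. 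Next, $\|\rvw^\star\|^2=\rvw^\star\cdot\bigl(\sum_j\alpha_j y_j\rvx_j\bigr)=\sum_j\alpha_j y_j(\rvw^\star\cdot\rvx_j)$, which on the support set equals $\sum_j\alpha_j=w_{\inv{}}$; similarly, since $\rvw_{\sp{}}\cdot\rvx_{\sp{},j}=\rvw^\star\cdot\rvx_j-w_{\inv{}}x_{\inv{},j}=y_j(1-w_{\inv{}})$ on the support set,
\begin{align*}
\|\rvw_{\sp{}}\|^2 \;=\; \sum_j \alpha_j\, y_j\,(\rvw_{\sp{}}\cdot\rvx_{\sp{},j}) \;=\; (1-w_{\inv{}})\sum_j\alpha_j \;=\; w_{\inv{}}(1-w_{\inv{}}).
\end{align*}
Therefore $\nicefrac{\|\rvw_{\sp{}}\|^2}{w_{\inv{}}^2}=\nicefrac{1}{\|\rvw^\star\|^2}-1$, and the task reduces to showing $\|\rvw^\star\|^2$ is small, i.e.\ the max-margin is large.

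To upper-bound $\|\rvw^\star\|^2$ I would produce a feasible classifier from $\vone$. Let $\mu:=\min_j\, y_j\sum_{i=1}^D x_{\sp{},i,j}$; on the event $\mu>0$, the classifier $\hat\rvw=\bigl(\varepsilon,\tfrac{1-\varepsilon}{\mu}\vone\bigr)$ with $\varepsilon\in[0,1]$ has margin $\varepsilon+\tfrac{1-\varepsilon}{\mu}\,y_j\sum_i x_{\sp{},i,j}\ge1$ on every point, so minimizing $\|\hat\rvw\|^2=\varepsilon^2+\tfrac{(1-\varepsilon)^2}{\mu^2}D$ over $\varepsilon$ gives $\|\rvw^\star\|^2\le\tfrac{D}{D+\mu^2}$, hence $\nicefrac{\|\rvw_{\sp{}}\|^2}{w_{\inv{}}^2}\ge\nicefrac{\mu^2}{D}$. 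Finally, $Z_j:=y_j\sum_i x_{\sp{},i,j}$ is a sum of $D$ independent $\{-1,+1\}$ terms with mean $\sum_i(2p_i-1)>cD$ (since $p_i>\tfrac12+\tfrac c2$), so Hoeffding yields $\Pr[Z_j<cD/2]\le e^{-c^2D/8}$; a union bound over the $m$ points, using the assumed lower bound on $D$, caps the failure probability at $\delta$. On the complementary event $\mu\ge cD/2$, so $\nicefrac{\|\rvw_{\sp{}}\|}{w_{\inv{}}}\ge\nicefrac{\mu}{\sqrt D}\ge\nicefrac{c\sqrt D}{2}$.

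The crux --- and the only step that is not routine --- is the pair of identities $\|\rvw^\star\|^2=w_{\inv{}}$ and $\|\rvw_{\sp{}}\|^2=w_{\inv{}}(1-w_{\inv{}})$: they are exactly what upgrades a bound on the max-margin norm into a sharp lower bound on the \emph{ratio} $\|\rvw_{\sp{}}\|/w_{\inv{}}$, and they hinge on $x_{\inv{}}=y$. Two degenerate cases can be dispatched in a line each: if $w_{\inv{}}=0$ the ratio is $+\infty$, and on the low-probability event $\mu\le0$ no claim is made.
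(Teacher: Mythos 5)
Your proof is correct, and while it opens the same way as the paper's --- exhibiting the uniform spurious direction $\vone/\sqrt{D}$ and using Hoeffding plus a union bound to show it separates all of $S$ with margin at least $\tfrac{c}{2}\sqrt{D}$ --- the way you convert that into a bound on the ratio is genuinely different. The paper normalizes the max-margin classifier to unit norm, writes $w_{\inv{}}=\alpha$, lets the spurious block contribute margin $\sqrt{1-\alpha^2}\,m$, argues $m\ge\tfrac{c}{2}\sqrt{D}$ by an exchange with $\rvw_{\sp{}}'$, and then reads off $\|\rvw_{\sp{}}\|/w_{\inv{}}=m$ from the first-order balance condition $\alpha/\sqrt{1-\alpha^2}=1/m$. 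You instead use the KKT representation to derive the exact identities $\|\rvw^\star\|^2=w_{\inv{}}$ and $\|\rvw_{\sp{}}\|^2=w_{\inv{}}(1-w_{\inv{}})$, so that $\|\rvw_{\sp{}}\|^2/w_{\inv{}}^2=1/\|\rvw^\star\|^2-1$ and the whole problem collapses to upper-bounding the max-margin norm by a single feasible point. This is cleaner and more airtight than the paper's variational argument, which tacitly optimizes over a one-parameter slice and treats the minimum spurious margin as if it were attained uniformly; your identities, which hinge on $x_{\inv{}}=y$ exactly as you say, are an observation the paper does not have and give the sharp reduction for free. One caveat you share with the paper rather than introduce: the union bound needs $m e^{-c^2D/8}\le\delta$, i.e.\ $D\ge 8c^{-2}\ln(m/\delta)$, and the stated hypothesis $D\ge\tfrac{1}{2c}\sqrt{2\ln(m/\delta)}$ does not imply this --- the paper's own Hoeffding step contains the same (in fact worse) bookkeeping slip, so this is a defect of the proposition's hypothesis, not of your argument. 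Finally, the degenerate case $w_{\inv{}}=0$ that you dispatch at the end cannot actually occur, since your own identity gives $w_{\inv{}}=\|\rvw^\star\|^2>0$.
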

\begin{proof}
First, we'll show that on $S$ there exists a classifier that relies purely on the spurious features to separate the data. In particular, consider $\rvw_{\sp{}}'$
where the $i$th dimension is $1/\sqrt{D}$ if $p_i > 1/2$, and $-1/\sqrt{D}$ otherwise.
 By the Hoeffding's inequality, we have that with high probability $1-\delta$, on all the $m$ training datapoints,
$y \rvw_{\sp{}}' \cdot \rvx_{\sp{}} \geq \frac{1}{\sqrt{D}}\sum (2p_{i}-1) - \sqrt{\frac{2}{D} \ln \frac{m}{\delta}} \geq  \frac{c}{2}\sqrt{D}$.

Now, for the max-margin classifier, assume that $w_{\inv}^2 = \alpha$. Further, assume that the margin contributed by $\rvw_{\sp{}}x_{\sp{}} + b$ equals $\sqrt{1-\alpha^2} m$ for some $m$. Observe that $m$ must satisfy $m \geq \frac{c}{2}\sqrt{D}$ (as otherwise, we can replace $\rvw_{\sp{}}$ with $\sqrt{1-\alpha^2}\rvw'_{\sp{}}$ to achieve a better margin). Now, for the resulting margin to be maximized, $\alpha$ must satisfy $\frac{\alpha}{\sqrt{1-\alpha^2}} = \frac{1}{m}$.

\end{proof}
\textbf{Failure mode due to hard-to-separate features (lack of Constraint~\ref{con:linear-mapping})} Here we assumed that the feature space can be orthogonally decomposed into invariant and spurious features. Now let us imagine a 2D task, visualized in Fig~\ref{fig:nonorthogonal-failure} where this is not respected in that each datapoint is written as $(x_{\inv{}},x_{\inv{}}+x_{\sp{}})$, assuming that $x_{\inv} = y$ and $x_{\sp{}} \in \{-0.5, 0.5\}$.  A practical example of this sort of structure is the example in Fig~\ref{fig:cifar_line} where we add a line to the last channel of CIFAR10, and then vary the brightness of the line during test-time.

To understand why failure occurs in this 2D example, observe that, regardless of the correlation between $x_{\sp{}}$ and $y$, we'd have that $(x_{\inv{}}+x_{\sp{}})\cdot y > 0$. In other words, the second co-ordinate is fully informative of the label. The max-margin classifier, due to its bias, relies on both the first and the second co-ordinate to maximize its margin i.e., the classifier would be of the form $w_1 x_{\inv{}} + w_2(x_{\inv{}} + x_{\sp{}})$ where $w_2 > 0$. (Again, like in our discussion of the failure mode of Constraint~\ref{con:discrete}, we can argue this via Thm~\ref{thm:geometric} by considering $x_{\inv{}} + x_{\sp{}}$ itself as a spurious feature, and by observing that there's no minority group here.)
Hence, by assigning a positive weight to the second co-ordinate it inadvertently becomes susceptible to the spurious feature that may shift during testing.

\section{Proofs}
\label{sec:proofs}
\subsection{Proof of Theorem~\ref{thm:geometric} on failure due geometric skews}
\label{sec:geometric-proof}
Below we provide a proof for our result analyzing the failure mode arising from geometric skews in the data. 

Recall that given a dataset $S$, where the spurious feature can take only values in $\{-\spscale, +\spscale \}$, we partitioned  $S$ into two subsets $S_{\major}$ and $S_{\minor}$ where in $S_{\major}$ the points satisfy $x_{\sp} \cdot y > 0$ and in $S_{\minor}$ the points satisfy $x_{\sp} \cdot y < 0$.

Next, we define two key notations. First, for any dataset $T \subseteq S$, let $\rvv(T) \in \mathcal{X}_{\inv{}}$ denote a least-norm vector (purely in the invariant space) that achieves a margin of at least $1$ on all datapoints in $T$ (we'll define this formally a few paragraphs below). Similarly, let $\tilde{\rvv}(T) \in \mathcal{X}_{\inv{}}$ denote a least-norm vector that achieves a margin of at least $1$ on $T$, and a margin of at least $0$ on $S \setminus T$ (again,  full definition will shortly follow). 
While by definition, $\|\rvv(T)\| \leq \|\tilde{\rvv}(T)\|$, we can informally treat  these quantities as the same since empirically $\|\rvv(T)\| \approx \|\tilde{\rvv}(T)\|$. We show these plots in Sec~\ref{sec:rf_appendix} for both MNIST and CIFAR10.
But importantly, both these quantities grow with the size of $T$. Then, by virtue of the small size of the minority group $S_{\minor}$, we can say that $\|\rvv(S_{\minor})\|$ is smaller than both $\|\rvv(S_{\major})\|$ and $\|\rvv(S)\|$. We refer to this gap as a {geometric skew}. When this skew is prominent enough (e.g.,  ${\|{\rvv}(S_{\minor})\|}/{\|\rvv(S)\|} \approx 0$), our result below argues that the spurious component in the overall max-margin classifier must be sufficiently large (and positive). On the flip side, we also show that
when the skew is negligible enough (e.g., ${\|{\rvv}(S_{\minor})\|}/{\|\rvv(S)\|} \approx 1$), then the
spurious component has to be sufficiently small. 

To be able to better visualize these bounds, we write these as bounds on $|\spscale w_{\sp{}}|$ (i.e., $|w_{\sp{}} x_{\sp{}}|$ rather than $|w_{\sp{}}|$). Then we can think of a lower bound of the form $|\spscale w_{\sp{}}| \gtrsim 1$ as demonstrating serious failure as a shift in the correlation can adversely reduce the original margin of $\approx 1$.

Before we state the result, for clarity, we state the full mathematical definition of $\rvv$ and $\tilde{\rvv}$ as follows. For any $T \subseteq S$:

\begin{align}
 \rvv(T), b(T) = \argmin_{\rvw_{\inv} \in \mathcal{X}_{\inv{}},b} \quad &  \| \rvw_{\inv} \|^2 &\\
\textrm{s.t.} \quad & y (\rvw_{\inv} \cdot \rvx_{\inv}) + b) \geq 1 
\qquad \forall\, ((\rvx_{\inv{}},x_{\sp{}}),y) \in T \\
 \tilde{\rvv}(T), \tilde{b}(T) = \argmin_{\rvw_{\inv} \in \mathcal{X}_{\inv{}},b} \quad &
 \| \rvw_{\inv} \|^2 &  \\
\textrm{s.t.} \quad & 
y (\rvw_{\inv} \cdot \rvx_{\inv} + b) \geq 1  
\qquad \forall\, ((\rvx_{\inv{}},x_{\sp{}}),y) \in T  \\
  \quad & y (\rvw_{\inv} \cdot \rvx_{\inv} + b) \geq 0 
  \qquad \forall\, ((\rvx_{\inv{}},x_{\sp{}}),y) \in S \setminus T 
\end{align}

Using these notations, we state our full theorem and provide its proof below.

\begin{theorem}
\label{thm:geometric-full}
Let $\mathbb{H}$ be the set of linear classifiers, $h(x) = \rvw_{\inv} \rvx_{\inv} + w_{\sp{}} x_{\sp{}} + b$. Let the geometric skews in a dataset $S$ be quantified through the terms $\geomskew_1 = \nicefrac{\|\rvv(S_{\minor})\|}{\|\rvv(S)\|}$, $\geomskew_2 = \nicefrac{\|\rvv(S_{\minor})\|}{\|\rvv(S_{\major})\|}$ and $\tilde{\geomskew}_1 := \nicefrac{\| \tilde\rvv(S_{\minor}) \|}{ \|\tilde{\rvv}(S) \|}$, $\tilde{\geomskew}_2 := \nicefrac{\| \tilde\rvv(S_{\minor}) \|}{ \|\tilde{\rvv}(S_{\major}) \|}$.
Then for any task satisfying all the constraints in Sec~\ref{sec:easy-to-learn}
the max-margin classifier satisfies the inequalities (where for readability, we will use $c_1 := \nicefrac{1}{(2\|{\tilde{\rvv}(S)}\|\spscale)}, c_2 := \nicefrac{1}{(2\| \tilde{\rvv}(S_{\major})\|\spscale)}$):
\begin{align}
    \spscale w_{\sp} & \geq  \max\Big(1-2 \sqrt{
\tilde{\kappa}_1 + c_1^2}, 0\Big)
&& \qquad \text{if } \quad \tilde{\kappa}_2  \leq \sqrt{\nicefrac{1}{4} - c_2^2},   \quad \text{and}
\\
|\spscale w_{\sp}|& \leq \min \Big( \nicefrac{1}{\kappa_1}  -1, \spscale \|\rvv(S) \|\Big) && \qquad \text{if } \quad  \kappa_2 \leq 1.
\end{align}
\end{theorem}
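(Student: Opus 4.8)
The plan is to work directly with the margin-normalized max-margin program, $(\rvw_{\inv}^\star, w_{\sp}^\star, b^\star) = \argmin\, \|\rvw_{\inv}\|^2 + w_{\sp}^2$ subject to $y(\rvw_{\inv}\cdot\rvx_{\inv} + w_{\sp} x_{\sp} + b) \ge 1$ for all $((\rvx_{\inv},x_{\sp}),y)\in S$, and to bound the spurious coordinate $\spscale w_{\sp}^\star$ of its optimum (the pair $(\rvw_{\inv}^\star, w_{\sp}^\star)$ is unique by strict convexity in those two variables). The single structural fact I exploit throughout is that on $S_{\major}$ the spurious term contributes $w_{\sp}(yx_{\sp}) = w_{\sp}\spscale$ to the margin, whereas on $S_{\minor}$ it contributes $-w_{\sp}\spscale$. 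Consequently the ``residual'' purely-invariant classifier $(\rvw_{\inv}^\star, b^\star)$, after this $\pm w_{\sp}^\star\spscale$ offset is absorbed, must still separate the relevant part of $S$ with a prescribed margin, so a rescaling of it is admissible for one of the norm-minimization programs defining $\rvv(\cdot)$ or $\tilde{\rvv}(\cdot)$; matching the norm of that rescaling against the optimal values $\|\rvv(\cdot)\|$ or $\|\tilde{\rvv}(\cdot)\|$ is what couples $w_{\sp}^\star$ to the skew ratios, and pairing it with an explicit feasible point (which upper bounds $\|\rvw_{\inv}^\star\|^2 + w_{\sp}^{\star2}$) closes the argument.

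Two feasible points do all the work. Construction (A) is $(\rvv(S), 0, b(S))$, which gives $\|\rvw_{\inv}^\star\|^2 + w_{\sp}^{\star 2} \le \|\rvv(S)\|^2$ and hence immediately $|\spscale w_{\sp}^\star| \le \spscale\|\rvv(S)\|$. Construction (B) is $(2\tilde{\rvv}(S_{\minor}),\, \nicefrac{1}{\spscale},\, 2\tilde{b}(S_{\minor}))$: it is feasible because $2\tilde{\rvv}(S_{\minor})$ has margin $\ge 2$ on $S_{\minor}$ (offsetting the $-1$ the spurious term costs there) and $\ge 0$ on $S_{\major}$ (where the spurious term already supplies $+1$), and it gives $\|\rvw_{\inv}^\star\|^2 + w_{\sp}^{\star2} \le 4\|\tilde{\rvv}(S_{\minor})\|^2 + \nicefrac{1}{\spscale^2}$; note that the hypothesis $\tilde{\kappa}_2 \le \sqrt{\nicefrac{1}{4} - c_2^2}$ is exactly the assertion that this last quantity is $\le \|\tilde{\rvv}(S_{\major})\|^2$. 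For the lower bound I then argue in two steps. First (sign): if $w_{\sp}^\star \le 0$ the residual classifier separates $S_{\major}$ with margin $\ge 1 - w_{\sp}^\star\spscale \ge 1$ and $S_{\minor}$ with margin $\ge 1 + w_{\sp}^\star\spscale \ge 0$, so after rescaling it is admissible for $\tilde{\rvv}(S_{\major})$, whence $\|\rvw_{\inv}^\star\| \ge (1 - w_{\sp}^\star\spscale)\|\tilde{\rvv}(S_{\major})\| \ge \|\tilde{\rvv}(S_{\major})\|$; combined with (B) and the hypothesis this would strictly beat the optimal value (using uniqueness of $w_{\sp}^\star$ to exclude $w_{\sp}^\star = 0$), a contradiction, so $w_{\sp}^\star > 0$. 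Second (magnitude): with $w_{\sp}^\star > 0$ and $w_{\sp}^\star\spscale < 1$ (the case $\ge 1$ being trivial) the residual classifier separates all of $S$ with margin $\ge 1 - w_{\sp}^\star\spscale$, so $\|\rvw_{\inv}^\star\| \ge (1 - w_{\sp}^\star\spscale)\|\rvv(S)\|$; substituting into (B) and dividing by $\|\rvv(S)\|^2 = \|\tilde{\rvv}(S)\|^2$ yields $(1 - w_{\sp}^\star\spscale)^2 \le 4\tilde{\kappa}_1^2 + 4c_1^2 \le 4(\tilde{\kappa}_1 + c_1^2)$ (using $\tilde{\kappa}_1 \le 1$), i.e. $\spscale w_{\sp}^\star \ge 1 - 2\sqrt{\tilde{\kappa}_1 + c_1^2}$, which together with $w_{\sp}^\star > 0$ is the claimed bound with the outer $\max(\cdot,0)$.

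For the upper bound (assuming $\kappa_2 \le 1$) I would split on the sign of $w_{\sp}^\star$. If $w_{\sp}^\star \ge 0$ the residual classifier separates $S_{\minor}$ with margin $\ge 1 + w_{\sp}^\star\spscale$, so $\|\rvw_{\inv}^\star\| \ge (1 + w_{\sp}^\star\spscale)\|\rvv(S_{\minor})\|$; with (A) this forces $(1 + w_{\sp}^\star\spscale)^2\|\rvv(S_{\minor})\|^2 \le \|\rvv(S)\|^2$, i.e. $\spscale w_{\sp}^\star \le \nicefrac{1}{\kappa_1} - 1$. If $w_{\sp}^\star < 0$ the residual classifier separates $S_{\major}$ with margin $\ge 1 + |w_{\sp}^\star|\spscale$, so $\|\rvw_{\inv}^\star\| \ge (1 + |w_{\sp}^\star|\spscale)\|\rvv(S_{\major})\|$; with (A) and $\|\rvv(S_{\minor})\| \le \|\rvv(S_{\major})\|$ (which is $\kappa_2 \le 1$) this gives $|\spscale w_{\sp}^\star| \le \nicefrac{\|\rvv(S)\|}{\|\rvv(S_{\major})\|} - 1 \le \nicefrac{1}{\kappa_1} - 1$. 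Either way $|\spscale w_{\sp}^\star| \le \nicefrac{1}{\kappa_1} - 1$, and intersecting with the bound from (A) gives $|\spscale w_{\sp}^\star| \le \min(\nicefrac{1}{\kappa_1} - 1,\, \spscale\|\rvv(S)\|)$.

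The step I expect to be the main obstacle is the sign-and-bias bookkeeping in the lower bound: after rescaling $(\rvw_{\inv}^\star, b^\star)$ one must verify the margin inequalities hold on the correct subsets ($\ge 1$ versus $\ge 0$) so that the rescaled vector is genuinely admissible for the intended $\rvv(\cdot)$ or $\tilde{\rvv}(\cdot)$ program --- excluding $w_{\sp}^\star < 0$ rests on precisely the hypothesis $\tilde{\kappa}_2 \le \sqrt{\nicefrac{1}{4} - c_2^2}$, and the degenerate regime $|w_{\sp}^\star|\spscale > 1$ (where the rescaled residual is only admissible for the weaker program $\rvv(S_{\major})$) must be disposed of separately via construction (A) together with the empirically-justified near-equality $\|\rvv(\cdot)\| \approx \|\tilde{\rvv}(\cdot)\|$. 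The remaining manipulations are routine convexity-plus-feasibility arguments.
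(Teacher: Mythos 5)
Your proposal is correct and follows essentially the same route as the paper's proof: your construction (B) is exactly the paper's mixed classifier built from $\tilde{\rvv}(S_{\minor})$ rescaled to margin one, and your magnitude, sign, and upper-bound steps reproduce the paper's feasibility-versus-admissibility comparisons (including the use of $\kappa_2\leq 1$ to reduce the $\min$ over groups to $\|\rvv(S_{\minor})\|$). The one loose end you flag --- admissibility of the rescaled residual on $S_{\minor}$ when $w_{\sp}<0$ and $|w_{\sp}|\spscale>1$ --- is sidestepped in the paper by running the sign argument against $\rvv(S_{\major})$ (margin at least $1$ on $S_{\major}$ only, with no constraint on $S_{\minor}$) rather than $\tilde{\rvv}(S_{\major})$, at the cost of the same informal $\|\rvv\|\approx\|\tilde{\rvv}\|$ interchange that the paper invokes elsewhere to state the hypothesis in terms of $\tilde{\kappa}_2$ and $c_2$.
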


For readability, it helps to think of 
$c_1$ and $c_2$ as small constants here (also see remark below). Furthermore, for readability, one can also imagine that all the $\kappa$ terms are numerically similar to each other.

We make a few remarks below before providing the proof.

\begin{remark}
For the lower bound on $w_{\sp{}}$ to be positive, we need $c_1$ and $c_2$ to be small. This would be true when either $\spscale$ or $\|\rvv(S)\|$ (or $\|\tilde{\rvv}(S_{\major})\|$ ) is sufficiently large. This is intuitive: after all, if $\spscale$ is too small (say $0$) there is no spurious feature effectively and therefore the max-margin has no incentive to use it; similarly, if $\rvv(S)$ is too small (say $0$), then the max-margin has no incentive to use any feature besides the invariant feature, which is already quite cheap. 
\end{remark}

\begin{remark}
The above result is not intended to be a numerically tight/upper lower bound. In fact, the proof can be tightened in numerous places which we however avoid, to keep the result and the proof simple. The bound is rather meant to be instructive of the effect of the geometric skew (i.e., the gap between the max-margin norms on the minority and whole/majority dataset) on the spurious component.
\end{remark}

\begin{proof} We present the proof of lower bound first, followed by the upper bound.

\textbf{Proof of lower bound.} First, we will show that there exists a classifier of norm $1$ that relies on the spurious feature to create a sufficiently large margin. We'll let this classifier be of the form $\alpha \frac{1}{{\|\tilde{\rvv}(S_{\minor}) \|}}\left({ \tilde{\rvv}(S_{\minor})} \cdot \rvx_{\inv{}} + \tilde{b}(S_{\minor}) \right)+ \sqrt{1-\alpha^2} x_{\sp{}} $. 
By the definition of
$(\tilde{\rvv}(S_{\minor}), \tilde{b}(S_{\minor}))$, the margin of this classifier on any datapoint in $S_{\minor}$ is at least $ \frac{\alpha}{\|\tilde{\rvv}(S_{\minor})\|} - \sqrt{1-\alpha^2} \spscale$. Again, by the definition of $\tilde{\rvv}(S_{\minor})$,
the margin on $S_{\major}$ is at least $\sqrt{1-\alpha^2}\spscale$. Let us pick an $\alpha$ such that these two quantities are equal. Such an $\alpha$ would satisfy $\frac{\alpha}{\sqrt{1-\alpha^2}} = 2\| \tilde{\rvv}(S_{\minor})\| \spscale $. By plugging this back, we get that the resulting margin of this classifier on the whole dataset $S$ is at least $\frac{\spscale}{\sqrt{1+4 \| \tilde{\rvv}(S_{\minor}) \|^2 \spscale^2}}$. In other words, this also means the least norm classifier $\rvw$ with a margin of at least $1$ on $S$ has its norm upper bounded as:

\begin{equation}
\label{eq:geometric_skew_eq_ub}
    \|\rvw\| \leq \frac{\sqrt{1+4 \| \tilde{\rvv}(S_{\minor}) \|^2 \spscale^2}}{\spscale}.
\end{equation}\\

Now, assume that $\rvw$ (i.e., the least norm classifier with a margin of $1$ on $S$) is of the form $\rvw_{\inv{}} \rvx_{\inv{}} + w_{\sp{}} x_{\sp{}} + b$. First, we derive a lower bound  on $|w_{\sp}|$, and after that we'll show that $w_{\sp} > 0$. For this we'll consider two cases, one where $|w_{\sp{}}| \geq \frac{1}{\spscale}$ (and so we already have a lower bound) and another case where $|w_{\sp{}}| < \frac{1}{\spscale}$. In the latter case, we will need the invariant part of the max-margin classifier, namely
$\rvw_{\inv{}} \rvx_{\inv{}} +b$, to have to have a margin of at least $1-|w_{\sp{}}|\spscale$ on $S$; if it were any lesser, the contribution from the spurious component which is at most $|w_{\sp{}} x_{\sp{}}|$ will be unable to bump this margin up to $1$.
Now, for the invariant part of the max-margin classifier to have a margin of at least
$1-|w_{\sp{}}|\spscale$ (a non-negative quantity since $|w_{\sp{}}| \leq 1/\spscale$) on $S$, the norm of the (invariant part of the) classifier must be at least $(1-|w_{\sp{}}|\spscale)\|{\rvv}(S)\|$ (which follows from the definition of ${\rvv}(S)$). This implies,
\begin{equation}
    \|\rvw\| \geq (1-|w_{\sp{}}|\spscale)\|{\rvv}(S)\|.
\end{equation}

Combining this with Eq~\ref{eq:geometric_skew_eq_ub}, we get 
$(1-|w_{\sp{}}|\spscale)\|\rvv(S)\| \leq \frac{\sqrt{1+4 \| \tilde{\rvv}(S_{\minor}) \|^2 \spscale^2}}{\spscale}$. Rearranging this gives us the bound that $\spscale|w_{\sp{}}| \geq 1 - 2\sqrt{\frac{\| \tilde{\rvv}(S_{\minor})\|^2}{\| {\rvv}(S)\|^2} + \frac{1}{4\spscale^2\|{\rvv}(S)\|^2}}$. Finally note that ${\rvv}(S)$ is the same as $\tilde{\rvv}(S)$. Hence we can interchange these terms in the final result (which we've done for readability) to arrive at $|\spscale w_{\sp{}}| \geq 1-2\sqrt{\tilde{\kappa}_1+c_1^2}$). 

What remains now is to show that $w_{\sp{}} > 0$. For this we do the same argument as above but with a slight modification. 
First, if $w_{\sp{}} > 1/\spscale$, we are done. So, assume that $w_{\sp{}} \leq 1/\spscale$. Then, we can say that 
 the invariant part of the max-margin classifier i.e.,
$\rvw_{\inv{}} \rvx_{\inv{}} +b$, must achieve a margin of $1-w_{\sp{}}\spscale$ (a non-negative quantity since $w_{\sp{}} \leq 1/\spscale$) specifically on $S_{\major}$.   Then, by the definition of $\rvv(S_{\major})$, it follows that the norm of our overall max-margin classifier must be at least $(1-w_{\sp{}}\spscale)\|\rvv(S_{\major})\|$. Again, for the overall classifier to be the max-margin classifier, we need
$(1-|w_{\sp{}}|\spscale)\|\rvv(S_{\major})\| \leq \frac{\sqrt{1+4 \| \tilde{\rvv}(S_{\minor}) \|^2 \spscale^2}}{\spscale}$, which when rearranged gives us $\spscale w_{\sp{}} \geq 1 - \frac{\sqrt{4\spscale^2 \|\tilde{\rvv}(S_{\minor})\|^2 + 1}}{\spscale\| \tilde{\rvv}(S_{\major})\|}$. The R.H.S is at least $0$ when 
$\frac{1}{4}\left(1 -  \frac{1}{\| \tilde{\rvv}(S_{\major})\|^2\spscale^2}\right)\geq   \frac{\|\tilde{\rvv}(S_{\minor})\|^2}{\| \tilde{\rvv}(S_{\major})\|^2} $ (i.e., $\sqrt{\frac{1}{4} - c_2^2} \geq \tilde{\kappa}_2$). %
In other words when $\tilde{\kappa}_2 \leq \sqrt{\frac{1}{4} - c_2^2}$, we have $w_{\sp} \geq 0$.

\textbf{Proof of upper bound.}
The spurious component of the classifier $\rvw_{\inv{}}\rvx_{\inv{}} + w_{\sp{}} x_{\sp{}} + b$ positively contributes to the margin of one of the groups (i.e., one of $S_{\minor}$ and $S_{\major}$), and negatively contributes to the other group, depending on the sign of $w_{\sp{}}$. On whichever group the spurious component negatively contributes to the margin, the invariant part  of the classifier, $\rvw_{\inv{}}\rvx_{\inv{}} + b$, must counter this and achieve a margin of $1+|w_{\sp{}}|\spscale$. To manage this, we'd require $\|\rvw_{\inv{}} \| \geq (1+|w_{\sp{}}|\spscale) \min(\| \rvv(S_{\minor})\|,\| \rvv(S_{\major})\|)$. In other words, for the overall max-margin classifier $\rvw$, we have:
\begin{equation}
\label{eq:geometric_skews_lb}
    \| \rvw \| \geq (1+|w_{\sp{}}|\spscale) \min(\| \rvv(S_{\minor})\|,\| \rvv(S_{\major})\|).
\end{equation}

At the same time, we also know from the definition of $\rvv(S)$ that
\begin{equation}
    \| \rvw \| \leq \|\rvv(S) \|. 
\end{equation}

Combining the above two equations, we can say
$(1+|w_{\sp{}}|\spscale) \min(\| \rvv(S_{\minor})\|,\| \rvv(S_{\major})\|) \leq \|\rvv(S)\|$. Since we are given $\kappa_2 \leq 1$, it means that $\min(\| \rvv(S_{\minor})\|,\| \rvv(S_{\major})\|) = \| \rvv(S_{\minor})\|$, this simplifies to $(1+|w_{\sp{}}|\spscale) \| \rvv(S_{\minor})\| \leq \|\rvv(S)\|$, which when rearranged reaches the result $\spscale w_{\sp} \leq \frac{1}{\kappa_1} -1$.

To get the other upper bound here, observe that for $\rvw_{\inv{}}\rvx_{\inv{}} + w_{\sp{}} x_{\sp{}} + b$ to be the overall min-norm classifier, its $\ell_2$ norm, which is lower bounded by $|w_{\sp{}}|$ must not be larger than the $\ell_2$ norm of $\rvv(S)$. Hence $|w_{\sp}| \leq \|\rvv(S)\|$.
\end{proof}

\subsection{Proof of Theorem~\ref{thm:statistical-abstract} on failure due to statistical skews}
\label{sec:statistical-proof}

Below we state the full form of Theorem~\ref{thm:statistical-abstract} and its proof demonstrating the effect of statistical skews in easy-to-learn tasks. After that, we'll present a more precise analysis of 
the same in a 2D setting in Theorem~\ref{thm:statistical-full} (for exponential loss) and in Theorem~\ref{thm:statistical-logistic} (for logistic loss).

Our result below focuses on any easy-to-learn task and on a corresponding dataset where there are no geometric skews. Specifically, we consider a dataset where the invariant features have the same empirical distribution in both the majority subset (where $x_{\sp{}} \cdot y > 0$) and the minority subset (where $x_{\sp{}} \cdot y < 0$). As a result, in this setting the max-margin classifier would not rely on the spurious feature.
This allows us to focus on a setting
where we can isolate and study the effect of statistical skews.

For the sake of convenience, we focus on the exponential loss and under infinitesimal learning rate, and a classifier initialized to the origin. 

\begin{theorem}
\label{thm:statistical-full-abstract} (full form of Theorem~\ref{thm:statistical-abstract} )
Let $\mathbb{H}$ be the set of linear classifiers, $h(\rvx) = \rvw_{\inv} \rvx_{\inv} + w_{\sp{}} x_{\sp{}}$.  Consider  any task that satisfies all the constraints in Section~\ref{sec:easy-to-learn}. 
Consider a dataset $S$ drawn from $\gD$ such that the empirical distribution of $\rvx_{\inv{}}$ given $x_{\sp} \cdot y > 0$ is identical to the empirical distribution of $\rvx_{\inv{}}$ given $x_{\sp} \cdot y < 0$. 
Let $\rvw_{\inv{}}(t) \rvx_{\inv{}} + w_{\sp{}}(t) x_{\sp{}}$ be initialized to the origin, and trained with an infinitesimal rate to minimize the exponential loss on a dataset $S$. Then, for any $(\rvx, y) \in S$, we have:

\begin{align}
\Omega \left( \frac{  \ln \frac{c+ p }{c+\sqrt{p(1-p)}}}{\mathcal{M} \ln (t+1)} \right) \leq \frac{w_{\sp{}}(t) \spscale}{|\rvw_{\inv{}}(t)\cdot \rvx_{\inv{}}|} \leq \mathcal{O} \left( \frac{ \ln \frac{p}{1-p}}{\ln (t+1)} \right)
\end{align}
where:
\begin{itemize}
    \item $p$ denotes the empirical level of spurious correlation,  $p = \frac{1}{|S|}\sum_{(\rvx,y) \in S} \mathbf{1}[x_{\sp} \cdot y > 0]$ which without generality is assumed to satisfy
 $p \in [0.5, 1)$.
    \item $\mathcal{M}$ denotes the maximum value of the margin of the max-margin classifier on $S$ i.e.,  
 $\mathcal{M} = \max_{\rvx \in S} \hat{\rvw}\cdot \rvx$ where $\hat{\rvw}$ is the max-margin classifier on $S$.
    \item $c:= \frac{2(2\mathcal{M}-1)}{\spscale^2 }$
\end{itemize}
\end{theorem}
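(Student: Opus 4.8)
The plan is to collapse the gradient flow to a scalar ODE for the spurious weight by exploiting the no‑geometric‑skew hypothesis, then to read off the $\Theta(\ln t)$ growth of the invariant output from the implicit‑bias analysis of \citet{soudry18implicit,ji18risk}, and finally to divide.

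\textbf{Step 1: reduction to a scalar ODE.} Write $u(t):=\spscale\, w_{\sp}(t)$ and $q:=\nicefrac{p}{1-p}\ge 1$. Since the spurious coordinate takes only the values $\pm\spscale$, every point of $S_{\major}$ has $y\,x_{\sp}=\spscale$ and every point of $S_{\minor}$ has $y\,x_{\sp}=-\spscale$. Using the hypothesis that the empirical distribution of $\rvx_{\inv}$ is the same conditioned on $S_{\major}$ as on $S_{\minor}$, enumerate the distinct invariant values $\rvz_1,\dots,\rvz_m$ with labels $\tilde y_k$ (well‑defined by Constraint~\ref{con:no-noise}) and minority multiplicities $a_k$, so the majority multiplicities are $q\,a_k$. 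Substituting the gradient flow $\dot\rvw=\sum_{(\rvx,y)\in S}y\,\rvx\,e^{-y\,\rvw\cdot\rvx}$ into this accounting, and setting $R(t):=\sum_k a_k\,e^{-\tilde y_k\,\rvw_{\inv}(t)\cdot\rvz_k}>0$, one gets $L_{\major}(t)=q\,e^{-u(t)}R(t)$, $L_{\minor}(t)=e^{u(t)}R(t)$, hence
\[
\dot u(t)=\spscale^2\,R(t)\bigl(q\,e^{-u(t)}-e^{u(t)}\bigr),\qquad
\dot\rvw_{\inv}(t)=\bigl(e^{u(t)}+q\,e^{-u(t)}\bigr)\sum_k a_k\,\tilde y_k\,\rvz_k\,e^{-\tilde y_k\,\rvw_{\inv}(t)\cdot\rvz_k}.
\]
The second identity says that $\rvw_{\inv}(t)$ is, up to the time change $\tfrac{ds}{dt}=e^{u}+q\,e^{-u}\in[2\sqrt q,\,q+1]$, exactly gradient flow of the exponential loss on the reduced, spurious‑free data set $\{(\rvz_k,\tilde y_k)\}$ weighted by $a_k$; moreover the absence of a geometric skew makes $\rvv(S_{\minor})=\rvv(S_{\major})=\rvv(S)$, so Theorem~\ref{thm:geometric-full} (with $\kappa_1=\kappa_2=1$) forces the max‑margin classifier $\hat\rvw$ of $S$ to have $\hat w_{\sp}=0$, and its invariant part $\hat\rvw_{\inv}$ is the $\ell_2$ max‑margin direction for this reduced data set.

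\textbf{Step 2: the scalar ODE.} The field $u\mapsto q\,e^{-u}-e^{u}$ is positive for $u<u^\star:=\tfrac12\ln\tfrac{p}{1-p}$ and negative above it, and $u(0)=0<u^\star$; since $R(t)>0$ throughout, $u(t)$ is strictly increasing and stays in $[0,u^\star)$, which immediately gives the upper bound $\spscale\,w_{\sp}(t)=u(t)<\tfrac12\ln\tfrac{p}{1-p}$. Separating variables yields the exact relation
\[
\ln\frac{(\sqrt q+e^{u(t)})(\sqrt q-1)}{(\sqrt q-e^{u(t)})(\sqrt q+1)}=2\sqrt q\,\spscale^2\!\int_0^t R(s)\,ds ,
\]
and a short computation shows that $u(t)\ge\ln\frac{c+p}{c+\sqrt{p(1-p)}}$ is equivalent to $2\sqrt q\,\spscale^2\int_0^tR\ \ge\ \ln\!\bigl(1+\tfrac{2p}{c(\sqrt q+1)}\bigr)$, a fixed positive constant; one certifies this by lower‑bounding $R(t)$ over the initial phase in which every training margin is $\le 2\mathcal{M}-1$ — where, on a majority point, $y\,\rvw\cdot\rvx=\tilde y_k\,\rvw_{\inv}\cdot\rvz_k+u$ gives $R(t)\ge e^{-(2\mathcal{M}-1)+u(t)}\sum_k a_k$ — and integrating the ODE; it is exactly here that $c=\tfrac{2(2\mathcal{M}-1)}{\spscale^2}$ enters. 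Note $\ln\frac{c+p}{c+\sqrt{p(1-p)}}<u^\star$ precisely when $p>\tfrac12$, and both quantities vanish at $p=\tfrac12$, matching the claimed behaviour when there is no spurious correlation.

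\textbf{Step 3: invariant output and conclusion; main obstacle.} Applying \citet{soudry18implicit,ji18risk} to the reduced data set gives $\tilde\rvw_{\inv}(s)=\hat\rvw_{\inv}\ln s+o(\ln s)$; since $s(t)\in[2\sqrt q\,t,(q+1)t]$, this transports to $\rvw_{\inv}(t)=\hat\rvw_{\inv}\ln t+o(\ln t)$, so for any $(\rvx,y)\in S$, with $\mu(\rvx):=y\,\hat\rvw\cdot\rvx=y\,\hat\rvw_{\inv}\cdot\rvx_{\inv}\in[1,\mathcal{M}]$, we get $|\rvw_{\inv}(t)\cdot\rvx_{\inv}|=(1+o(1))\,\mu(\rvx)\ln t$, which lies between $(1-o(1))\ln t$ and $(1+o(1))\mathcal{M}\ln t$. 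Dividing the upper bound $u(t)<\tfrac12\ln\tfrac{p}{1-p}$ by the lower estimate on $|\rvw_{\inv}(t)\cdot\rvx_{\inv}|$ yields the stated $\mathcal{O}\!\bigl(\nicefrac{\ln\frac{p}{1-p}}{\ln(t+1)}\bigr)$ bound, and dividing $u(t)\ge\ln\frac{c+p}{c+\sqrt{p(1-p)}}$ (valid once $t$ passes the threshold from Step 2) by the upper estimate $|\rvw_{\inv}(t)\cdot\rvx_{\inv}|\le(1+o(1))\mathcal{M}\ln t$ yields the stated $\Omega\!\bigl(\nicefrac{\ln\frac{c+p}{c+\sqrt{p(1-p)}}}{\mathcal{M}\ln(t+1)}\bigr)$ bound. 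The conceptual core — the reduction of Step 1 and the monotonicity/separation of Step 2 — is short; the real work is making the error terms precise: pinning the constant $\ln\frac{c+p}{c+\sqrt{p(1-p)}}$ requires carefully delimiting the early phase, bounding $R(t)$ there, and matching it against the separated integral, and the estimate $\rvw_{\inv}(t)=\hat\rvw_{\inv}\ln t+o(\ln t)$ invokes the delicate part of the \citet{soudry18implicit} convergence analysis, which must be carried through the nonlinear but uniformly bi‑Lipschitz time change $s(t)$.
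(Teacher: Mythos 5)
Your route is essentially the paper's: reduce $w_{\sp}$ to a scalar ODE using the matched conditional distributions of $\rvx_{\inv{}}$, bound it above by the stationary value $\frac{1}{2\spscale}\ln\frac{p}{1-p}$ of the scalar field, integrate the separated ODE for the lower bound, and sandwich $|\rvw_{\inv{}}(t)\cdot\rvx_{\inv{}}|$ between $\Theta(\ln t)$ and $\Theta(\mathcal{M}\ln t)$ via \citet{soudry18implicit}. Step~1, Step~3, and the upper bound in Step~2 are sound, and two of your touches are actually cleaner than the paper's exposition: the time-change view of the invariant dynamics, and the explicit appeal to Theorem~\ref{thm:geometric-full} with $\kappa_1=\kappa_2=1$ to justify $\hat{w}_{\sp{}}=0$.

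The gap is in how you certify the lower bound on $\int_0^t R$. You propose to accumulate the integral only over an ``initial phase'' in which every training margin is at most $2\mathcal{M}-1$, using $R(t)\ge e^{-(2\mathcal{M}-1)+u(t)}\sum_k a_k$ there. But margins grow like $\mu(\rvx)\ln t$ with $\mu(\rvx)$ up to $\mathcal{M}$, so that phase ends at $t\approx e^{(2\mathcal{M}-1)/\mathcal{M}}=O(1)$, and your constant lower bound on $R$ during it carries the factor $e^{-(2\mathcal{M}-1)}$; the resulting $\int R$ is exponentially small in $\mathcal{M}$ and, fed through your separated-variables identity, would force a constant of order $e^{2\mathcal{M}}$ in place of the stated $c=\frac{2(2\mathcal{M}-1)}{\spscale^2}$. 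So the claim that ``it is exactly here that $c$ enters'' is not justified as written. The fix --- and what the paper does --- is to use the pointwise bound $R(t)\gtrsim (1+t)^{-2\mathcal{M}}$ for all sufficiently large $t$ (which follows from the same Soudry-type estimate $|\rvw_{\inv{}}(t)\cdot\rvx_{\inv{}}|\le 2\mathcal{M}\ln(1+t)$ you already invoke in Step~3) and integrate the power law over all time: $\int_0^t (1+s)^{-2\mathcal{M}}\,ds=\frac{1}{2\mathcal{M}-1}\bigl(1-(1+t)^{-(2\mathcal{M}-1)}\bigr)\ge\frac{1}{2(2\mathcal{M}-1)}$ eventually. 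Plugging that threshold into your exact relation is what produces $c=\frac{2(2\mathcal{M}-1)}{\spscale^2}$; with this substitution the rest of your argument goes through.
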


\begin{proof}
Throughout the discussion, we'll denote $\rvw_{\inv{}}(t)$ and $w_{\sp{}}(t)$ as just $\rvw_{\inv{}}$ and $w_{\sp}$ for readability.

Let $S_{\minor}$ and $S_{\major}$ denote the subset of datapoints in $S$ where $x_{\sp} \cdot y < 0$ and $x_{\sp} \cdot y > 0$ respectively. Let $\hat{\gD}_{\inv{}}$ denote the uniform distribution over $\rvx_{\inv{}}$ induced by drawing $\rvx$ uniformly from $S_{\minor}$. By the assumption of the theorem, this distribution would be the same if $\rvx$ was drawn uniformly from $S_{\major}$. Then, the loss function that is being minimized in this setting corresponds to:
\begin{align}
L(\rvw_{\inv{}}, w_{\sp{}}) = p\mathbb{E}_{\rvx_{\inv}\sim \hat{\gD}_{\inv}}\left[e^{-(\rvw_{\inv{}}\cdot \rvx_{\inv{}} + w_{\sp{}} \spscale)}\right] +  (1-p) \mathbb{E}_{\rvx_{\inv}\sim \hat{\gD}_{\inv}}\left[ e^{-(\rvx_{\inv{}} \cdot \rvw_{\inv{}}-w_{\sp{}} \spscale)}\right],
\end{align}
where $p \in [0.5, 1)$. Here, the first term is the loss on the majority dataset (where  $x_{\sp} = y\spscale$) and the second term is the loss on the minority dataset
(where  $x_{\sp} = -y\spscale$). 

The update on $w_{\sp{}}$ can be written as:

\begin{align}
\dot{w}_{\sp{}} = \mathbb{E}_{\rvx_{\inv} \sim \hat{\gD}_{\inv{}}} \left[e^{-\rvw_{\inv} \rvx_{\inv{}}}\right] \cdot \spscale \cdot \left( p e^{-w_{\sp} \spscale} - (1-p) e^{w_{\sp} \spscale}\right)
\end{align}

To study the dynamics of this quantity, we first bound the value of $\rvw_{\inv}(t) \rvx_{\inv{}}$. 

\textbf{Bounds on $\rvw_{\inv}(t) \rvx_{\inv{}}$} The result from \cite{soudry18implicit} states that we can write $\rvw(t) = \hat{\rvw} \ln (1+t) + \rvrho(t)$ where $\hat{\rvw}$ is the max-margin classifier and $\rvrho$ is a residual vector that is bounded as $\|\rvrho(t)\|_2 = O(\ln \ln t)$. Since the max-margin classifier here is of the form $\hat{\rvw} = (\hat{\rvw}_{\inv{}}, 0)$ (i.e., it only relies on the invariant feature), we can infer from this that $\rvw_{\inv{}}(t) = \hat{\rvw}_{\inv{}} \ln (1+t) + \rvrho^\dagger(t)$ where again $\|\rvrho^\dagger(t)\|_2 = O(\ln \ln t)$. For a sufficiently large $t$, we can say that $\ln \ln t \ll \ln (1+t)$. This would then imply that for all $\rvx \in S$, $|\rvw_{\inv{}}(t) \cdot \rvx_{\inv{}}| \in \left[ 0.5 \hat{\rvw}_{\inv{}}\rvx_{\inv{}}(t) \ln (1+t), 2 \hat{\rvw}_{\inv{}}\rvx_{\inv{}}(t) \ln (1+t)  \right]$. Since the max-margin classifier has a margin between $1$ and $\mathcal{M}$ on the training data, this implies that, for a sufficiently large $t$ and for all $\rvx \in S$:
\begin{align}
|\rvw_{\inv{}}(t) \cdot \rvx_{\inv{}}| \in [0.5 \ln(1+t), 2\mathcal{M} \ln(1+t)].
\end{align}

Next, we bound the dynamics of $w_{\sp{}}$.

\textbf{Upper bound on $w_{\sp{}}$.} To upper bound ${w}_{\sp{}}$, we first note that $\dot{w}_{\sp{}} =0$ only when $w_{\sp} = \frac{1}{2\spscale} \ln \frac{p}{1-p}$. Furthermore, $\dot{w}_{\sp{}}$ is a decreasing function in $w_{\sp{}}$. Hence, for any value of $w_{\sp}$ that is less than $\frac{1}{2\spscale} \ln \frac{p}{1-p}$, $\dot{w}_{\sp{}} \geq 0$ and for any that is greater than this value, $\dot{w}_{\sp{}} \leq 0$. So, we can conclude that when the system is initialized at $0$, it can never cross the point $w_{\sp} = \frac{1}{2\spscale} \ln \frac{p}{1-p}$. In other words, for all $t$, $w_{\sp}(t) \leq \frac{1}{2\spscale} \ln \frac{p}{1-p}$. Combining this with the lower bound on $\rvw_{\inv}(t)$, we get the desired result. 

\textbf{Lower bound on $w_{\sp{}}$.} We lower bound $\dot{w}_{\sp{}}$ via the upper bound on $w_{\sp}$ as:
\begin{align}
&\dot{w}_{\sp} \geq \mathbb{E}_{\rvx_{\inv} \sim \hat{\gD}_{\inv{}}} \left[e^{-\rvw_{\inv} \rvx_{\inv{}}}\right] \cdot \spscale \cdot \left( p e^{-w_{\sp} \spscale} - (1-p) \frac{p}{1-p}\right) \\
&= \mathbb{E}_{\rvx_{\inv} \sim \hat{\gD}_{\inv{}}} \left[e^{-\rvw_{\inv} \rvx_{\inv{}}}\right] \cdot \spscale \cdot \left( p e^{-w_{\sp} \spscale} - \sqrt{p(1-p)}\right). 
\end{align}

Next, since we have that for all $\rvx \in S$, $|\rvw_{\inv}\cdot \rvx_{\inv{}}| \leq 2\mathcal{M} \ln(t+1)$:

\begin{align}
    \dot{w}_{\sp{}} \geq \frac{1}{(t+1)^{2\mathcal{M}}}\spscale \cdot \left( p e^{-w_{\sp} \spscale} - \sqrt{p(1-p)}\right). 
\end{align}

Rearranging this and integrating, we get:
\begin{align}
    \int_{0}^{w_{\sp}} \frac{1}{p e^{-w_{\sp} \spscale} - \sqrt{p(1-p)}} dw_{\sp} & \geq \int_{0}^{t} \spscale \frac{1}{(1+t)^{2\mathcal{M}}} dt, \\
    \intertext{(Since $2\mathcal{M} \geq 2$, we can integrate the right hand side as below)}
    -\frac{\ln(p-e^{w_{\sp} \spscale}\sqrt{p(1-p)})}{\spscale \sqrt{p(1-p)}} + \frac{\ln(p-\sqrt{p(1-p)})}{\spscale \sqrt{p(1-p)}} & \geq  \frac{\spscale}{2\mathcal{M}-1}  \left( 1 - \frac{1}{(1+t)^{2\mathcal{M}-1}}\right),   \\
    \intertext{since for a sufficiently large $t$, the final  paranthesis involving $t$ will at least be half,}
         \ln\left(\frac{\sqrt{\frac{p}{1-p}}-1}{\sqrt{\frac{p}{1-p}}-e^{w_{\sp} \spscale}}\right) & \geq   \frac{1}{2}\frac{\sqrt{p(1-p)} \spscale^2}{2\mathcal{M}-1} ,
\intertext{we can further lower bound the right hand side by applying the inequality $x \geq \ln (x+1)$ for positive $x$,}
         \ln\left(\frac{\sqrt{\frac{p}{1-p}}-1}{\sqrt{\frac{p}{1-p}}-e^{w_{\sp} \spscale}}\right) & \geq   \ln\left(1+\frac{1}{2}\frac{\sqrt{p(1-p)} \spscale^2}{2\mathcal{M}-1}\right).
\end{align}

Taking exponents on both sides and rearranging,

\begin{align}
     e^{w_{\sp} \spscale}     & \geq 
    \sqrt{\frac{p}{1-p}} -\frac{\sqrt{\frac{p}{1-p}}-1}{1+\frac{1}{2}\frac{\sqrt{p(1-p)} \spscale^2}{(2\mathcal{M}-1)} }\\
    w_{\sp} &\geq \frac{1}{\spscale}  \ln \frac{\frac{2(2\mathcal{M}-1)}{\spscale^2 }+ p }{\frac{2(2\mathcal{M}-1)}{\spscale^2 }+\sqrt{p(1-p)}}.
\end{align}
 Combining this with the upper bound on $w_{\inv}(t)$, we get the lower bound on $w_{\sp}(t)/w_{\inv}(t)$.

\end{proof}

\subsection{Precise analysis of statistical skews for a 2D setting under exponential loss}

We now consider the 2D dataset $\gD_{\text{2-dim}}$ considered in the main paper, with the spurious feature scale set as $\spscale = 1$, and provide a more precise analysis of the dynamics under exponential loss. This analysis
is provided for the sake of completeness as the proof is self-contained and does not rely on the result of \citet{soudry18implicit,ji18risk}.  In the next section, we perform a similar analysis for logistic loss. 

\begin{theorem}
\label{thm:statistical-full}
Under the exponential loss with infinitesimal learning rate, a linear classifier $w_{\inv{}}(t) x_{\inv{}} + w_{\sp{}}(t) x_{\sp{}}$ initialized to the origin and trained on $\gD_{\text{2-dim}}$ with $\spscale = 1$ satisfies:
\begin{align}
 \frac{\ln\left( \nicefrac{(1+2p)}{(3-2p)}\right)}{\ln(1+3\max(t,1))} 
\leq 
\frac{w_{\sp{}}(t)}{w_{\inv{}}(t)} \leq 
\frac{\ln \left( \nicefrac{p}{(1-p)}\right)}{\ln(1+2t)}, \quad \text{where $p := Pr_{\gD_{\text{2-dim}}}[x_{\sp}\cdot y > 0] \in [0.5, 1]$.}
\end{align}
\end{theorem}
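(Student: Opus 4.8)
The plan is to solve the gradient-flow dynamics in closed form and then read off both inequalities by elementary estimates on logarithms. Since the task is $\gD_{\text{2-dim}}$ with $\spscale=1$, we have $x_{\inv{}}=y$ and $x_{\sp{}}=\pm y$, so the exponential loss of $h(\rvx)=w_{\inv{}}x_{\inv{}}+w_{\sp{}}x_{\sp{}}$ is $L(w_{\inv{}},w_{\sp{}})=p\,e^{-(w_{\inv{}}+w_{\sp{}})}+(1-p)\,e^{-(w_{\inv{}}-w_{\sp{}})}$, and continuous-time gradient descent obeys $\dot w_{\inv{}}=p\,e^{-(w_{\inv{}}+w_{\sp{}})}+(1-p)\,e^{-(w_{\inv{}}-w_{\sp{}})}$ and $\dot w_{\sp{}}=p\,e^{-(w_{\inv{}}+w_{\sp{}})}-(1-p)\,e^{-(w_{\inv{}}-w_{\sp{}})}$. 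The crucial move is to pass to $u=w_{\inv{}}+w_{\sp{}}$ and $v=w_{\inv{}}-w_{\sp{}}$, in which the system decouples: $\dot u=2p\,e^{-u}$ and $\dot v=2(1-p)\,e^{-v}$, equivalently $\frac{d}{dt}e^{u}=2p$ and $\frac{d}{dt}e^{v}=2(1-p)$. With the origin initialization, integrating gives $e^{u(t)}=1+2pt$ and $e^{v(t)}=1+2(1-p)t$, hence
\[
w_{\inv{}}(t)=\tfrac12\ln\big((1+2pt)(1+2(1-p)t)\big),\qquad w_{\sp{}}(t)=\tfrac12\ln\frac{1+2pt}{1+2(1-p)t}.
\]
This derivation is fully self-contained, with no appeal to the implicit-bias results of \citet{soudry18implicit,ji18risk}, as the theorem promises.

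For the upper bound I would note that $g(t):=\ln\frac{1+2pt}{1+2(1-p)t}$ satisfies $g'(t)=\frac{2(2p-1)}{(1+2pt)(1+2(1-p)t)}\ge0$ with $g(t)\to\ln\frac{p}{1-p}$, so $2w_{\sp{}}(t)=g(t)\le\ln\frac p{1-p}$; and $(1+2pt)(1+2(1-p)t)=1+2t+4p(1-p)t^2\ge1+2t$, so $2w_{\inv{}}(t)\ge\ln(1+2t)$. Dividing yields $w_{\sp{}}(t)/w_{\inv{}}(t)\le\ln\frac p{1-p}\big/\ln(1+2t)$.

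The lower bound is the delicate part, and I would split on $t\le1$ versus $t\ge1$, writing $c_0:=\ln\frac{1+2p}{3-2p}\ge0$. First, the same expansion shows $w_{\inv{}}(t)\le\ln(1+3\max(t,1))$ for all $t$: for $t\ge1$ one has $(1+2pt)(1+2(1-p)t)=1+2t+4p(1-p)t^2\le1+6t+9t^2=(1+3t)^2$ since $4p(1-p)\le1$, and for $t\le1$ one has $(1+2pt)(1+2(1-p)t)\le1+2+1=4$. Second, a short computation using the identity $(1-q)(1+2q)^2-q(3-2q)^2=(1-2q)^3$ with $q=1-p$ gives the two numerical facts $(1+2p)(3-2p)=3+4p-4p^2\le4$ and $\big(\tfrac{1+2p}{3-2p}\big)^2\le\tfrac p{1-p}$, i.e. $\ln\tfrac p{1-p}\ge2c_0$. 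For $t\le1$, $\phi(t):=w_{\sp{}}(t)/w_{\inv{}}(t)$ is non-increasing (one checks $\phi'(t)\le0$ by showing $s\mapsto\frac{(1+st)\ln(1+st)}{s}$ is increasing, which reduces to $r>\ln(1+r)$), so $\phi(t)\ge\phi(1)=c_0\big/\ln((1+2p)(3-2p))\ge c_0\big/\ln4=c_0\big/\ln(1+3\max(t,1))$. For $t\ge1$ I would show the auxiliary function $H(t):=w_{\sp{}}(t)\ln(1+3t)-c_0\,w_{\inv{}}(t)$ is nonnegative: $H(1)=\tfrac{c_0}2\ln\tfrac4{(1+2p)(3-2p)}\ge0$; $H(t)\to+\infty$ as $t\to\infty$ because $w_{\sp{}}(t)\to\tfrac12\ln\tfrac p{1-p}$, $w_{\inv{}}(t)=\ln t+O(1)$, $\ln(1+3t)=\ln t+O(1)$ and $\tfrac12\ln\tfrac p{1-p}\ge c_0$ (the case $p=\tfrac12$ being trivial since then $w_{\sp{}}\equiv0$ and $c_0=0$); and $H'(t)=\frac{(2p-1)\ln(1+3t)-c_0(1+4p(1-p)t)}{(1+2pt)(1+2(1-p)t)}+\frac{3w_{\sp{}}(t)}{1+3t}$, where the numerator of the first summand is concave in $t$ (its derivative $\frac{3(2p-1)}{1+3t}-4c_0p(1-p)$ is decreasing) so it changes sign at most twice, which is enough to pin down the qualitative shape of $H$ (rises, possibly falls, then rises to $+\infty$) and conclude $H\ge\min(H(1),\lim_{t\to\infty}H(t))\ge0$.

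The step I expect to be the main obstacle is exactly this last one for $t\ge1$: the claimed lower bound is essentially sharp — as $p\to\tfrac12$ both sides vanish at the same first order, and the bound is tight both at $t=1$ and as $t\to\infty$ — so there is no room for lossy estimates, and in particular one cannot bound the numerator and denominator of $w_{\sp{}}/w_{\inv{}}$ separately (the factor of two between $w_{\inv{}}(t)\asymp\ln t^{2}$ and $\ln(1+3t)\asymp\ln t$ must be absorbed exactly by $\ln\tfrac p{1-p}\ge2c_0$, equivalently $(2p-1)^3\ge0$). Everything else — the closed form, the upper bound, and the two algebraic inequalities — is routine.
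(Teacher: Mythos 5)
Your closed form for $w_{\inv{}}(t)$ and $w_{\sp{}}(t)$, your upper bound, and your handling of $t\le 1$ (monotonicity of the ratio reduced to $r>\ln(1+r)$, then $(1+2p)(3-2p)\le 4$) all coincide with the paper's proof, so up to that point you are on the same track. The divergence is at $t\ge 1$, and there your diagnosis is correct --- indeed sharper than the paper. The paper bounds numerator and denominator separately: it uses $w_{\sp{}}(t)\ge w_{\sp{}}(1)=\tfrac12\ln\tfrac{1+2p}{3-2p}$ together with the claim $\ln(1+2pt)+\ln(1+2(1-p)t)\le\ln(1+2t+4p(1-p)t)\le\ln(1+3t)$. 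But the product $(1+2pt)(1+2(1-p)t)$ equals $1+2t+4p(1-p)t^2$, not $1+2t+4p(1-p)t$, and $1+2t+4p(1-p)t^2\le 1+3t$ fails as soon as $4p(1-p)t>1$ (e.g.\ $p=\tfrac34$, $t=100$ gives $\ln 7701$ against $\ln 301$). So the separate-bounding route you reject is exactly the route the paper takes, and it does not go through; the factor-of-two mismatch between $2w_{\inv{}}(t)\sim 2\ln t$ and $\ln(1+3t)\sim\ln t$, which must be absorbed by $\ln\tfrac{p}{1-p}\ge 2\ln\tfrac{1+2p}{3-2p}$ (your identity $(2p-1)^3\ge 0$), is a genuine obstruction, and identifying it is the most valuable part of your proposal.

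The gap is that your own completion of the $t\ge1$ case is not yet a proof. To deduce $H(t):=w_{\sp{}}(t)\ln(1+3t)-c_0\,w_{\inv{}}(t)\ge0$ from $H(1)\ge0$ and $H(t)\to+\infty$ you must exclude a dip below zero on $(1,\infty)$, and the concavity of $N(t):=(2p-1)\ln(1+3t)-c_0(1+4p(1-p)t)$ controls the sign changes of only the first summand of $H'$; the second summand $3w_{\sp{}}(t)/(1+3t)$ is strictly positive and of comparable magnitude, so the sign pattern of $H'$ --- hence the claimed shape ``rises, possibly falls, then rises'' --- is not determined by what you have written. (Note also that $N(0)=-c_0<0$ and $N(t)\to-\infty$ for $p<1$, so it is not even clear the first summand is ever positive.) As you anticipate, this step needs a genuinely new estimate --- for instance a differential inequality for $H$ itself, or a direct monotonicity comparison of $w_{\sp{}}(t)/c_0$ with $w_{\inv{}}(t)/\ln(1+3t)$ --- and at present neither your sketch nor the paper supplies one.
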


\begin{proof}
Throughout the proof, we'll drop the argument $t$ from $w_{\inv{}}(t)$ and $w_{\sp{}}(t)$ for convenience.

The loss function that is being minimized in this setting corresponds to:
\begin{align}
L(w_{\inv{}}, w_{\sp{}}) = pe^{-(w_{\inv{}} + w_{\sp{}})} +  (1-p) e^{-(w_{\inv{}}-w_{\sp{}})},
\end{align}
where $p \geq 0.5$. Here, the first term is the loss on the majority dataset (where  $x_{\sp} = y\spscale$) and the second term is the loss on the minority dataset
(where  $x_{\sp} = -y\spscale$).

Now the updates on $w_{\inv{}}$ and $w_{\sp{}}$ are given by:

\begin{align}
\dot{w}_{\inv{}} &= p e^{-(w_{\inv{}} + w_{\sp{}})} +  (1-p) e^{-(w_{\inv{}}-w_{\sp{}})} \\
\dot{w}_{\sp{}} &= p e^{-(w_{\inv{}} + w_{\sp{}})} -  (1-p) e^{-(w_{\inv{}}-w_{\sp{}})}, \\
\end{align}
which means:

\begin{align}
    \frac{d (w_{\inv{}} + w_{\sp{}})}{dt} &= 2p e^{-(w_{\inv{}} + w_{\sp{}})} \\
    \frac{d (w_{\inv{}} - w_{\sp{}})}{dt} &= 2(1-p) e^{-(w_{\inv{}} + w_{\sp{}})} \\
\end{align}

Thus, by rearranging and integrating we get:

\begin{align}
   w_{\inv{}} + w_{\sp{}} &= \ln(1+2pt)  \\
   w_{\inv{}} - w_{\sp{}} &= \ln(1+2(1-p)t)  \\
   w_{\inv{}} &= 0.5(\ln(1+2pt) + \ln(1+2(1-p)t)) \\
    w_{\sp{}} &= 0.5(\ln(1+2pt) - \ln(1+2(1-p)t)). \\
\end{align}

Now let us define $\beta(t) = w_{\sp{}}/w_{\inv{}}$:
\begin{align}
\label{eq:setting4-inequality}
    \beta(t) := \frac{w_{\sp{}}(t)}{w_{\inv{}}(t)} = \frac{\ln(1+2pt) - \ln(1+2(1-p)t)}{\ln(1+2pt) + \ln(1+2(1-p)t)} .
\end{align}

To bound this quantity, we'll consider two cases, $t \geq 1$ and $t < 1$. First let us consider $t \geq 1$. We begin by noting that the numerator $w_{\sp{}}(t)$ is increasing with time $t$. This is because, 

\begin{align}
    w_{\sp{}}(t) &= \ln\frac{1+2pt}{1+2(1-p)t} \\
    & = \ln\left(1 + \frac{2(2p-1)t}{1+2(1-p)t} \right) \\
    & = \ln\left(1 + \frac{2(2p-1)}{\frac{1}{t}+2(1-p)} \right).\\
\end{align}
Here, the term $\frac{2(2p-1)}{\frac{1}{t}+2(1-p)}$ is increasing with $t$ due to the fact that the numerator is non-negative ($p \geq 0.5$) and the denominator is decreasing with $t$. So, given that $w_{\sp{}}(t)$ is increasing, we can say that for all $t \geq 1$:

\begin{align}
    \beta(t) \geq \frac{w_{\sp{}}(1)}{w_{\inv{}}(t)} =  \frac{\ln \frac{1+2p}{3-2p}}{\ln(1+2(1-p)t) + \ln(1+2pt)} \geq \frac{\ln \frac{1+2p}{3-2p}}{\ln(1+3t)}.
\end{align}
Here we have used the fact that the denominator can be upper bounded as $\ln(1+2(1-p)t) + \ln(1+2pt) \leq \ln(1+2t+4(1-p)pt) \leq \ln(1+3t)$.

Now, for any $t \leq 1$, we can show that $\beta(t) \geq \beta(1) =  \frac{\ln \frac{1+2p}{3-2p}}{\ln (3+4(p-p^2))} \geq  \frac{\ln \frac{1+2p}{3-2p}}{\ln 4} $. This follows if we can show that 
$\beta(t)$ is decreasing for $t \geq 0$. Taking its derivative with respect to time, we get:

\begin{align}
\dot{\beta} = &\frac{(\ln(1+2pt)+\ln(1+2(1-p)t) )\left(\frac{2p}{1+2pt} - \frac{2(1-p)}{(1+2(1-p)t)}\right) 
}{(\ln(1+2pt)+ \ln(1+2(1-p)t) )^2} \\
& -\frac{ 
 (\ln(1+2pt) - \ln(1+2(1-p)t))\left(\frac{2p}{(1+2pt)} + \frac{2(1-p)}{1+2(1-p)t} \right) 
}{(\ln(1+2pt) + \ln(1+2(1-p)t))^2} \\
=&2 \cdot \frac{ \ln(1+2(1-p)t) \frac{2p}{1+2pt} - \ln(1+2pt) \frac{2(1-p)}{1+2(1-p)t}
}{(\ln(1+2pt) +\ln(1+2(1-p)t))^2} \\
=&2 \cdot \frac{ \ln(1+2(1-p)t) \frac{1}{\frac{1}{\frac{1}{2p} + t}} - \ln(1+2pt) \frac{1}{\frac{1}{\frac{1}{2(1-p)} + t}}
}{(\ln(1+2pt) +\ln(1+2(1-p)t))^2} \\
\end{align}

The sign of the above quantity is equal to the sign of:

\begin{align}
    & \ln(1+2(1-p)t) \left(\frac{1}{2(1-p)} + t\right) - \ln(1+2pt)\left(\frac{1}{2p} + t\right) \\
    &= \underbrace{\ln\left(\frac{1}{2(1-p)}+t\right) \left(\frac{1}{2(1-p)} + t\right) + \ln(\frac{1}{2(1-p)}) \left(\frac{1}{2(1-p)} + t\right) }_{:= f\left(\frac{1}{2(1-p)}\right)} \\
    & -\underbrace{ \ln\left(\frac{1}{2p}+t\right)\left(\frac{1}{2p} + t\right) - \ln \frac{1}{2p} \left(\frac{1}{2p} + t\right)}_{:= f\left(\frac{1}{2p}\right)}
\end{align}
Now, we show that $f(x) =(x+t) \ln(x+t) - (x+t)\ln x = (x+t) \ln\left(1+\frac{t}{x} \right)$ is a non-increasing function:

\begin{align}
    f'(x) &= \ln \left( 1 +\frac{t}{x} \right) + \frac{x+t}{1+\frac{t}{x}} \cdot \frac{-t}{x^2}\\
    & = \ln \left( 1 +\frac{t}{x} \right) - \frac{t}{x}\\
    & \leq \frac{t}{x} - \frac{t}{x} \leq 0.
 \end{align}
 
 Now since $p \geq 0.5$, and $f$ is non-increasing, $f\left(\frac{1}{2(1-p)}\right) - f\left(\frac{1}{2p}\right) \leq 0$. Subsequently, $\dot{\beta} \leq 0$. Therefore, $\beta(t) \geq \beta(1)$ for any $t \in [0,1]$.

\textbf{Upper bound.}
For an upper bound on $\beta(t)$, we note that since $w_{\sp}(t)$ is always increasing  $w_{\sp}(t) \leq \lim_{t \to \infty} w_{\sp}(t) = \ln \left(\frac{p}{1-p} \right)$. On the other hand $w_{\inv}(t) = \ln(1+2t+4p(1-p)t^2) \geq \ln(1+2t)$. Combining these inequalities, we get:

\begin{align}
\beta(t) \leq \frac{\ln \left( \frac{1}{p} - 1\right)}{\ln(1+2t)}.
\end{align}

\end{proof}

\subsection{Analysis of statistical skews for a 2D setting under logistic loss}

While the Theorem~\ref{thm:statistical-abstract} and Theorem~\ref{thm:statistical-full} were concerned with the exponential losses, as noted in \cite{soudry18implicit}, the dynamics under logistic loss are similar (although harder to analyze). For the sake of completeness, we show similar results for logistic loss in the same 2D setting as Theorem~\ref{thm:statistical-full}.

\begin{theorem}
\label{thm:statistical-logistic}
Under the logistic loss with infinitesimal learning rate, a linear classifier $w_{\inv{}}(t) x_{\inv{}} + w_{\sp{}}(t) x_{\sp{}}$ initialized to the origin and trained on $\gD_{\text{2-dim}}$ with $\spscale = 1$ satisfies for a sufficiently large $t$ (where $p := Pr_{\gD_{\text{2-dim}}}[x_{\sp}\cdot y > 0] \in [0.5, 1]$):
\begin{align}
\min \left( 1, \frac{\frac{1}{2}\ln \left( \frac{2}{3-2p}\right)}{\ln(t+1)}\right)
\leq 
\frac{w_{\sp{}}(t)}{w_{\inv{}}(t)} \leq \frac{\frac{1}{2} \ln \frac{1-p}{p}}{\ln (0.5 t+1)}.
\end{align}
\end{theorem}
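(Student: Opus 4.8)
The plan is to run the exact analogue of the exponential-loss argument behind Theorem~\ref{thm:statistical-full}, using that on $\gD_{\text{2-dim}}$ (with $\spscale=1$ and $x_{\inv}=y$) the loss collapses to a two-atom objective: a majority atom of mass $p$ carrying margin $w_{\inv}+w_{\sp}$ and a minority atom of mass $1-p$ carrying margin $w_{\inv}-w_{\sp}$. Writing the logistic loss $L = p\,\ell(w_{\inv}+w_{\sp})+(1-p)\,\ell(w_{\inv}-w_{\sp})$ with $\ell(z)=\ln(1+e^{-z})$, $\ell'(z)=-1/(1+e^z)$, the continuous-time flow is
\begin{align*}
\dot w_{\inv}=\frac{p}{1+e^{w_{\inv}+w_{\sp}}}+\frac{1-p}{1+e^{w_{\inv}-w_{\sp}}},\qquad
\dot w_{\sp}=\frac{p}{1+e^{w_{\inv}+w_{\sp}}}-\frac{1-p}{1+e^{w_{\inv}-w_{\sp}}}.
\end{align*}
First I change variables to $a:=w_{\inv}+w_{\sp}$ and $b:=w_{\inv}-w_{\sp}$, which decouples the system into $\dot a = 2p/(1+e^a)$ and $\dot b = 2(1-p)/(1+e^b)$. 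Multiplying by $1+e^x$ and recognizing $\tfrac{d}{dt}(x+e^x)$, I integrate with $a(0)=b(0)=0$ to get the implicit relations $a(t)+e^{a(t)}=1+2pt$ and $b(t)+e^{b(t)}=1+2(1-p)t$. Since $x\mapsto x+e^x$ is strictly increasing and $p\ge 1-p$, this already yields $a(t)\ge b(t)\ge 0$, so $w_{\sp}(t)=(a-b)/2\ge 0$ and $\beta(t):=w_{\sp}(t)/w_{\inv}(t)=(a-b)/(a+b)\in[0,1]$; this last fact is why the lower bound is stated with a $\min(1,\cdot)$.

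\textbf{Log-sandwich for $a$ and $b$.} In contrast to the exponential case, where the analogue was the explicit $e^a=1+2pt$, the map $x\mapsto x+e^x$ has no elementary inverse, so the next step is to trap $a$ and $b$ between explicit logarithms. From the elementary bounds $e^x\le x+e^x\le 2e^x$ for $x\ge 0$ I get $\ln\tfrac{1+2pt}{2}\le a(t)\le\ln(1+2pt)$ and similarly for $b$; and for ``sufficiently large $t$'' the linear term in $x+e^x=s$ is negligible against $e^x$, sharpening these to $a(t)=\ln(1+2pt)-o(1)$ and $b(t)=\ln(1+2(1-p)t)-o(1)$. Combining the upper bounds with $4p(1-p)\le 1$ gives $w_{\inv}(t)=\tfrac12(a+b)\le\tfrac12\ln\!\big((1+2pt)(1+2(1-p)t)\big)\le\ln(1+t)$; the sharpened lower bounds give $w_{\inv}(t)\ge\tfrac12\ln\!\big((1+2pt)(1+2(1-p)t)\big)-o(1)\ge\ln(\tfrac t2+1)$ for large $t$. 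For the numerator, $w_{\sp}(t)=\tfrac12(a-b)=\tfrac12\ln\tfrac{1+2pt}{1+2(1-p)t}+o(1)\to\tfrac12\ln\tfrac{p}{1-p}$.

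\textbf{Assembling the bounds.} For the upper bound I combine $w_{\sp}(t)\le\tfrac12\ln\tfrac{p}{1-p}$ (up to the lower-order terms absorbed by the ``sufficiently large $t$'' hypothesis) with $w_{\inv}(t)\ge\ln(\tfrac t2+1)$ to conclude $\beta(t)\le\big(\tfrac12\ln\tfrac{p}{1-p}\big)/\ln(\tfrac t2+1)$. For the lower bound I use $w_{\inv}(t)\le\ln(1+t)$ together with a lower bound on $w_{\sp}(t)$; rather than a monotonicity shortcut as in Theorem~\ref{thm:statistical-full}, the clean route is to note $w_{\sp}(t)\to\tfrac12\ln\tfrac{p}{1-p}$ and that $\tfrac12\ln\tfrac{p}{1-p}>\tfrac12\ln\tfrac{2}{3-2p}$ for every $p\in(\tfrac12,1)$ — equivalently $p(3-2p)>2(1-p)$, i.e. $(2p-1)(2-p)>0$ — so $w_{\sp}(t)\ge\tfrac12\ln\tfrac{2}{3-2p}$ for all sufficiently large $t$ and hence $\beta(t)\ge\big(\tfrac12\ln\tfrac{2}{3-2p}\big)/\ln(1+t)$. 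The edge case $p=\tfrac12$ is trivial: then $a\equiv b$, $w_{\sp}\equiv 0$ and both sides vanish.

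\textbf{Main obstacle.} The crux is the transcendental relation $x+e^x=s$: one must control the correction $g^{-1}(s)-\ln s$ precisely and uniformly in $p$ in order to land the clean denominators $\ln(\tfrac t2+1)$ and $\ln(1+t)$ alongside the stated numerators, rather than merely $\Theta(\ln t)$. A related subtlety is that, unlike the exponential case, $w_{\sp}(t)$ is not globally monotone in $t$ (it slightly overshoots its limit before settling), so the ``$w_{\sp}(t)\ge w_{\sp}(1)$'' device of Theorem~\ref{thm:statistical-full} is unavailable and the lower bound must be routed through the limiting value and the strict gap above; nailing down a concrete ``sufficiently large $t$'' threshold — which degrades both as $p\to\tfrac12$ and as $p\to 1$ — is the remaining bookkeeping.
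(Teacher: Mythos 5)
Your proposal follows essentially the same route as the paper's proof: the same decoupling change of variables to $w_{\inv{}}\pm w_{\sp{}}$, the same integrated implicit relations $x+e^{x}=1+2pt$ (resp.\ $1+2(1-p)t$), and the same logarithmic sandwiching of these transcendental equations to produce the stated numerators and denominators, with only minor local differences (you get $w_{\sp{}}\ge 0$ from monotonicity of $x\mapsto x+e^{x}$ rather than from the sign of $\dot w_{\sp{}}$ at $w_{\sp{}}=0$, and you reach the lower bound on $w_{\sp{}}$ through its limit $\tfrac12\ln\tfrac{p}{1-p}$ and the strict gap over $\tfrac12\ln\tfrac{2}{3-2p}$ instead of the paper's direct estimate, which also renders the $\min(1,\cdot)$ unnecessary). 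You are also right that the numerator of the upper bound should read $\tfrac12\ln\tfrac{p}{1-p}$; the $\tfrac{1-p}{p}$ in the theorem statement is a sign slip in the paper.
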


\begin{proof}
Here, the loss function is of the form:
\begin{align}
L(w_{\inv{}}, w_{\sp{}}) = p\log(1+e^{-(w_{\inv{}} + w_{\sp{}})}) +  (1-p) \log(1+e^{-(w_{\inv{}}-w_{\sp{}})})
\end{align}
where $p \geq 0.5$. Now the updates on $w_{\inv{}}$ and $w_{\sp{}}$ are:

\begin{align}
\dot{w}_{\inv{}} &= p \frac{e^{-(w_{\inv{}} + w_{\sp{}})}}{1+e^{-(w_{\inv{}} + w_{\sp{}})}}  +  (1-p) \frac{e^{-(w_{\inv{}}-w_{\sp{}})}}{1+e^{-(w_{\inv{}}-w_{\sp{}})}} \\
\dot{w}_{\sp{}} &= p \frac{e^{-(w_{\inv{}} + w_{\sp{}})}}{1+e^{-(w_{\inv{}} + w_{\sp{}})}}  -  (1-p) \frac{e^{-(w_{\inv{}}-w_{\sp{}})}}{1+e^{-(w_{\inv{}}-w_{\sp{}})}},\\
\end{align}
which means:

\begin{align}
    \frac{d (w_{\inv{}} + w_{\sp{}})}{dt} &= 2p  \frac{e^{-(w_{\inv{}} + w_{\sp{}})}}{1+e^{-(w_{\inv{}} + w_{\sp{}})}} =  2p \frac{1}{1+e^{(w_{\inv{}} + w_{\sp{}})}}\\
    \frac{d (w_{\inv{}} - w_{\sp{}})}{dt} &= 2(1-p) \frac{e^{-(w_{\inv{}}-w_{\sp{}})}}{1+e^{-(w_{\inv{}}-w_{\sp{}})}} = 2(1-p) \frac{1}{1+e^{(w_{\inv{}}-w_{\sp{}})}}. \\
\end{align}

Solving for this, we get:
\begin{align}
   w_{\inv{}} + w_{\sp{}} + e^{w_{\inv{}}+w_{\sp{}}} &= 2pt+1 \numberthis \label{eq:logistic_dynamics_1}\\
    w_{\inv{}} - w_{\sp{}} + e^{w_{\inv{}}-w_{\sp{}}} &= 2(1-p)t+1. \numberthis \label{eq:logistic_dynamics_2}\\
\end{align}

We first derive some useful inequalities. 

First, we argue that forall $t$,

\begin{equation}
\label{eq:sp_pos}
      \quad w_{\sp{}}(t) \geq 0.
\end{equation}

This is because at the point where $w_{\sp}(t) = 0$, $\dot{w}_{\sp}(t) \geq \frac{2p-1}{1+e^{w_{\inv}}} \geq 0$ (since $p \geq 0.5$). Hence, the system can never reach values of $w_{\sp} < 0$.

Next, we have for all $t$,
\begin{equation}
\label{eq:inv_ub}
     \quad w_{\inv{}}(t) \in [0, \ln(t+1)].
\end{equation}
We can show this by summing up Eq~\ref{eq:logistic_dynamics_1} and ~\ref{eq:logistic_dynamics_2}
\begin{align}
&&    2w_{\inv{}} + e^{w_{\inv{}}+w_{\sp{}}} + e^{w_{\inv{}}-w_{\sp{}}} &= 2t+2\\
 \implies  && 2w_{\inv{}} + 2\sqrt{e^{w_{\inv{}}+w_{\sp{}}} \cdot e^{w_{\inv{}}-w_{\sp{}}}} &\leq 2t+2\\
 \implies  && 2w_{\inv{}} + 2e^{w_{\inv{}}}& \leq 2t+2\\
 \intertext{and since $\dot{w}_{\inv{}} \geq 0$, and $w_{\inv{}}(t) = 0$, $w_{\inv}(t) \geq 0$,}
 &&   2e^{w_{\inv{}}}& \leq 2t+2\\
\end{align}

Next, we show:
\begin{equation}
    \label{eq:sp_ub}
    w_{\sp{}}(t) \leq \frac{1}{2}\ln \frac{2(1-p)t+1}{2pt+1} \leq \frac{1}{2}\ln \frac{(1-p)}{p}.
\end{equation}

To show this, we divide Eq~\ref{eq:logistic_dynamics_1} by Eq~\ref{eq:logistic_dynamics_2}, to get:
\begin{align}
    && \frac{w_\inv{} + w_{\sp{}} + e^{w_{\inv{}} + w_\sp{}}}{w_\inv{} - w_{\sp{}} + e^{w_{\inv{}} - w_\sp{}}} &= \frac{2(1-p)t + 1}{2pt+1}\\
    \implies && (2(2p-1)t) w_{\inv{}}  + (2(2p-1)t) w_{\sp{}}+ (2pt+1)e^{w_{\inv{}} + w_\sp{}} &= e^{w_{\inv{}} - (2(1-p)t+1)w_{\sp{}}} \\
\intertext{since by $p \geq 0.5$, Eq~\ref{eq:sp_pos} and Eq~\ref{eq:inv_ub} the first two terms are positive,}
\implies && (2pt+1)e^{w_{\inv{}} + w_\sp{}} & \leq (2(1-p)t+1)e^{w_{\inv{}} - w_{\sp{}}} \\
\implies &&  e^{2w_{\sp{}}} & \leq \frac{2(1-p)t+1}{2pt+1}. \\
\end{align}

This proves the first inequality. The second inequality follows from the fact that 
$\frac{2(1-p)t+1}{2pt+1}$ is increasing with $t$ so applying $\lim t \to \infty$ gives us an upper bound.

Finally, we rewrite Equation~\ref{eq:logistic_dynamics_1}
 and Equation~\ref{eq:logistic_dynamics_2} to get:
 
 \begin{align}
   w_{\inv{}} + w_{\sp{}}  &= \ln(2pt+1- (w_{\inv{}} + w_{\sp{}})) \numberthis \\
    {w_{\inv{}}-w_{\sp{}}} &= \ln(2(1-p)t+1 - (w_{\inv{}} - w_{\sp{}}). \numberthis \\
\end{align}

Adding and subtracting these, we get a different form for the dynamics of these quantities:
 \begin{align}
   w_{\inv{}}  &= 0.5(\ln(2pt+1- (w_{\inv{}} + w_{\sp{}})) +\ln(2(1-p)t+1 - (w_{\inv{}} - w_{\sp{}})) \numberthis \\
   w_{\sp{}}&= 0.5(\ln(2pt+1- (w_{\inv{}} + w_{\sp{}})) -\ln(2(1-p)t+1 - (w_{\inv{}} - w_{\sp{}})). \numberthis \\
\end{align}

\textbf{Lower bound.}
To prove a lower bound on $w_{\sp{}}(t)/w_{\inv{}}(t)$, we'll first lower bound  $w_{\sp{}}$. Observe that:

\begin{align}
    w_{\sp{}}(t) & = \frac{1}{2}\ln \frac{2pt+1- (w_{\inv{}} + w_{\sp{}})}{2(1-p)t+1 - (w_{\inv{}} - w_{\sp{}})}\\
    & = \frac{1}{2}\ln \left( 1+ \frac{2(2p-1)t- 2 w_{\sp{}}}{2(1-p)t+1 - (w_{\inv{}} - w_{\sp{}})}\right)\\
 \end{align}
 
Now, since $w_{\sp{}}$ is upper bounded by a constant (Eq~\ref{eq:sp_ub}), for sufficiently large $t$, the numerator of the second term inside the $\ln$ will be positive, and can be lower bounded by $(2p-1)t$. Then, let us consider two scenarios. Either that $w_{\sp{}} > w_{\inv{}}$, in which case we already have a lower bound on $\beta(t)$, or that $w_{\sp{}} \leq w_{\inv{}}$. In the latter case, we can lower bound the above as:
  \begin{align}
    w_{\sp{}}(t) & \geq \frac{1}{2}\ln \left( 1+ \frac{(2p-1)t}{2(1-p)t+1}\right)\\
 \end{align}
 Since the right hand side is an increasing in $t$, we can say that for sufficiently large $t \geq 1$,
   \begin{align}
    w_{\sp{}}(t) & \geq \frac{1}{2}\ln \left( 1+ \frac{(2p-1)}{2(1-p)+1}\right)\\
    & \geq \frac{1}{2}\ln \left( \frac{2}{3-2p}\right).\\
 \end{align}
 Combining this with Eq~\ref{eq:inv_ub} we get for sufficiently large $t$, either
 \begin{align}
 \frac{w_{\sp}(t)}{w_{\inv}(t)} \geq \frac{\frac{1}{2}\ln \left( \frac{2}{3-2p}\right)}{\ln(t+1)}.
 \end{align}
 or $\frac{w_{\sp}(t)}{w_{\inv}(t)} \geq 1$.
 
 \textbf{Upper bound.} To upper bound $w_{\sp{}}(t)/w_{\inv{}}(t)$, we'll lower bound $w_{\inv{}}(t)$:
 \begin{align}
 w_{\inv{}}  &= 0.5(\ln(2pt+1- (w_{\inv{}} + w_{\sp{}})) +\ln(2(1-p)t+1 - (w_{\inv{}} - w_{\sp{}}))\\
 \intertext{Since by Eq~\ref{eq:inv_ub} $w_{\inv{}}(t) \in [0,\ln(t+1)]$ and by Eq~\ref{eq:sp_ub}, $w_{\sp{}}(t) \in [0,\frac{1}{2} \ln \frac{1-p}{p}]$, for a sufficiently large $t$, the linear terms within the $\ln$ terms dominate and so, for large $t$}
 w_{\inv{}} &\geq  (\ln(0.5\cdot 2pt+1) = \ln (0.5 t+1)\\
 \end{align}
Combining this with Eq~\ref{eq:sp_ub}, we get, for large $t$:
\begin{align}
\frac{w_{\sp{}}(t)}{w_{\inv{}}(t)} \leq \frac{\frac{1}{2} \ln \frac{1-p}{p}}{\ln (0.5 t+1)}.
\end{align}

\end{proof}

\section{More on experiments}
\label{sec:app-experiments}
\textbf{Common details:} In all our MNIST-based experiments, we consider the Binary-MNIST classification task \citep{arjovsky19invariant} where the first five digits ($0$ to $4$) need to be separated from the rest ($5$ to $9$). Unless specified otherwise, for this we train a fully-connected three-layered ReLU network with a width of $400$ and using SGD with learning rate $0.1$ for $50$ epochs. In all our CIFAR10-based experiments, unless stated otherwise, we consider the 10-class classification problem, and train a ResNetV1 with a depth of $20$ for $200$ epochs. \footnote{Borrowing the implementation in \url{https://github.com/keras-team/keras/blob/master/examples/cifar10_resnet.py}, without data augmentation.} All values are averaged at least over fives runs.
Finally, when we describe our datasets, we'll adopt the convention that all pixels lie between $0$ and $1$,

\subsection{Random feature experiments from Section~\ref{sec:easy-to-learn}}
\label{sec:rf_appendix}
For the random features based experiment on Binary MNIST in Section~\ref{sec:easy-to-learn}, we consider $50k$ random ReLU features i.e., $\rvx_{\inv{}} = \text{ReLU}(W\rvx_{\text{raw}})$ where $W$ is a $50k \times 784$ matrix (and so this is well overparameterized for dataset sizes upto $6400$). 
Each entry here is drawn from the normal distribution. We set the spurious feature support to be $\{ -100, 100\}$, which is about $1/10$th the magnitude of $\|\rvx_{\inv{}}\|$. We also conduct similar experiments on a two-class CIFAR10 and report similar OoD accuracy drops in Fig~\ref{fig:cifar10_rf_accuracies}. Here, we use the first two classes of CIFAR10, as against grouping five classes together into each of the classes. This is because on the latter dataset, the random features representation has poor in-distribution accuracy to begin with.

\subsection{Increasing norm experiments from Section~\ref{sec:geometric}}
\label{sec:increasing-norms}
The main premise behind our geometric skews argument is that
as we increase the number of datapoints, it requires greater norm for the model to fit those points. We verify this for random features models on Binary-MNIST and two-class CIFAR10 in Figs~\ref{fig:mnist_rf_both_norms}, ~\ref{fig:cifar10_rf_both_norms}.

While the theory is solely focused on linear classifiers, we can
verify this premise intuitively for neural network classifiers. 
However, for neural networks, the notion of margin is not well-defined. Nevertheless, as a proxy measure, we look at how much distance the weights travel from their initialization in order to classify the dataset completely. Such plots have already been considered in  \cite{neyshabur17exploring,nagarajan17generalization,nagarajan19uniform} (although in the completely different context of understanding why deep networks succeed at in-distribution generalization). We present similar plots for completeness.

 Fig~\ref{fig:mnist_fnn_norms} shows this for Binary-MNIST on an FNN.
For CIFAR10, we conduct two experiments.  Fig~\ref{fig:cifar10_resnet_norms_adam} uses a ResNet with Adam and decaying learning rate. Here, we observe that the norms saturate after a point, which is because of the learning rate decay. Since this does not make a fair comparison between the geometries of larger datasets and smaller datasets, we also plot this for SGD with fixed learning rate in Fig~\ref{fig:cifar10_resnet_norms_sgd} to recover the increasing norms observation. Here, sometimes the model sometimes saturates at an accuracy of $99\%$ (rather than $100\%$); in those cases, we report the value of the weight norm at the final epoch (namely, at the $200$th epoch).

\begin{figure}[t!]
\centering
    \begin{minipage}[t]{\textwidth}
        \centering
        \begin{minipage}[t]{0.33\textwidth} \adjincludegraphics[width=\textwidth,padding=0.0in 0 0 0]{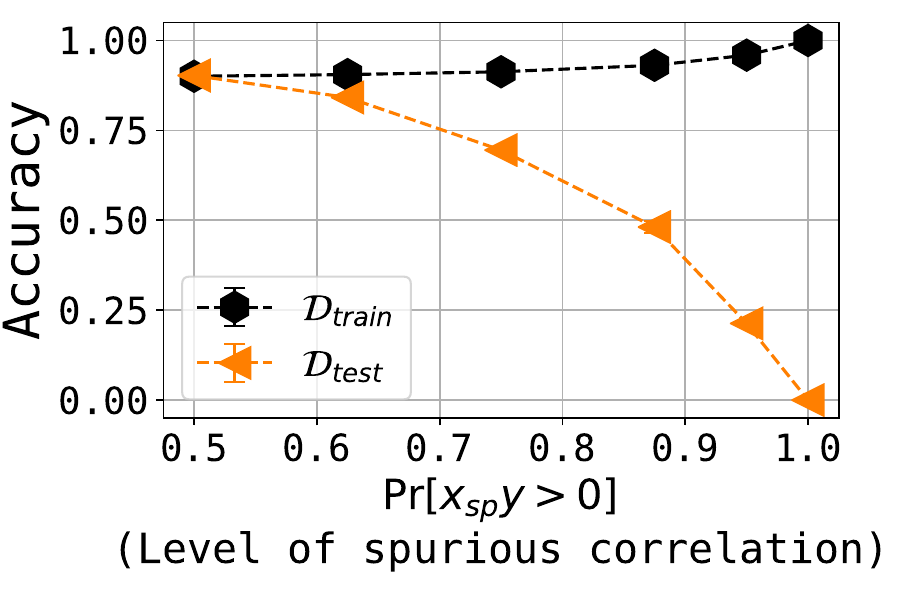}
                        \subcaption{OoD accuracy drop of max-margin in two-class CIFAR10}
                        \label{fig:cifar10_rf_accuracies}
        \end{minipage}%
         \begin{minipage}[t]{0.33\textwidth} \adjincludegraphics[width=\textwidth,padding=0.0in 0 0 0]{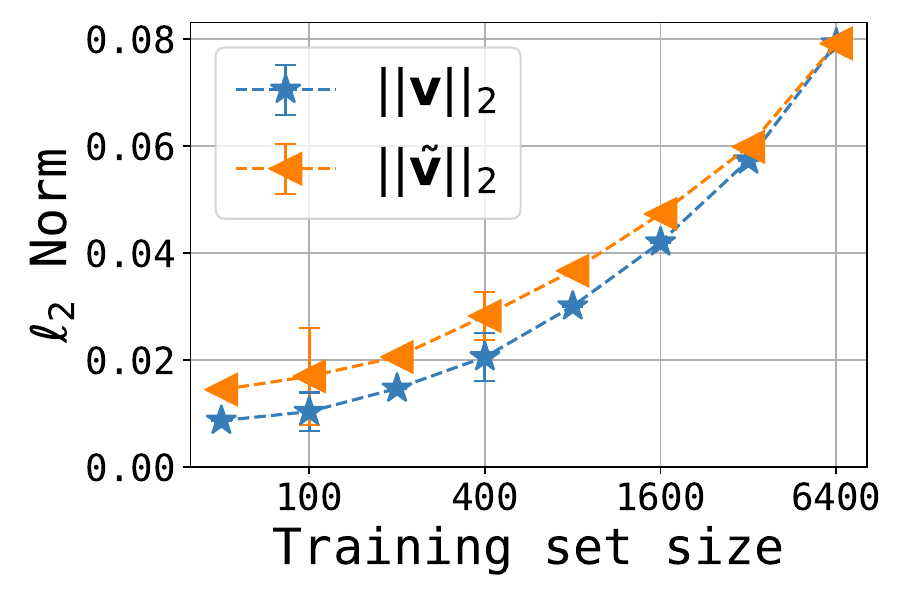}
                        \subcaption{Random features max-margin on Binary-MNIST}
                        \label{fig:mnist_rf_both_norms}
        \end{minipage}%
        \begin{minipage}[t]{0.33\textwidth} \adjincludegraphics[width=\textwidth,padding=0.0in 0 0 0]{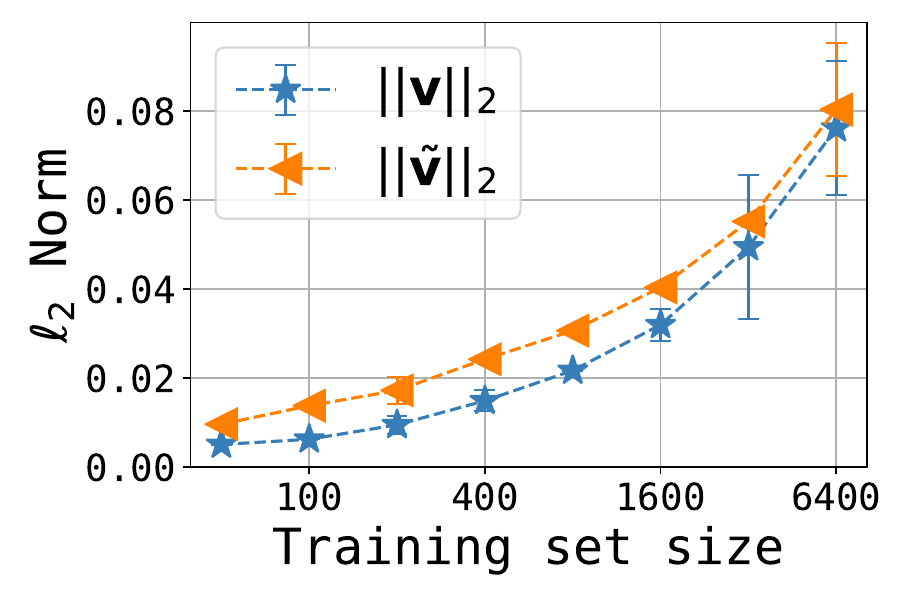}
                        \subcaption{Random features max-margin on two-class CIFAR-10}
                        \label{fig:cifar10_rf_both_norms}
        \end{minipage}\\
         \begin{minipage}[t]{0.33\textwidth} \adjincludegraphics[width=\textwidth,padding=0.0in 0 0 0]{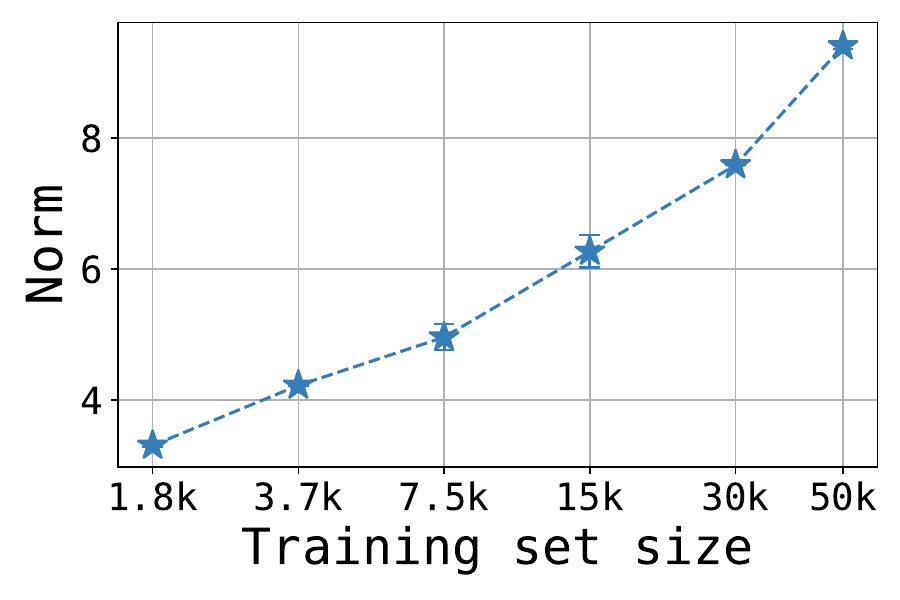}
                        \subcaption{FNN on Binary-MNIST}
                        \label{fig:mnist_fnn_norms}
        \end{minipage}%
        \begin{minipage}[t]{0.33\textwidth} \adjincludegraphics[width=\textwidth,padding=0.1in 0 0 0]{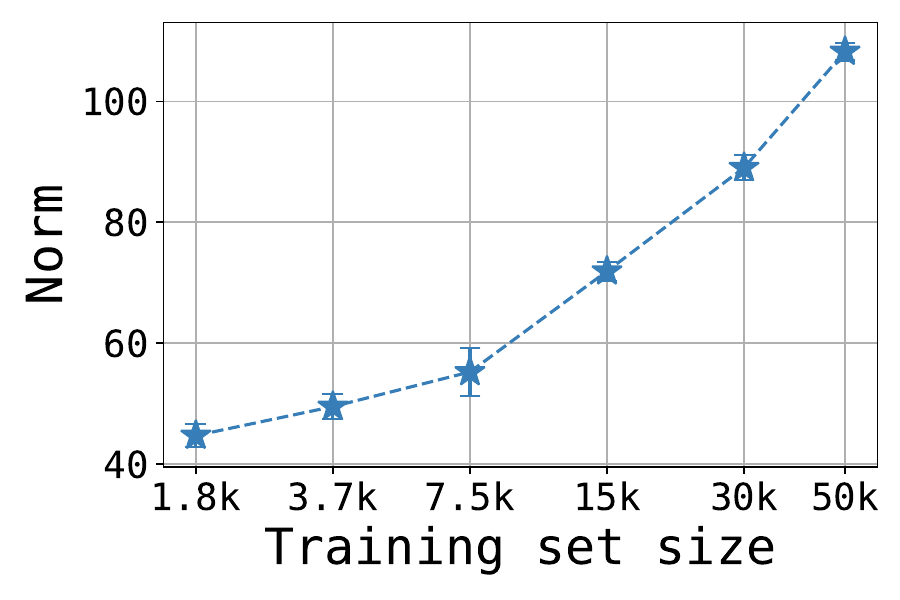}
                        \subcaption{ResNet on CIFAR10 with SGD, fixed learning rate}
                        \label{fig:cifar10_resnet_norms_sgd}
        \end{minipage}%
        \begin{minipage}[t]{0.33\textwidth} \adjincludegraphics[width=\textwidth,padding=0.1in 0 0 0]{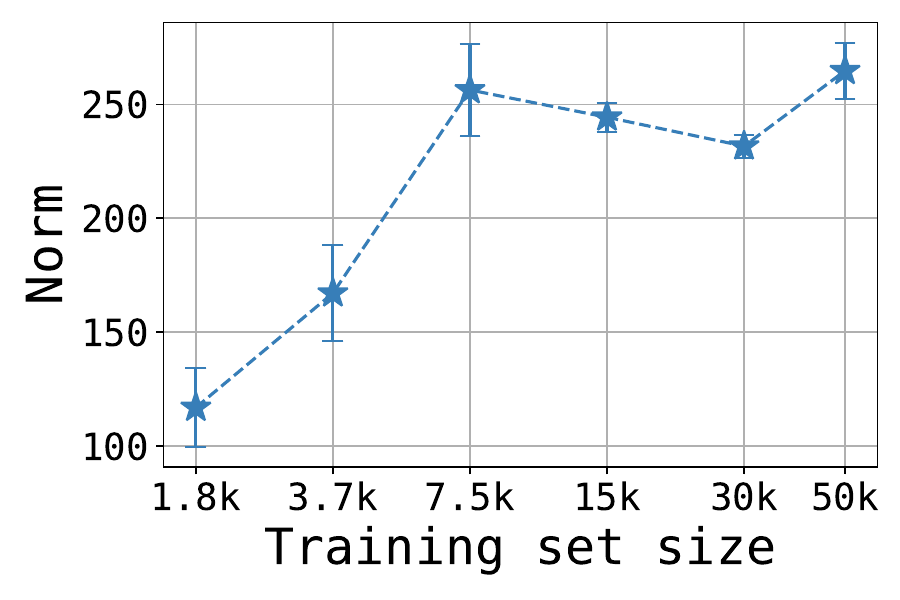}
                        \subcaption{ResNet on CIFAR10 with Adam, (plot saturates due to decaying learning rate)}
                        \label{fig:cifar10_resnet_norms_adam}
        \end{minipage}
    \end{minipage}
    \caption{\textbf{Validating geometric skews in MNIST and CIFAR10:} In \textbf{Fig~\ref{fig:cifar10_rf_accuracies}}, we
    show the OOD accuracy drop of a random features based max-margin model trained to classify two classes in CIFAR10. 
    In the next few images, we demonstrate that MNIST and CIFAR10 datasets have the property that the more the datapoints in the dataset, the larger the norm required to fit them. Specifically, in \textbf{Fig~\ref{fig:mnist_rf_both_norms}} and \textbf{Fig~\ref{fig:cifar10_rf_both_norms}}, we plot the max-margin norms of a random features representation (see App~\ref{sec:geometric-proof} for the definitions of two plotted lines).  In \textbf{Fig~\ref{fig:mnist_fnn_norms}}, \textbf{Fig~\ref{fig:cifar10_resnet_norms_sgd}}, \textbf{Fig~\ref{fig:cifar10_resnet_norms_adam}}, we plot the distance from initialization of neural network models (presented for the sake of completeness). 
    }
        \label{fig:increasing-norms}
\end{figure}

\subsection{Broader examples of geometric failure}
\label{sec:broader-geometric}
We elaborate on the multiple datasets we showcased in the paper as examples where ERM fails due to a geometric skew. We first discuss the two CIFAR10 datasets, and then discuss the cats vs. dogs example, and then discuss two Binary-MNIST datasets, one that is similar to the cats vs. dogs example, and another corresponding to high-dimensional spurious features.

\subsubsection{CIFAR10 example with spuriously colored line}
\label{sec:cifar10-example-1}
Here we provide more details about the CIFAR-10 dataset we presented in Sec~\ref{sec:geometric} and in Fig~\ref{fig:cifar_sp}. This dataset can be thought of as an equivalent of the cow-camel classification tasks but for 10 classes. For this, we use ten different values of the spurious feature, one for each class.
We argue that the failure here arises from the fact that the ResNet requires greater norms to fit larger datapoints (see Fig~\ref{fig:cifar10_resnet_norms_sgd}), and so a similar argument as Theorem~\ref{thm:geometric} should explain failure here.
 
\textbf{Dataset details.} To create the ten-valued spurious feature, we consider a vertical line passing through the middle of each channel, and also additionally the horizontal line through the first channel. Next, we let each of these four lines take a constant value of either $(0.5 \pm 0.5 \spscale )$ where $\spscale \in [-1,1]$ denotes a ``spurious feature scale''. Since each of these lines can take two configurations, it allows us to instantiate $16$ different configurations. We'll however use only $10$ of these configurations, and arbitrarily fix a mapping from those configurations to the $10$ classes. For convenience let us call these ten configurations $\rvx_{\sp,1}, \hdots, \rvx_{\sp,10}$.
Then, for any datapoint in class $i$, we denote the probability of the spurious feature taking the value $\rvx_{\sp,j}$, conditioned on $y$, as $p_{i,j}$. 

To induce a spurious correlation, we set $p_{i,i} > 0.1$, and set all other $p_{i,j}:= \nicefrac{(1-p_{i,i})}{10}$. Thus, every value of the spurious feature $\rvx_{\sp{},j}$ is most likely to occur with its corresponding class $j$. Finally, note that to incorporate the spurious pixel, we zero out the original pixels in the image, and replace them with the spurious pixels.

For the observation reported in Fig~\ref{fig:cifar_sp}, we use the value of $\spscale = 0.5$ during training and testing. We set $p_{i,i} = 0.5$ for all classes. This means that on $50 \%$ of the data the spurious feature is aligned with the class (we call this the `Majority' group). On the remaining $50\%$ data, the spurious feature takes one of the other $9$ values at random (we call this the `Minority' group).
 
\begin{figure}[t!]
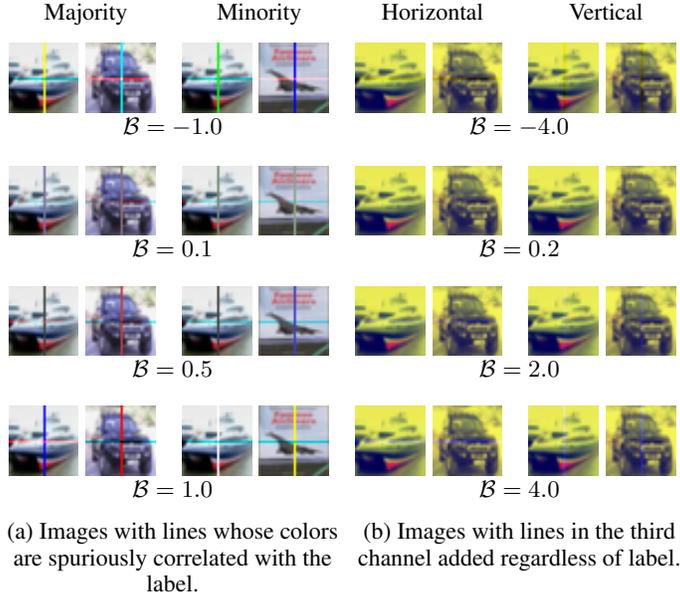

\centering
\begin{minipage}[t]{0.33\textwidth}
    \insertdataset{Majority}{Minority}{cifar10_shift4_scale_-1.0_pos_0}{cifar10_shift4_scale_-1.0_pos_1}{cifar10_shift4_scale_-1.0_neg_0}{cifar10_shift4_scale_-1.0_neg_2}{$\spscale=-1.0$}
    \vspace{0.1in}
    
    \insertdataset{}{}{cifar10_shift4_scale_0.1_pos_0}{cifar10_shift4_scale_0.1_pos_1}{cifar10_shift4_scale_0.1_neg_0}{cifar10_shift4_scale_0.1_neg_2}{$\spscale=0.1$}
        \vspace{0.1in}
        
    \insertdataset{}{}{cifar10_shift4_scale_0.5_pos_0}{cifar10_shift4_scale_0.5_pos_1}{cifar10_shift4_scale_0.5_neg_0}{cifar10_shift4_scale_0.5_neg_2}{$\spscale=0.5$}  
        \vspace{0.1in}
        
    \insertdataset{}{}{cifar10_shift4_scale_1.0_pos_0}{cifar10_shift4_scale_1.0_pos_1}{cifar10_shift4_scale_1.0_neg_0}{cifar10_shift4_scale_1.0_neg_2}{$\spscale=1.0$}
    \subcaption{Images with lines whose colors are spuriously correlated with the label.}
    \label{fig:cifar-sp-various-scales}
 \end{minipage}%
 \begin{minipage}[t]{0.33\textwidth}
    \insertdataset{Horizontal}{Vertical}{cifar10_shift2_non-ortho_horiz_scale_-2.00_0}{cifar10_shift2_non-ortho_horiz_scale_-2.00_1}{cifar10_shift2_non-ortho_vert_scale_-2.00_0}{cifar10_shift2_non-ortho_vert_scale_-2.00_1}{$\spscale=-4.0$}
    \vspace{0.1in}
    
    \insertdataset{}{}{cifar10_shift2_non-ortho_horiz_scale_0.10_0}{cifar10_shift2_non-ortho_horiz_scale_0.10_1}{cifar10_shift2_non-ortho_vert_scale_0.10_0}{cifar10_shift2_non-ortho_vert_scale_0.10_1}{$\spscale=0.2$}
        \vspace{0.1in}
        
    \insertdataset{}{}{cifar10_shift2_non-ortho_horiz_scale_1.00_0}{cifar10_shift2_non-ortho_horiz_scale_1.00_1}{cifar10_shift2_non-ortho_vert_scale_1.00_0}{cifar10_shift2_non-ortho_vert_scale_1.00_1}{$\spscale=2.0$}
        \vspace{0.1in}
        
    \insertdataset{}{}{cifar10_shift2_non-ortho_horiz_scale_2.00_0}{cifar10_shift2_non-ortho_horiz_scale_2.00_1}{cifar10_shift2_non-ortho_vert_scale_2.00_0}{cifar10_shift2_non-ortho_vert_scale_2.00_1}{$\spscale=4.0$}
    \subcaption{Images with lines in the third channel added regardless of label.}
    \label{fig:cifar-line-various-scales}
 \end{minipage}%

    \caption{\textbf{Our CIFAR10 examples}: In \textbf{Fig~\ref{fig:cifar-sp-various-scales}} we visualize the dataset discussed in App~\ref{sec:cifar10-example-1}. Each row corresponds to a different value of the ``scale'' of the spurious feature. The left two images correspond to datapoints where the spurious feature maps to the corresponding value for that label. The right two images correspond to datapoints where the spurious feature  maps to one of the $9$ other values. In \textbf{Fig~\ref{fig:cifar-line-various-scales}}, we visualize the dataset discussed in App~\ref{sec:cifar10-example-2}. The left two images correspond to adding a horizontal line to the last channel, and the right corresponds to a vertical line.}
        \label{fig:cifar10-example}
\end{figure}

\begin{figure}[t!]
 \centering
    \begin{minipage}[t]{.33\textwidth}
        \centering
        \begin{minipage}[t]{0.3\textwidth}
                \includegraphics[width=\textwidth]{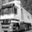}
        \end{minipage}
        \begin{minipage}[t]{0.3\textwidth}
                     \includegraphics[width=\textwidth]{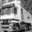}
        \end{minipage}
        \begin{minipage}[t]{0.3\textwidth}
                        \includegraphics[width=\textwidth]{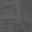}
        \end{minipage}
            \captionsetup[subfigure]{font=footnotesize,labelfont=footnotesize,skip=1pt}
        \subcaption{Channels in the non-orthogonal dataset}
    \end{minipage}\\
    \vspace{0.2in}
    \begin{minipage}[t]{.33\textwidth}
        \centering
        \begin{minipage}[t]{0.3\textwidth}
                \includegraphics[width=\textwidth]{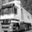}
        \end{minipage}
        \begin{minipage}[t]{0.3\textwidth} \includegraphics[width=\textwidth]{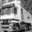}
        \end{minipage}
        \begin{minipage}[t]{0.3\textwidth}
                        \includegraphics[width=\textwidth]{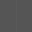}
        \end{minipage}
                    \captionsetup[subfigure]{font=footnotesize,labelfont=footnotesize,skip=1pt}
        \subcaption{Channels in the orthogonal dataset.}
    \end{minipage}
    
     \caption{\textbf{The channels in the CIFAR-10 dataset from Section~\ref{sec:cifar10-example-2}}: In the top image, is our main dataset, where one can see that the third channel has a faint copy of the original ``invariant feature'' and an extra vertical line added onto it. In the bottom image, is the control dataset where the last channel only contains the vertical line (thus making the spurious feature orthogonal to the invariant features).}
        \label{fig:cifar10-channels}
\end{figure}

\subsubsection{CIFAR10 example with a line in the third channel}
\label{sec:cifar10-example-2}
Here, we elaborate on the discussion regarding the dataset in Fig~\ref{fig:cifar_line}. In this dataset, we add a line to the last channel of CIFAR10 (regardless of the label), and vary its brightness during testing. 
We argue that one way to understand the failure is via the fact that the ``linear mapping'' Constraint~\ref{con:linear-mapping} is broken. In particular, if we imagine that each channel contains the same invariant feature $\rvx_{\inv{}}$ (which is almost the case as can be seen in Fig~\ref{fig:cifar10-channels}), then for simplicity, we can imagine this dataset to be of the form $(\rvx_{\inv{}}, \rvx_{\inv{}} + \rvx_{\sp{}})$ i.e., $\rvx_{\inv{}}$ and $\rvx_{\sp{}}$ are not orthogonal to each other. In this scenario, the second co-ordinate can still be fully predictive of the label, and therefore the max-margin classifier would rely on both the first and the second co-ordinate to maximize its margins e.g., $\rvw_1 \cdot \rvx_{\inv{}} + \rvw_2 (\rvx_{\inv{}} + \rvx_{\sp{}})) + b$ where $\rvw_1, \rvw_2 \neq 0$. Crucially, since the classifier has not quite disentangled $\rvx_{\sp{}}$ from the invariant part of the second channel, this makes the classifier vulnerable to test-time shifts on $\rvx_{\sp{}}$. In Sec~\ref{sec:constraints-and-failures} we detail this under the failure mode of Constraint~\ref{con:linear-mapping}, and visualize this failure in Fig~\ref{fig:nonorthogonal-failure}, and also connect it to Theorem~\ref{thm:geometric}.

\textbf{Dataset details.} In Fig~\ref{fig:cifar-line-various-scales}, we visualize this dataset for multiple values of a spurious feature scale parameter, $\spscale \in [-4,4]$. In particular, we take the last channel of CIFAR10 and add $\spscale$ to the middle line of the channel. Since this can result in negative pixels, we add a value of $4$ to all pixels in the third channel, and then divide all those pixels by a value of $1+8$ so that they lie in between $0$ and $1$.  (This normalization is the reason the color of the images differ from the original CIFAR10 dataset; as such this normalization is not crucial to our discussion.)

\textbf{More experimental results.} We run two kinds of experiments: one where we add only a vertical line to all images, and another where we add a horizontal line to $50\%$ of the images (essentially simulating data from two different domains). We also run experiments for two different values of $\spscale$ during training, $0.2$ and $1.0$.  As a ``control'' experiment, we completely fade out the original CIFAR image in the third channel. Then, according to our explanation, the model should not fail in this setting as data is of the form $(\rvx_{\inv{}},\rvx_{\sp{}})$ i.e., the two features are orthogonally decomposed. We summarize the key observations from these experiments (plotted in Fig~\ref{fig:cifar10-line-experiments}) here:

\begin{enumerate}[leftmargin=0.15in]
  \setlength\itemsep{0.00in}
    \item As predicted, we observe that when trained on the ``orthogonal'' dataset (where the original CIFAR10 image is zerod out in the third channel), the OoD performance of the neural network is unaffected. This provides evidence for our explanation of failure in the ``non-orthogonal'' dataset. 
    \item We observe that training on ERM on the ``multidomain dataset'' (Figs~\ref{fig:cifar-line-multi-domain-0.2},~\ref{fig:cifar-line-multi-domain-2.0}) that has both horizontal and vertical lines, makes it more robust to test-time shifts compared to the dataset with purely vertical lines (Figs~\ref{fig:cifar-line-single-domain-0.2},~\ref{fig:cifar-line-single-domain-2.0}).
    \item Remarkably, even though the third channel has only a very faint copy of the image (Fig~\ref{fig:cifar10-channels}), the classifier still learns a significant enough weight on the channel that makes it susceptible to shifts in the middle line. 
    \item We note that introducing a different kind of test-time shift such as a Gaussian shift,
    is not powerful enough to cause failure since such a shift does not align with the weights learned, $\rvw_2$. However, shifts such as the line in this case, are more likely to be aligned with the classifier's weights, and hence cause a drop in the accuracy.
\end{enumerate}

\begin{figure}[t!]
\centering
    \begin{minipage}[t]{\textwidth}
        \centering
        \begin{minipage}[t]{0.25\textwidth}
                \adjincludegraphics[width=\textwidth,padding=0.0in 0in 0 0]{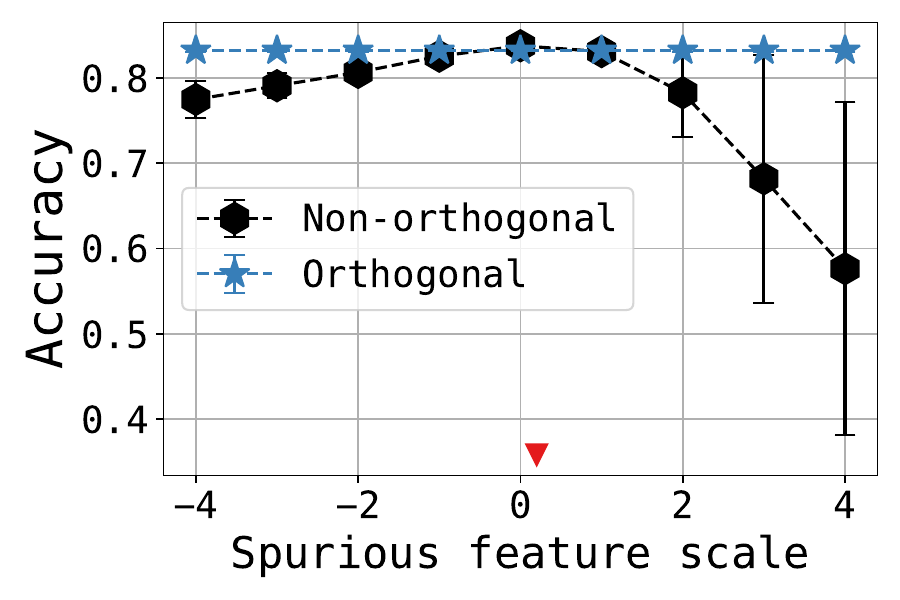}
                \subcaption{Only vertical line, $\spscale=0.2$ during training}
                \label{fig:cifar-line-single-domain-0.2}
        \end{minipage}%
        \begin{minipage}[t]{0.25\textwidth}
                \adjincludegraphics[width=\textwidth,padding=0.0in 0 0 0]{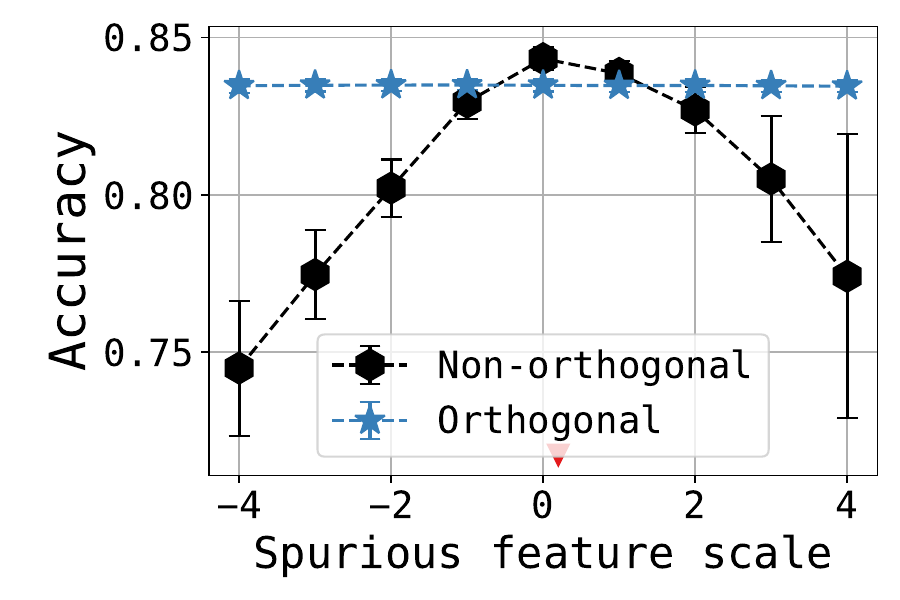}
                \subcaption{Vertical or horizontal lines, $\spscale=0.2$ during training}
                \label{fig:cifar-line-multi-domain-0.2}
        \end{minipage}%
        \begin{minipage}[t]{0.25\textwidth}
                        \adjincludegraphics[width=\textwidth,padding=0.1in 0 0 0]{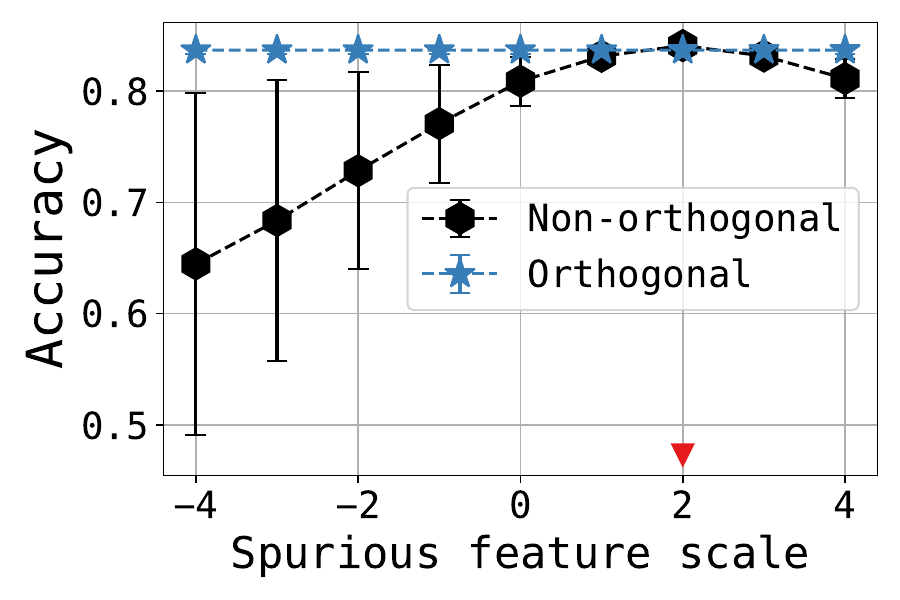}
                \subcaption{Only vertical line, $\spscale=2$ during training}
                        \label{fig:cifar-line-single-domain-2.0}
        \end{minipage}%
        \begin{minipage}[t]{0.25\textwidth}
                        \adjincludegraphics[width=\textwidth,padding=0.1in 0 0 0]{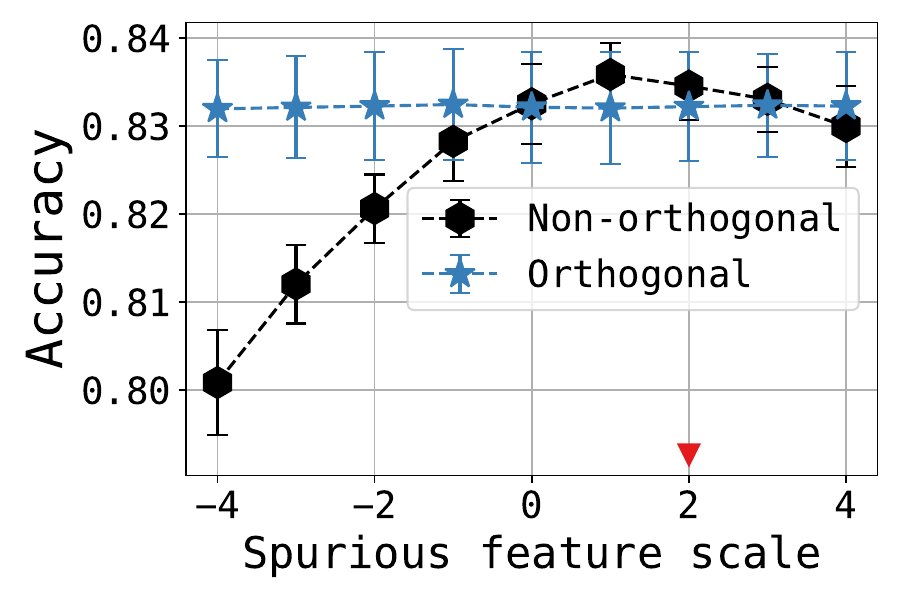}
                \subcaption{Vertical or horizontal lines, $\spscale=2.0$ during training}
                        \label{fig:cifar-line-multi-domain-2.0}
        \end{minipage}%
    \end{minipage}
    \hfill 
    \caption{\textbf{More experiments on CIFAR10 example from App~\ref{sec:cifar10-example-2}}: Here the red triangle corresponds to the value of the scale of spurious feature during training. `Non-orthogonal' corresponds to our main setting, and `Orthogonal' corresponds to the control setting where the original image in the third channel is zeroed out.}
        \label{fig:cifar10-line-experiments}
\end{figure}

\subsubsection{Cats vs. Dogs example with colors independent of label}
\label{sec:cats-vs-dogs}
Recall that the dataset from Fig~\ref{fig:color-shift} consists of a scenario where the images of cats vs. dogs \citep{elson07asirra} are colored independently of the label. To generate this dataset, we set the first channel to be zero. Then, for a particular color, we pick a value $\spscale \in [-1,1]$, and then set the second channel of every image to be $0.5\cdot(1-\spscale)$ times the original first channel image, and the second channel to be $0.5\cdot(1+\spscale)$ times the original first channel image. For the blueish images, we set $\spscale  = 0.90$ and for the greenish images, we set $\spscale = -0.90$ (hence both these kinds of images have non-zero pixels in both the green and blue channels).  We visualize these images in Fig~\ref{fig:cats-vs-dogs-example}.

\begin{figure}[t!]
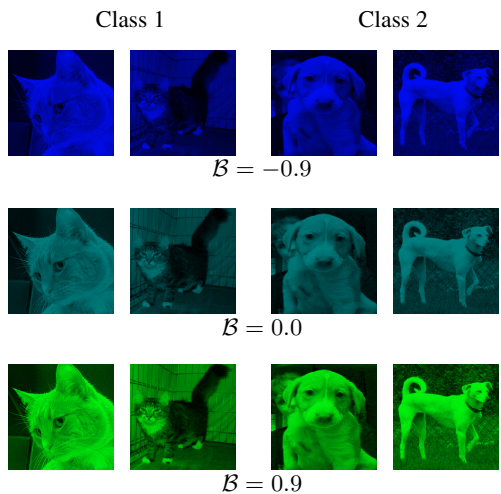

\centering
\begin{minipage}[t]{0.5\textwidth}
    \insertdataset{Class 1}{Class 2}{cats_vs_dogs_shift2_sp_scale_-0.90_0}{cats_vs_dogs_shift2_sp_scale_-0.90_2}{cats_vs_dogs_shift2_sp_scale_-0.90_1}{cats_vs_dogs_shift2_sp_scale_-0.90_3}{$\spscale=-0.9$}
    \vspace{0.1in}
    
\insertdataset{}{}{cats_vs_dogs_shift2_sp_scale_-0.00_0}{cats_vs_dogs_shift2_sp_scale_-0.00_2}{cats_vs_dogs_shift2_sp_scale_-0.00_1}{cats_vs_dogs_shift2_sp_scale_-0.00_3}{$\spscale=0.0$}
        \vspace{0.1in}
        
    \insertdataset{}{}{cats_vs_dogs_shift2_sp_scale_0.90_0}{cats_vs_dogs_shift2_sp_scale_0.90_2}{cats_vs_dogs_shift2_sp_scale_0.90_1}{cats_vs_dogs_shift2_sp_scale_0.90_3}{$\spscale=0.9$}
 \end{minipage}%
    \caption{\textbf{Our cats vs. dogs examples}: We present the dataset from App~\ref{sec:cats-vs-dogs} for various values of $\spscale$ which determines how much of the image resides in the second channel vs. the third channel. }
        \label{fig:cats-vs-dogs-example}
\end{figure}

Then, on the training distribution, we randomly select $p$ fraction of the  data to be bluish and $1-p$ fraction to be greenish as described above. 
On the testing distribution, we force all datapoints to be greenish.
Finally, note that we randomly split the original cats vs. dogs dataset into $18000$ points for training and use the remaining $5262$ datapoints for testing/validation. In Fig~\ref{fig:color-shift}, we set $1-p=0.001$, so about $\approx 20$ greenish points must be seen during training. We show more detailed results for $1-p$ set to $0.001, 0.01$ and $0.1$ in Fig~\ref{fig:cats_vs_dog-experiments}. We observe that the OoD failure does diminish when $1-p = 0.1$. 

\begin{figure}[t!]
\centering
    \begin{minipage}[t]{\textwidth}
        \centering
        \begin{minipage}[t]{0.25\textwidth}
                \adjincludegraphics[width=\textwidth,padding=0.0in 0 0 0]{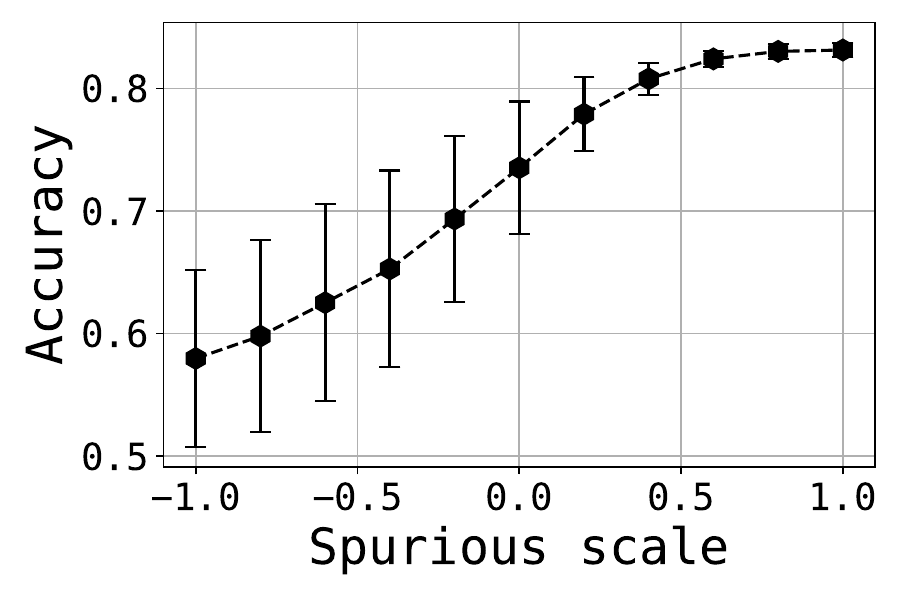}
                \subcaption{$1-p=0.001$}
                \label{fig:cats_vs_dogs_0.001}
        \end{minipage}%
        \begin{minipage}[t]{0.25\textwidth}
                        \adjincludegraphics[width=\textwidth,padding=0.1in 0 0 0]{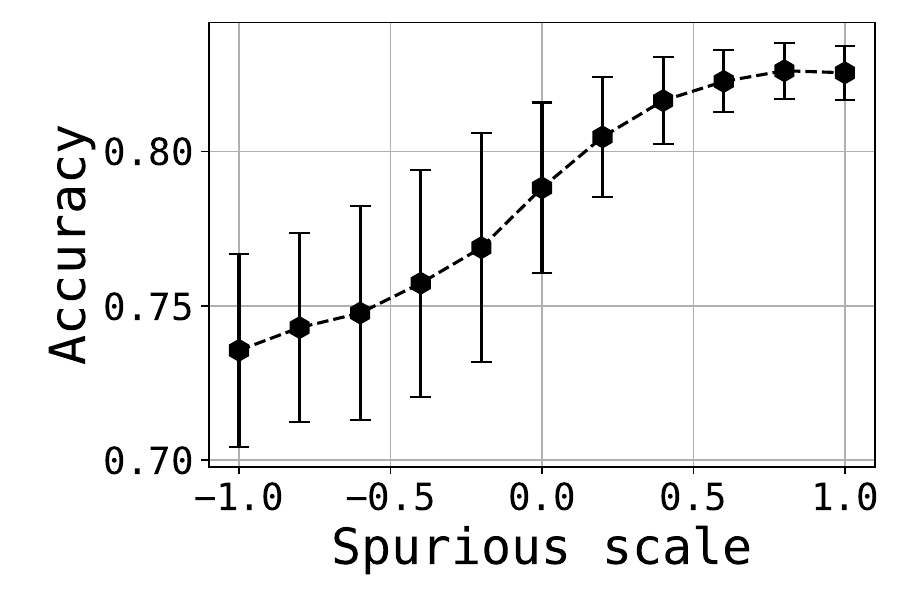}
                \subcaption{$1-p=0.01$}
                        \label{fig:cats_vs_dogs_0.01}
        \end{minipage}%
        \begin{minipage}[t]{0.25\textwidth}
                        \adjincludegraphics[width=\textwidth,padding=0.1in 0 0 0]{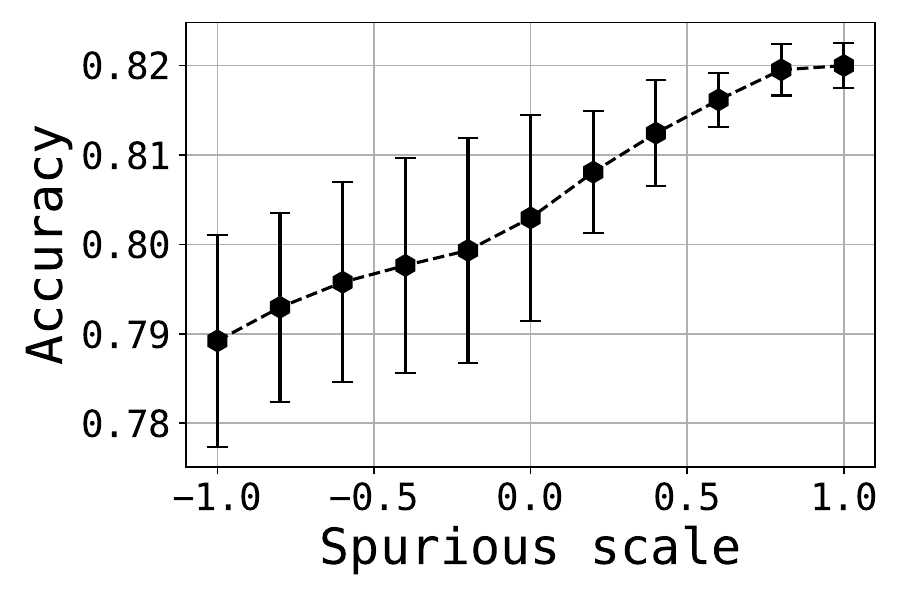}
                \subcaption{$1-p=0.1$}
                        \label{fig:cats_vs_dogs_0.1}
        \end{minipage}%
                \begin{minipage}[t]{0.25\textwidth}
                        \adjincludegraphics[width=\textwidth,padding=0.1in 0 0 0]{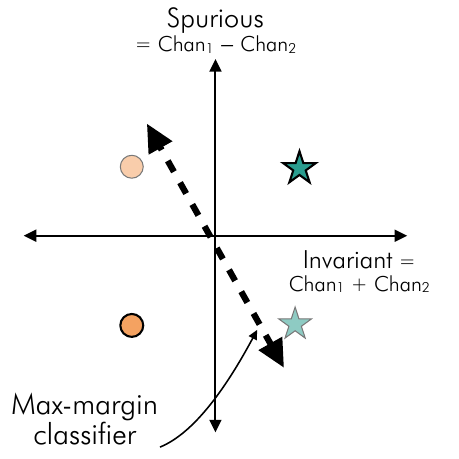}
                \subcaption{Visualization of failure}
                        \label{fig:cats_vs_dogs_toy}
        \end{minipage}%
    \end{minipage}
    \hfill 
    \caption{\textbf{Experiments on Cat vs Dogs dataset from App~\ref{sec:cats-vs-dogs}}: Each of the first three plots above corresponds to a different value of $1-p$ i.e., the proportion of the minority, greenish datapoints. We plot the OoD accuracy on various distributions for varying values of $\spscale \in [-1, 1]$. The final plot provides a visualization of these failure mode  as explained in App~\ref{sec:cats-vs-dogs}.}
        \label{fig:cats_vs_dog-experiments}
\end{figure}

\textbf{Explaining this failure via an implicit spurious correlation.}
Peculiarly, even though there is no explicit visual spurious correlation between the color and the label here, we can still identify a different kind of non-visual spurious correlation.  To reason about this, first observe that, if the two active channels (the blue and the green one) correspond to $(\rvx_1, \rvx_2)$, then $\rvx_1 + \rvx_2$ is a constant across all domains (and is fully informative of the label). Hence  $\rvx_1 + \rvx_2$ can hence be thought of as an invariant feature. On the other hand, consider the feature $\rvx_{\text{diff}} = \rvx_1 - \rvx_2$. During training time, this feature would correspond to a version of the original image scaled by a positive factor of $2\spscale$ for most datapoints (and scaled by a negative factor of $-2\spscale$ for a minority of datapoints). A classifier that relies only on $\rvx_1 + \rvx_2$ to predict the label will work fine on our test distribution; but if it relies on $\rvx_{\text{diff}}$, it is susceptible to fail.

Now, we informally argue why an ERM-based classifier would rely on $\rvx_{\text{diff}}$. If we were to think of this in terms of a linear classifier, we can say that there must exist a weight vector $\rvw_{\text{diff}}$ such that 
$y \cdot (\rvw_{\text{diff}} \cdot \rvx_{\text{diff}}) > 0$ for the majority of the training datapoints. Then, we can imagine a single-dimensional spurious feature that corresponds to the component of the data along this direction i.e., $x_{\sp} :=  \rvw_{\text{diff}} \cdot \rvx_{\text{diff}}$.  Notably, for a majority of the datapoints in the training set we have $x_{\sp{}} \cdot y >0$ and for a minority of the datapoints this feature does not necessarily correlate align with the label --- let's say $x_{\sp{}} \cdot y < 0$ for convenience. Then, this setting would effectively boil down to the geometric skew setting considered by Theorem~\ref{thm:geometric}. In particular, we can say that when the minority group is sufficiently small, the classifier would rely on the spurious feature $x_{\sp{}}$ to make its classification. We visualize this in Fig~\ref{fig:cats_vs_dogs_toy}.

Thus, even though there is no color-label correlation in this dataset, there is still a spurious correlation, although manifest in a way that does not visually stand out. While this is one such novel kind of spurious correlation, the key message here is that we must not limit ourselves to thinking of spurious correlations as straightforward co-occurences between the label and a spurious object in the image.

\begin{remark} It is worth distinguishing this spurious correlation with the one in the Colored MNIST dataset \citep{arjovsky19invariant}. In Colored MNIST, the color is correlated with the label, and one can again think of $\rvx_{\text{diff}}$ as giving rise to the spurious feature. However, the manner in which this translates to a spurious feature is different. Here, the sign of each co-ordinate in $\rvx_{\text{diff}}$ is indicative of the true label (if it is positive, it means the image resides in the first channel, and is red, which is correlated with the label). Mathematically, if we define $\tilde{\rvw}_{\text{diff}}$ to be the vector of all ones, then we can think of $\tilde{\rvw}_{\text{diff}} \cdot \rvx_{\text{diff}}$ as a single-dimensional spurious feature here. In our case however, the vector ${\rvw}_{\text{diff}}$ that yields the single-dimensional spurious feature is different, and is given by the direction of separation between the two classes.
\end{remark}

\begin{remark}
We observe that in this non-standard spurious correlation setting, we require the minority group to be much smaller than in standard spurious correlation setting (like CMNIST) to create similar levels of OoD failure. We argue that this is because the magnitude of the spurious feature $|x_{\sp}|$ is much smaller in this non-standard setting. 
Indeed, this effect of the spurious feature magnitude is captured in Theorem~\ref{thm:geometric-full}: the lower bound on the spurious component holds only when the minority group is sufficiently small to achieve $\tilde{\kappa}_2 \lesssim \sqrt{\nicefrac{1}{4} - \nicefrac{1}{2|x_{\sp{}}|}}$ i.e., when the spurious feature magnitude $|x_{\sp{}}|$ is smaller, we need the minority group to be smaller.   

To see why $|x_{\sp{}}|$ differs in magnitude between the standard and non-standard spurious correlation settings, let us
To see why $|x_{\sp{}}|$ differs in magnitude between the standard and non-standard spurious correlation settings, recall from the previous remark that in our setting, $|x_{\sp{}}| = |{\rvw}_{\text{diff}} \cdot \rvx_{\text{diff}}|$, while in standard spurious correlation settings $|x_{\sp{}}| = {\rvw}_{\text{diff}} \cdot \rvx_{\text{diff}}$.  Intuitively, $\tilde{\rvw}_{\text{diff}} \cdot \rvx_{\text{diff}}$ corresponds to separating all-positive-pixel images from all-negative-pixel images, which are well-separated classes. On the other hand, ${\rvw}_{\text{diff}} \cdot \rvx_{\text{diff}}$ corresponds to separating images of one real-world class from another, which are harder to separate.  
Therefore, assuming that both weights vectors are scaled to unit norm, we can see that $|{\rvw}_{\text{diff}} \cdot \rvx_{\text{diff}}| \ll |\tilde{\rvw}_{\text{diff}} \cdot \rvx_{\text{diff}}|$

\end{remark}

\subsubsection{Binary-MNIST example with colors independent of label}
\label{sec:mnist-example-1}
Similar to the cats vs. dogs dataset, we also consider a Binary-MNIST dataset. 
 To construct this, for each domain we pick a value $\spscale \in [-1,1]$, and then set the first channel of every image to be $0.5\cdot(1+\spscale)$ times the original MNIST image, and the second channel to be $0.5\cdot(1-\spscale)$ times the original MNIST image. To show greater drops in OoD accuracy, we consider a stronger kind of test-time shift:  during training time we set $\spscale$ to have different \textit{positive} values so that the image is concentrated more towards the first channel; during test-time, we flip the mass completely over to the other channel by setting $\spscale=-1$. 

Here again, we can visualize the dataset in terms of an invariant feature that corresponds to the sum of the two channels, and a spurious feature that corresponds to $\rvw_{\text{diff}} \cdot \rvx_{\text{diff}}$. The exact visualization of failure here is slightly different here since we're considering a stronger kind of shift in this setting. In particular, during training we'd have 
$y \cdot (\rvw_{\text{diff}} \cdot \rvx_{\text{diff}}) > 0$ for all the training datapoints in this setting. Thus, we can think of this as a setting with no minority datapoints in the training set (see Fig~\ref{fig:understanding-mnist-example-1}). Then, as a special case of Theorem~\ref{thm:geometric}, we can derive a positive lower bound on the component of the classifier along $x_{\sp{}}$. 
However, during training time, since $\rvx_{\text{diff}}$ is no longer a positively scaled version of the MNIST digits, the value of $\rvw_{\text{diff}} \cdot \rvx_{\text{diff}}$ would no longer be informative of the label. 
This leads to an overall drop in the accuracy.

\textbf{Experimental results.} We conduct two sets of experiments, one where we use only a single domain to train and another with two domains (with two unique positive values of $\spscale$). 
In both variations, we also consider a control setting where the data is not skewed: in the single-domain experiment, this means that we set $\spscale = 0$ (both channels have the same mass); in the two-domain experiment this means that we set $\spscale$ to be positive in one domain and negative in another. According to our explanation above, in the single-domain control setting 
Fig~\ref{fig:mnist-color-shift-experiments}, since $\rvx_{\inv} = 0$, the classifier is likely to not rely on this direction and should be robust to test-time shifts in $\rvx_{\text{diff}}$. In the two-domain control setting, since the classifier also sees negatively scaled images in $\rvx_{\text{diff}}$, it would be relatively robust to such negative scaling during test-time. 
Indeed,  we observe in Fig~\ref{fig:mnist-color-shift-experiments}, that the performance on the non-skewed, control datasets are robust. On the other hand, on the skewed datasets, we observe greater drops in OoD accuracy when $\spscale$ is flipped, thus validating our hypothesis.

\begin{figure}[t!]
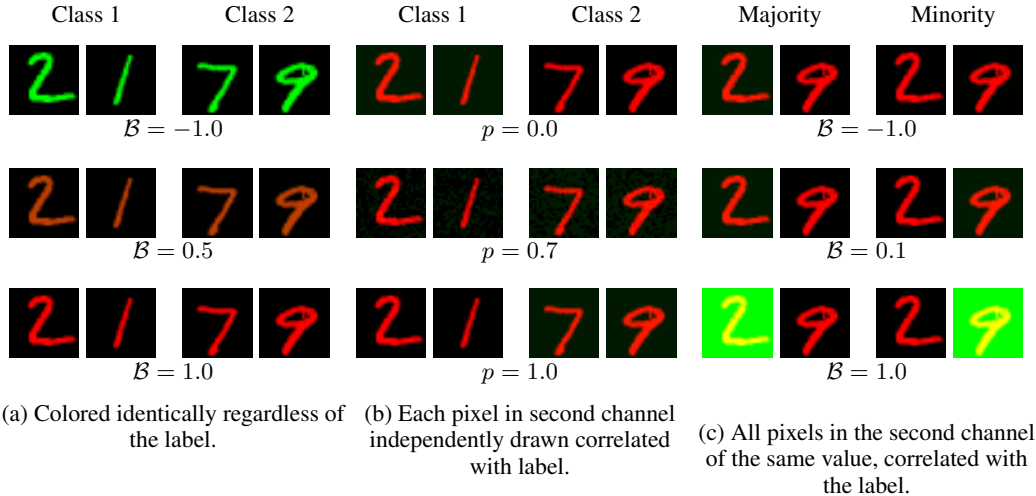

\centering
\begin{minipage}[t]{0.33\textwidth}
    \insertdataset{Class 1}{Class 2}{mnist_shift2.1_sp_scale_-1.0_0}{mnist_shift2.1_sp_scale_-1.0_1}{mnist_shift2.1_sp_scale_-1.0_2}{mnist_shift2.1_sp_scale_-1.0_3}{$\spscale=-1.0$}
    \vspace{0.1in}
    
    \insertdataset{}{}{mnist_shift2.1_sp_scale_0.5_0}{mnist_shift2.1_sp_scale_0.5_1}{mnist_shift2.1_sp_scale_0.5_2}{mnist_shift2.1_sp_scale_0.5_3}{$\spscale=0.5$}
        \vspace{0.1in}
        
    \insertdataset{}{}{mnist_shift2.1_sp_scale_1.0_0}{mnist_shift2.1_sp_scale_1.0_1}{mnist_shift2.1_sp_scale_1.0_2}{mnist_shift2.1_sp_scale_1.0_3}{$\spscale=1.0$}
    \subcaption{Colored identically regardless of the label.}
    \label{fig:mnist-color-shift-various-scales}
 \end{minipage}%
 \begin{minipage}[t]{0.33\textwidth}
    \insertdataset{Class 1}{Class 2}{mnist_shift5_spur_p_0.0_0}{mnist_shift5_spur_p_0.0_1}{mnist_shift5_spur_p_0.0_2}{mnist_shift5_spur_p_0.0_3}{$p=0.0$}
    \vspace{0.1in}
    
    \insertdataset{}{}{mnist_shift5_spur_p_0.7_0}{mnist_shift5_spur_p_0.7_1}{mnist_shift5_spur_p_0.7_2}{mnist_shift5_spur_p_0.7_3}{$p=0.7$}
        \vspace{0.1in}
        
    \insertdataset{}{}{mnist_shift5_spur_p_1.0_0}{mnist_shift5_spur_p_1.0_1}{mnist_shift5_spur_p_1.0_2}{mnist_shift5_spur_p_1.0_3}{$p=1.0$}

    \subcaption{Each pixel in second channel independently drawn correlated with label.}
    \label{fig:mnist-highdim-various-scales}
 \end{minipage}%
  \begin{minipage}[t]{0.33\textwidth}
      \insertdataset{Majority}{Minority}{mnist_shift4.1_pos_scale_0.1_0}{mnist_shift4.1_pos_scale_-1.0_3}{mnist_shift4.1_neg_scale_-1.0_0}{mnist_shift4.1_neg_scale_-1.0_3}{$\spscale=-1.0$}
    \vspace{0.1in}

      \insertdataset{}{}{mnist_shift4.1_pos_scale_0.1_0}{mnist_shift4.1_pos_scale_0.1_3}{mnist_shift4.1_neg_scale_0.1_0}{mnist_shift4.1_neg_scale_0.1_3}{$\spscale=0.1$}
    \vspace{0.1in}
    
    \insertdataset{}{}{mnist_shift4.1_pos_scale_1.0_0}{mnist_shift4.1_pos_scale_1.0_3}{mnist_shift4.1_neg_scale_1.0_0}{mnist_shift4.1_neg_scale_1.0_3}{$\spscale=1.0$}
        \vspace{0.1in}
                    \subcaption{All pixels in the second channel of the same value, correlated with the label.}
    \label{fig:mnist-sp-channel-various-scales}
 \end{minipage}%
    \caption{\textbf{Our Binary-MNIST examples}: In \textbf{Fig~\ref{fig:mnist-color-shift-various-scales}} we present the dataset from App~\ref{sec:mnist-example-1} for various values of $\spscale$ which determines how much of the image resides in the first channel. \textbf{Fig~\ref{fig:mnist-highdim-various-scales}} presents the dataset from App~\ref{sec:mnist-example-2} where the second channel pixels are individually picked to align with the label with probability $p$ (the difference is imperceptible because the pixels are either $0$ or $0.1$). \textbf{Fig~\ref{fig:mnist-sp-channel-various-scales}} presents the dataset for App~\ref{sec:mnist-statistical}, where all pixels in the second channel are either $0$ or $\spscale$. }
        \label{fig:mnist-example}
\end{figure}

\begin{figure}[t!]
\centering
    \begin{minipage}[t]{\textwidth}
        \centering
        \begin{minipage}[t]{0.33\textwidth}
                \adjincludegraphics[width=0.8\textwidth]{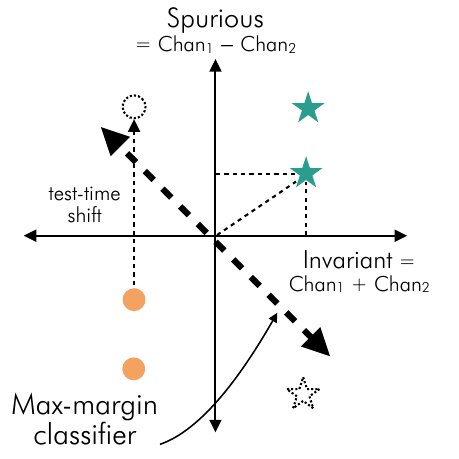}
                \subcaption{Understanding failure}
                \label{fig:understanding-mnist-example-1}
        \end{minipage}%
                \begin{minipage}[t]{0.33\textwidth}
                \adjincludegraphics[width=\textwidth]{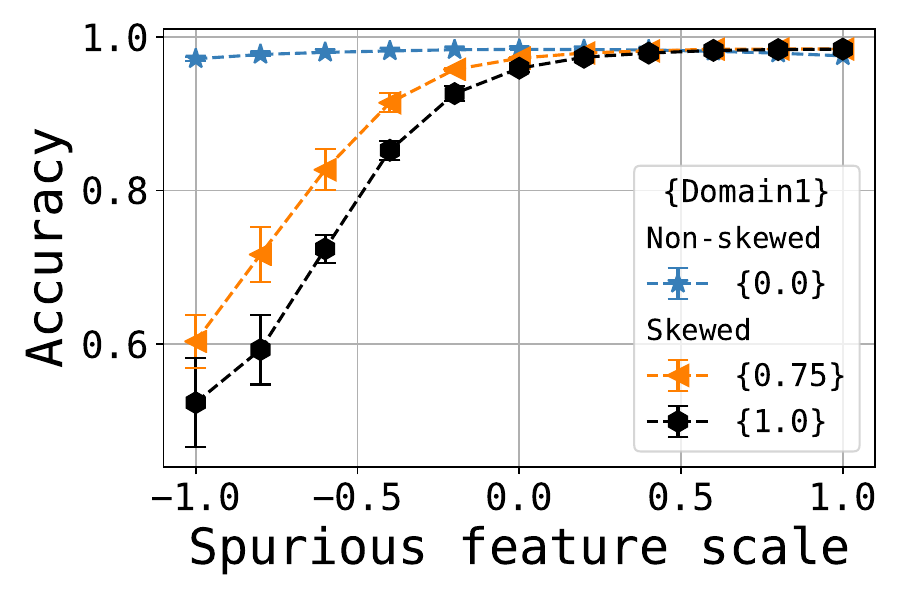}
                \subcaption{Single domain}
                \label{fig:mnist-example-1-single-domain}
        \end{minipage}%
                \begin{minipage}[t]{0.33\textwidth}
                \adjincludegraphics[width=\textwidth]{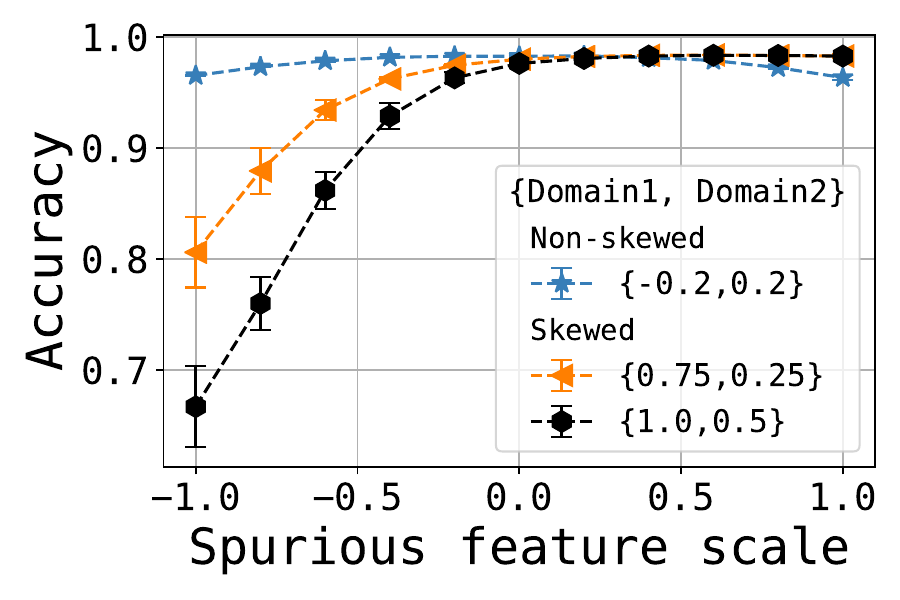}
                \subcaption{Two domains}
                \label{fig:mnist-example-1-multidomain}
        \end{minipage}%
    \end{minipage}
    \hfill 
    \caption{\textbf{Failure on the Binary-MNIST dataset in App~\ref{sec:mnist-example-1}.} In \textbf{Fig~\ref{fig:understanding-mnist-example-1}}, we visualize the failure observed on this dataset. Here we can think of the spurious feature as the difference between the two channels (projected along a direction $\rvw_{\text{diff}}$ where they are informative of the label) and the invariant feature as the sum of the two channels. \textbf{Fig~\ref{fig:mnist-example-1-single-domain}} and \textbf{Fig~\ref{fig:mnist-example-1-multidomain}} show the OoD performance under different shifts in the scale of the spurious feature $\spscale$. During training time this is set to the value given by `Domain1' and/or `Domain2'.}
        \label{fig:mnist-color-shift-experiments}
\end{figure}

\subsubsection{Binary-MNIST example with high-dimensional spurious features}
\label{sec:mnist-example-2}

In this dataset, we consider a two-channel MNIST setting where the second channel is a set of spurious pixels, each independently picked to be either $0$ or $0.1$ with probability $1-p$ and $p$ for positively labeled points and $p$ and $1-p$ for negatively labeled points. During training we set $p = 0.55$ and $p=0.60$ corresponding to two training domains, and during testing, we flip this to $p=0.0$. We visualize this dataset in Fig~\ref{fig:mnist-highdim-various-scales}. In Fig~\ref{fig:highdim-experiments}, we observe that the classifier drops to $0$ accuracy during testing. To explain this failure, we can think of the sum of the pixels in the second channel as a spurious feature: since these features are independently picked, with high probability, their sum becomes fully informative of the label. As discussed in Section~\ref{sec:mnist-example-1}, this boils down to the setting of Theorem~\ref{thm:geometric} when there are no minority datapoints. 
In Appendix~\ref{sec:constraints-and-failures}, we make this argument more formal (see under the discussion of Constraint~\ref{con:discrete} in that section).

\begin{figure}[t!]
\centering
\begin{minipage}[t]{0.25\textwidth}
    \centering
    \includegraphics[width=\textwidth]{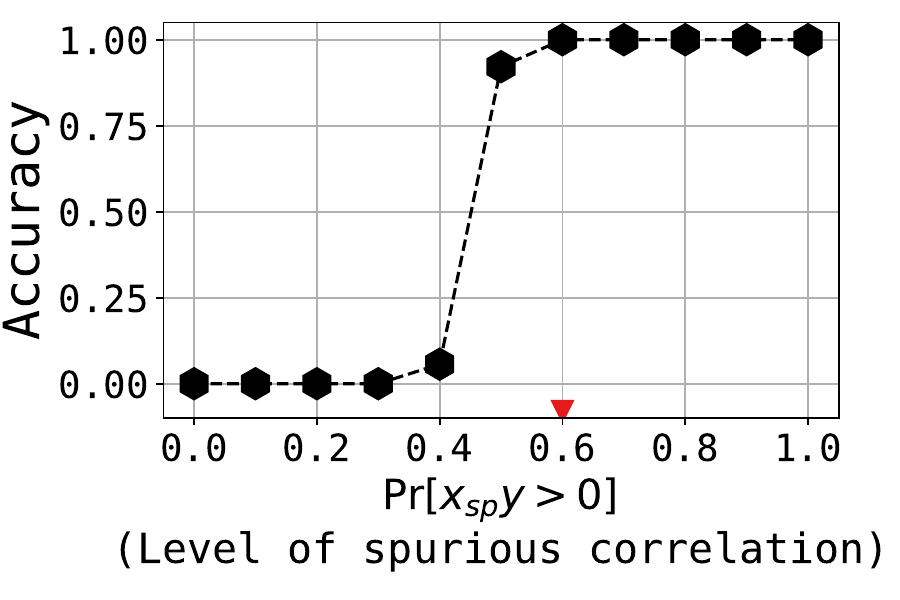}
        \subcaption{Single domain}
    \end{minipage}%
\begin{minipage}[t]{0.25\textwidth}
    \centering
    \includegraphics[width=\textwidth]{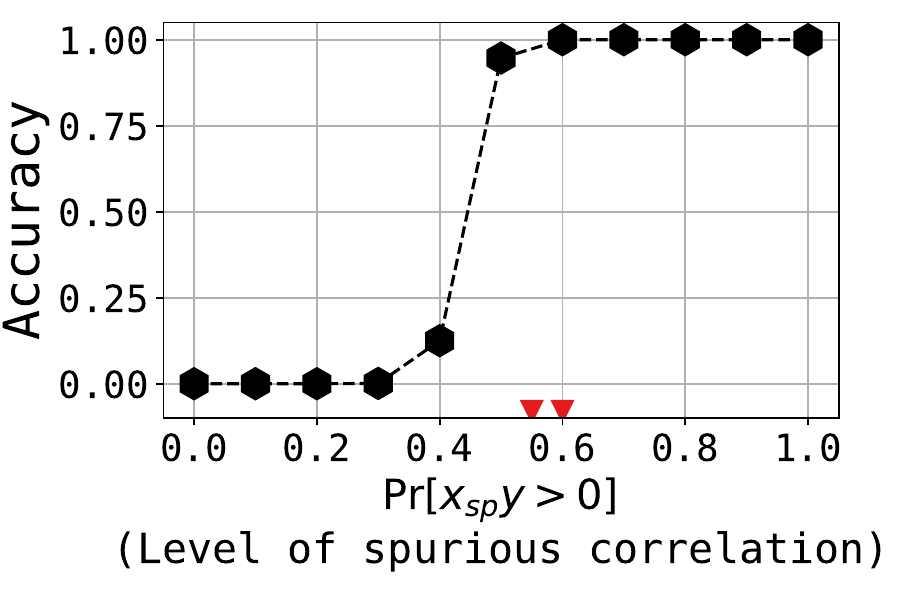}
    \subcaption{Two domains}
\end{minipage}
    \caption{\textbf{Experiments for the high-dimensional spurious features dataset in App~\ref{sec:mnist-example-2}.} The red triangles here denote the values of $p$ used during training. }
    \label{fig:highdim-experiments}
\end{figure}

\subsection{Experiments from Sec~\ref{sec:statistical} on statistical skews}
\label{sec:statistical-experiments}

\textbf{Experiment on $\gD_{\text{2-dim}}$.}  For the plot in Fig~\ref{fig:2d_gd}, we train a linear model with no bias on the logistic loss with a learning rate of $0.001$, batch size of $32$ and training set size of $2048$.

\subsubsection{Binary-MNIST experiments validating the effect of statistical skews.}
\label{sec:mnist-statistical}
For this experiment, we consider a Binary-MNIST dataset where the second channel is set to be either all-$0$ or all-constant pixels. We visualize this dataset in Fig~\ref{fig:mnist-sp-channel-various-scales}.

To generate our control and experimental datasets, we do the following.
First, we sample a set of Binary-MNIST images $S_{\inv{}}$ and their labels from the original MNIST distribution. Next we create two datasets $S_{\major{}}$ and $S_{\minor{}}$ by taking each of these invariant images, and appending a spurious feature to it. More precisely, we let 
$S_{\minor}= \{((x_{\inv{}},x_{\sp{}}), y) \text{ where } x_{\sp{}} = \spscale(y+1)| x_{\inv{}} \in S_{\inv{}}\}$ and $S_{\major}= \{((x_{\inv{}},x_{\sp{}}), y) \text{ where } x_{\sp{}} =  \spscale(-y+1)| x_{\inv{}} \in S_{\inv{}}\}$. 

We then define a ``control'' dataset $S_{\text{con}}:= S_{\major{}} \cup S_{\minor{}}$, which has a 1:1 split between the two groups of points. Next, we create an experimental ``duplicated'' dataset $S_{\text{exp}} := S_{\major{}} \cup S_{\minor{}} \cup S_{\text{dup}}$ where $S_{\text{dup}}$ is a \textit{large} dataset consisting of datapoints randomly chosen from $S_{\major{}}$, thus creating a spurious correlation between the label and the spurious feature. The motivation in creating datasets this way is that neither $S_{\text{con}}$ and $S_{\text{exp}}$ have geometric skews; however 
$S_{\text{exp}}$ does have statistical skews, and so any difference in training on these datasets can be attributed to those skews. 

\textbf{Observations.} In our experiments, we let $S_{\inv{}}$ be a set of $30k$ datapoints, and so $S_{\text{con}}$ has $60k$ datapoints. We duplicate $S_{\minor}$ nine times so that $S_{\text{exp}}$ has $330k$ datapoints and has a $10:1$ ratio between the two groups. We consider two different settings, one where $\spscale = 0.1$ during training and $\spscale = 1.0$ during training. During testing, we report the accuracy under two kinds of shifts: shifting the value of $\spscale$, and also shifting the correlation completely to one direction (i.e., by concentrating all mass on the minority/majority group). As reported in Fig~\ref{fig:mnist-statistical-experiments}, the statistically skewed dataset does suffer more during test-time when compared to the unskewed dataset. 

\begin{figure}[t!]
\centering
\begin{minipage}[t]{0.25\textwidth}
    \centering
    \includegraphics[width=\textwidth]{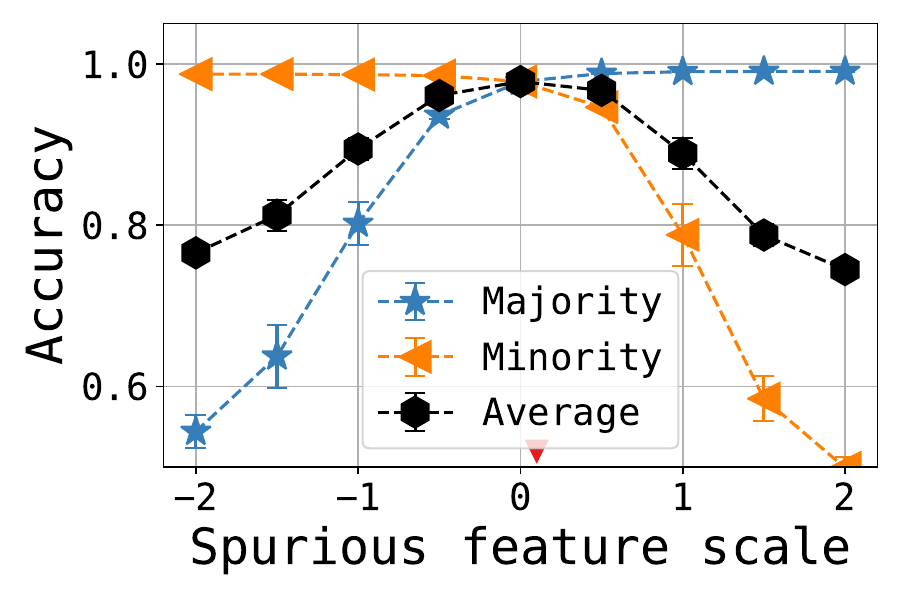}
        \subcaption{Statistically skewed, with $\spscale=0.1$}
    \end{minipage}%
\begin{minipage}[t]{0.25\textwidth}
    \centering
    \includegraphics[width=\textwidth]{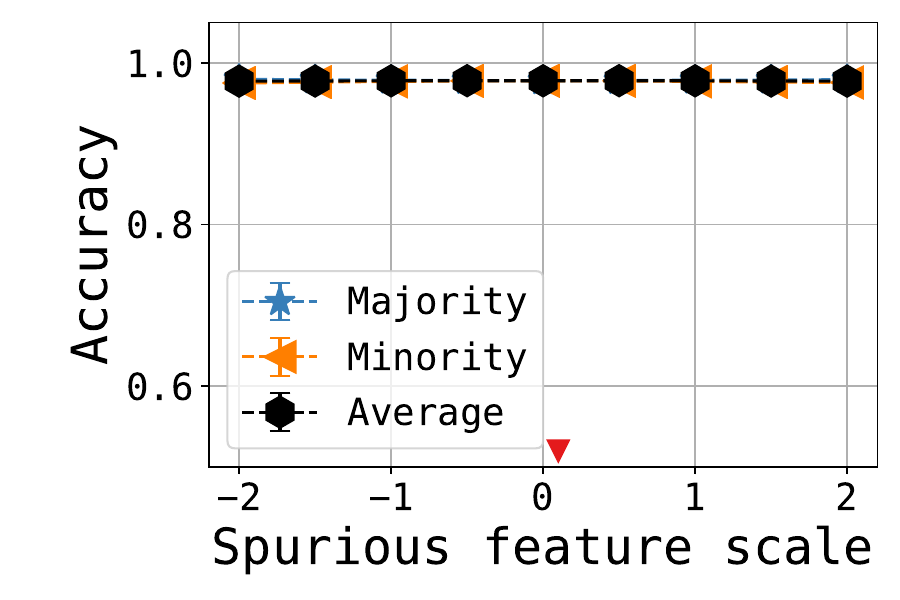}
        \subcaption{No skew, with $\spscale=0.1$}
    \end{minipage}%
    \begin{minipage}[t]{0.25\textwidth}
        \includegraphics[width=\textwidth]{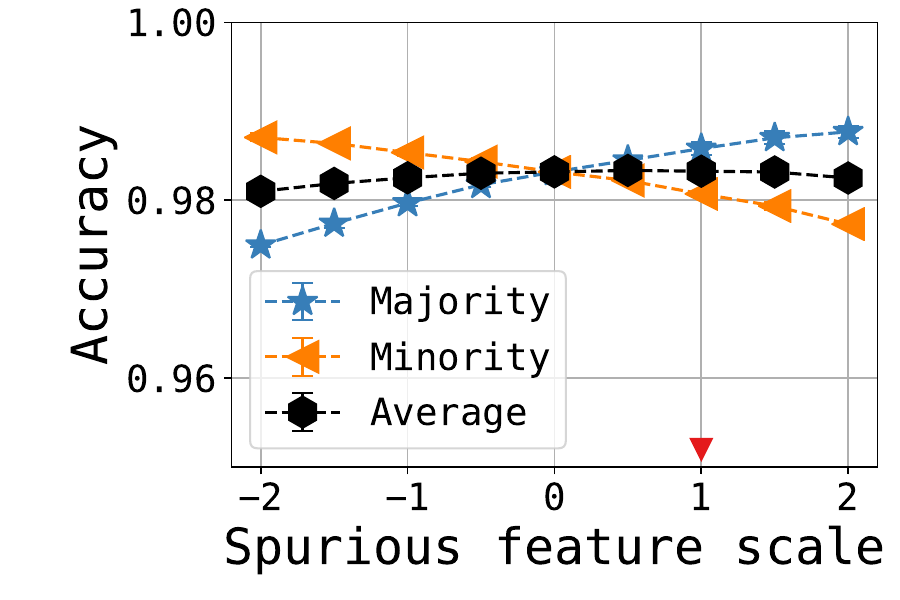}
        \subcaption{Statistically skewed, with $\spscale=1.0$}
    \end{minipage}%
    \begin{minipage}[t]{0.25\textwidth}
    \centering
    \includegraphics[width=\textwidth]{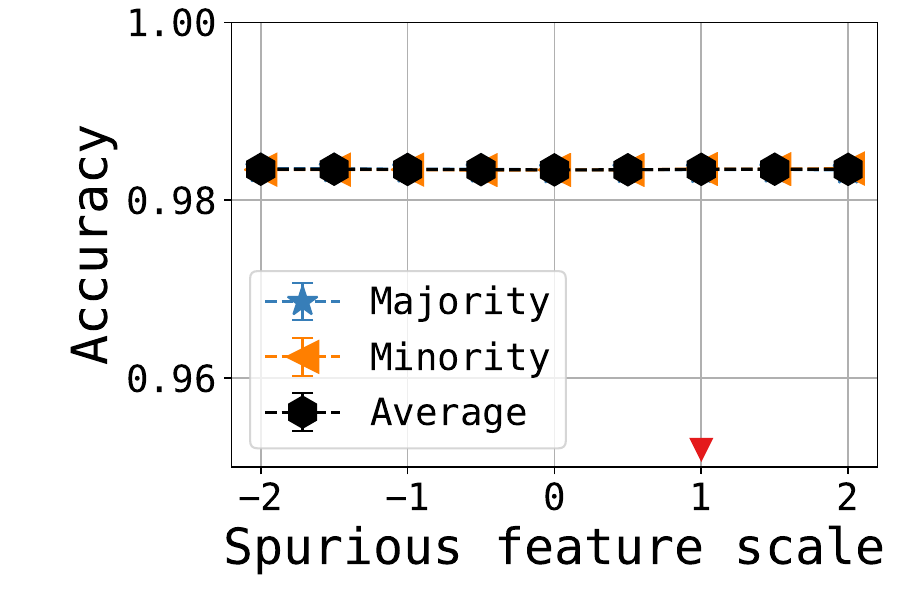}
        \subcaption{No skew, with $\spscale=1.0$}
\end{minipage}
    \caption{\textbf{Experiments validating the effect of statistical skews on the MNIST dataset in App~\ref{sec:mnist-statistical}}. The red triangle denotes the value of $\spscale$ during training.}
    \label{fig:mnist-statistical-experiments}
\end{figure}

     \subsubsection{CIFAR10 experiments validating the effect of statistical skews.}
\label{sec:cifar10-statistical}
For this experiment, we consider the same CIFAR-10 dataset that we design in Section~\ref{sec:cifar10-example-1} i.e., we introduce lines in the dataset, that can take $10$ different color configurations each corresponding to one of the $10$ different classes. Here, the scale $\spscale$ of the spurious feature varies from $[-1,1]$ (see Section~\ref{sec:cifar10-example-1} for more details on this).

\begin{figure}[t!]
\centering
\begin{minipage}[t]{0.25\textwidth}
    \centering
    \includegraphics[width=\textwidth]{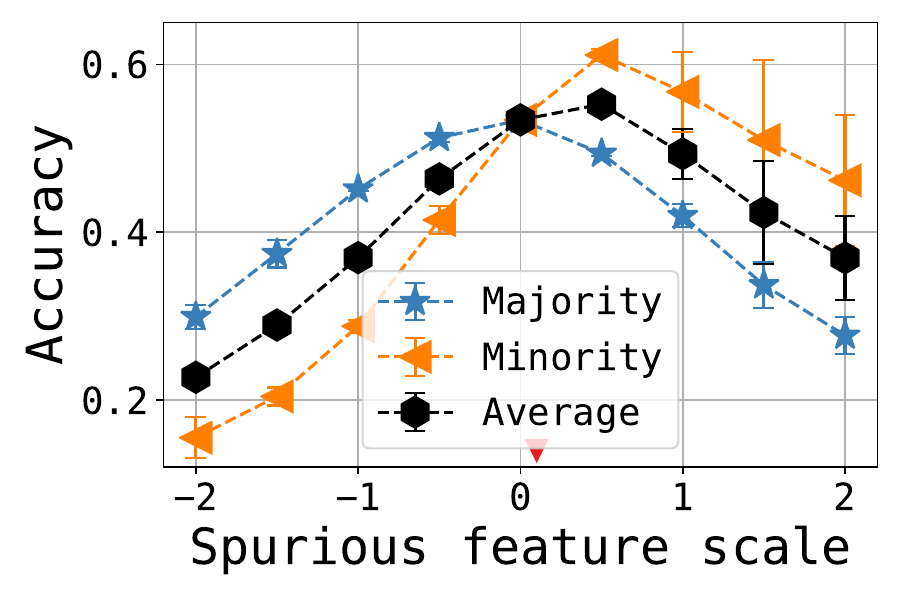}
        \subcaption{Statistically skewed, with $\spscale=0.1$}
    \end{minipage}%
\begin{minipage}[t]{0.25\textwidth}
    \centering
    \includegraphics[width=\textwidth]{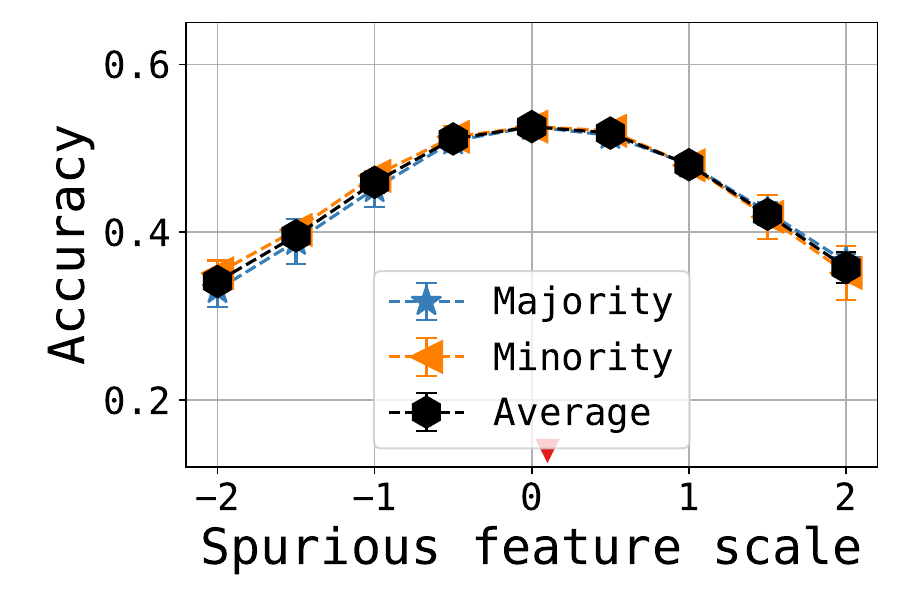}
        \subcaption{No skew, with $\spscale=0.1$}
    \end{minipage}%
    \begin{minipage}[t]{0.25\textwidth}
        \includegraphics[width=\textwidth]{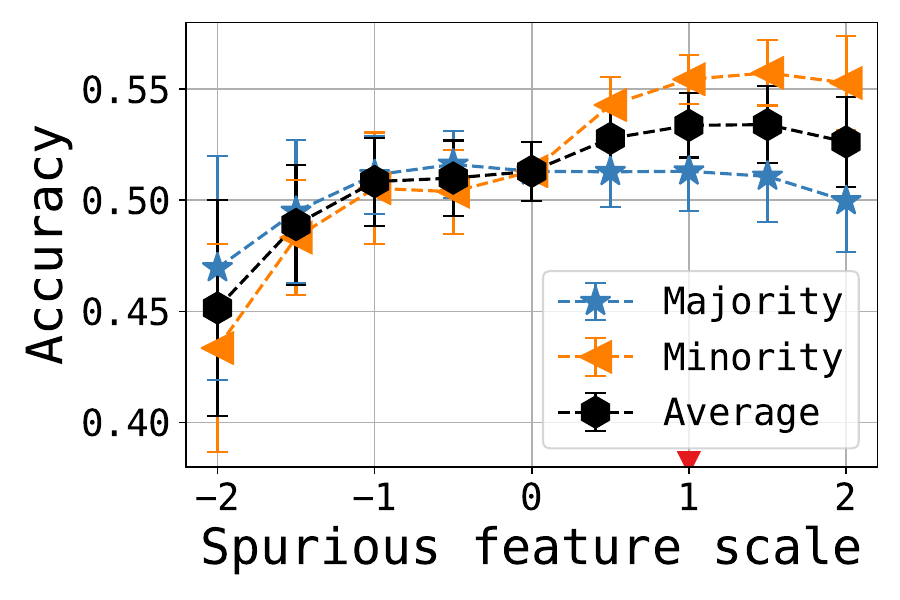}
        \subcaption{Statistically skewed, with $\spscale=1.0$}
    \end{minipage}%
    \begin{minipage}[t]{0.25\textwidth}
    \centering
    \includegraphics[width=\textwidth]{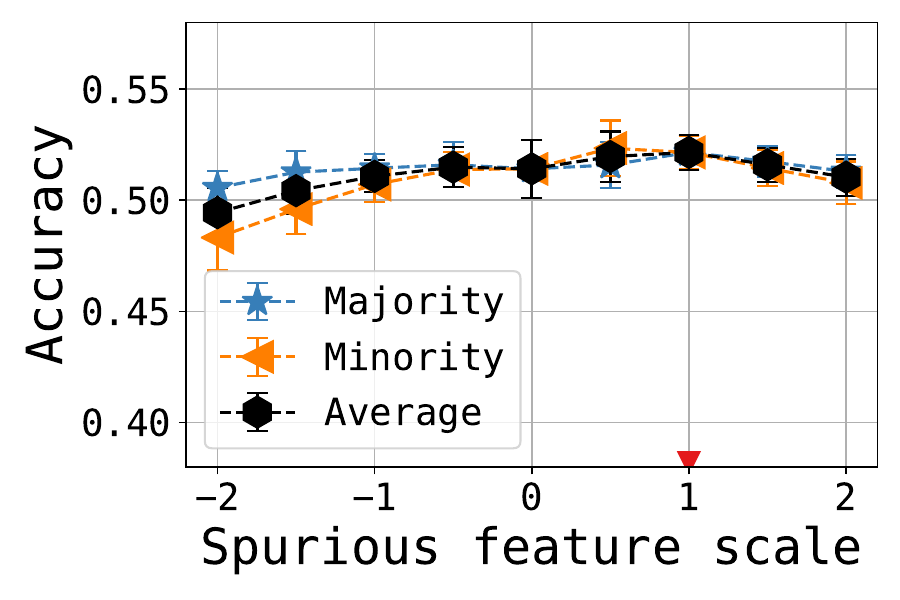}
        \subcaption{No skew, with $\spscale=1.0$}
\end{minipage}
    \caption{\textbf{Experiments validating the effect of statistical skews on the MNIST dataset in App~\ref{sec:mnist-statistical}}. The red triangle denotes the value of $\spscale$ during training.}
    \label{fig:cifar10-statistical-experiments}
\end{figure}

The way we construct the control and experimental dataset requires a bit more care here since we have $10$ classes. Specifically, we replicate $S_{\inv{}}$ ten times, and to each copy, we attach a spurious feature of a particular configuration. This creates $S_{\text{con}}$ which has no geometric or statistical skews. Also, the has a size of $|S_{\text{con}}| = 10|S_{\text{inv}}|$. Then, we consider the $1/10$th fraction of points in $S_{\text{con}}$ where the spurious feature has the correct configuration corresponding to the label. We create a large duplicate copy of this subset that is $81$ times larger than it (we do this by randomly sampling from that subset). We add this large duplicate set to $S_{\text{con}}$ to get $S_{\text{exp}}$. This gives rise to a dataset where for any label, there is a $10:1$ ratio\footnote{Here's the calculation: in $S_{\text{control}}$, we have a subset of size $|S_{\inv{}}|$ where the spurious feature is aligned with the label and in the remaining $9|S_{\inv{}}|$ datapoint, the spurious feature is not aligned. So, if we add $81|S_{{\inv{}}}|$ datapoints with matching spurious features, we'll have a a dataset where $90 |S_{\inv{}}|$ datapoints have matching spurious features while $9|S_{\inv{}}|$ don't, thus creating the desired $10:1$ ratio. } between whether the spurious feature matches the label or where it takes one of the other nine configurations.

\textbf{Observations.} We run experiments by setting $|S_{\text{inv{}}}| = 5k$ and so
$|S_{\text{control}}| = 50k$ and $S_{\text{exp}} = 455k$. During training we try two different values of $\spscale$, $0.1$ and $1$ respectively. During testing, as in the previous section, we vary both the scale of the spurious feature and also its correlation by evaluating individually on the minority dataset (where the spurious features do not match the label) and majority datasets (where the spurious features match the label). Here, again we observe that the model trained on the non-skewed dataset is less robust, evidently due to the statistical skews.

It is worth noting that even though there is no statistical or geometric skew in the control dataset, we observe in Fig~\ref{fig:cifar10-statistical-experiments} (b) that the classifier is not completely robust to shifts in the spurious feature. We suspect that this may point to other causes of failure specific to how neural network models train and learn representations.

\section{Solutions to the OoD problem}
\label{sec:solutions}
Our discussion regarding geometric and statistical skews tells us about why ERM fails. A natural follow-up is to ask: how do we fix the effect of these skews to learn a good classifier? Below, we outline some natural solutions inspired by our insights.

\textbf{Solution for geometric skews.}  Our goal is to learn a margin-based classifier that is not biased by geometric skews, and therefore avoids using the spurious feature.
Recall that we have a majority subset $S_{\major{}}$ of the training set which corresponds to datapoints where $x_{\sp{}} \cdot y > 0$ and a minority subset $S_{\minor{}}$  which corresponds to datapoints where $x_{\sp{}} \cdot y < 0$.  Our insight from the geometric skews setting is that the max-margin classifier fails because it is much ``harder'' (in terms of $\ell_2$ norm) to classify the majority dataset $S_{\major{}}$ using $\rvx_{\inv{}}$ when compared to classifying $S_{\minor}$ using $\rvx_{\inv{}}$.  A natural way to counter this effect would be to somehow bring the difficulty levels of these datasets closer. In particular, we propose a ``balanced'' max-margin classifier $\rvw_{\text{bal}}$ that does the following:

\[\rvw_{\text{bal}} = \arg\max_{\|\rvw_{\text{bal}}\|=1}    \min\left( \underbrace{\{ \rvw_{\text{bal}} \cdot \rvx | \rvx \in S_{\major}   \}}_{\text{margin on majority group}},  \underbrace{ \{ c \cdot \rvw_{\text{bal}} \cdot \rvx | \rvx \in S_{\minor}   \} }_{\text{downscaled margin on minority group}} \right)\]

where $c$ is a sufficiently small constant. In words, we consider a classifier that maximizes a ``balanced'' margin on the dataset. The balanced margin is computed by scaling down the margins on the minority datapoints, thereby making that subset artificially harder, and thus diminishing the geometric skew. For an appropriately small choice of $c$, we can expect the balanced max-margin to minimize its reliance on the spurious feature. 

Note that this sort of an algorithm is applicable only in settings like those in fairness literature where we know which subset of the data corresponds to the minority group and which subset corresponds to the majority group.

\textbf{Solution for statistical skews.} One natural way to minimize the effect of statistical skews is to use $\ell_2$ weight decay while learning the classifier: weight decay is known to exponentially speed up the convergence rate of gradient descent, leading to the max-margin solution in polynomial time. Note that this doesn't require any information about which datapoint belongs to the minority group and which to the majority.

In the setting where we do have such information, we can consider another solution: simply oversample the minority dataset while running gradient descent. This would result in a dataset with no statistical skews, and should hence completely nullify the effect of statistical skews. Thus, our argument provides a theoretical justification for importance sampling for OoD generalization.

\section{Demonstrating skews on a non-image-classification dataset}
\label{sec:clinical}

So far, we have demonstrated our insights in the context of image classification tasks. However, our theoretical framework is abstract enough for us to apply these insights even in non-image classification tasks. To demonstrate this, we consider an obesity estimation task based on the dataset from \cite{palechor19dataset}.

\textbf{Dataset details.} The dataset consists of 16 features including height, weight, gender and  habits of a person. The label takes one of six different values corresponding to varying levels of obesity. We convert this to a binary classification task by considering the first three levels as one class and the last three levels as another; we ignore any datapoint from the middle level, resulting in a dataset of $1892$ points. We randomly split the data to extract $729$ test datapoints. The dataset also has a categorical variable corresponding to the preferred mode of transport of the individual, which we convert to five binary features (corresponding to automobile, motorbike, bike, public transport and walking). We then scale all the resulting $20$ features to lie between $-1$ and $1$.  For all our experiments, we will consider fitting a linear classifier directly on these features.

Next, we construct a ``biased'' training set by sampling from the above training set in a way that the public transport feature becomes spuriously correlated with the label. For convenience we denote this feature as $x_{\sp{}}$ and the remaining features as $\rvx_{\inv{}}$. Note that in the original dataset there is not much correlation between $x_{sp{}}$ and $y$. In particular $\Pr_{\Dtrain}[x_sp \cdot y > 0] \approx 0.47$ (a value close to $0.5$ implies no spurious correlation). Indeed, a max-margin classifier $\rvw$ trained on the original dataset does \textit{not} rely on this feature. In particular, $\nicefrac{w_{\sp{}}}{\|\rvw\|}$ is as small as $0.008$. Furthermore, this classifier achieves perfect accuracy on all of the test dataset. However, as we discuss below, the classifier learned on the biased set does rely on the spurious feature.

\begin{figure}[t!]
\centering
    \begin{minipage}[t]{\textwidth}
        \centering
        \begin{minipage}[t]{0.33\textwidth}
                \adjincludegraphics[width=\textwidth,padding=0.0in 0 0 0]{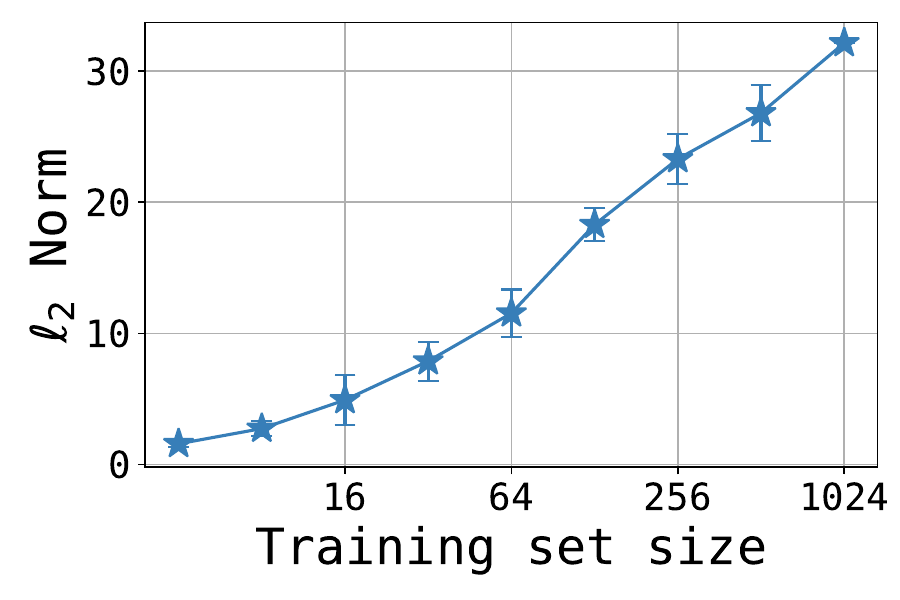}
                \subcaption{Evidence of geometric skews}
                \label{fig:obesity_norms}
        \end{minipage}%
        \begin{minipage}[t]{0.33\textwidth}
                        \adjincludegraphics[width=\textwidth,padding=0.0in 0 0 0]{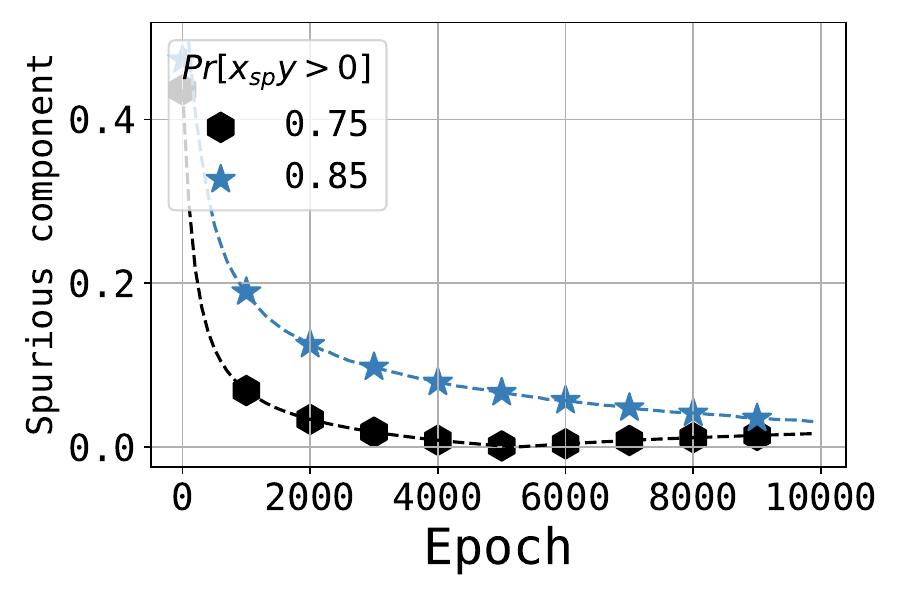}
                \subcaption{Evidence of statistical skews}
                        \label{fig:obesity_stat}
        \end{minipage}
                
    \end{minipage}
    \hfill 
    \caption{\textbf{Experiments validating geometric and statistical skews on the obesity dataset}: In Fig~\ref{fig:obesity_norms}, we show that the $\ell_2$ norm required to fit the data in the invariant feature space grows with dataset size. As discussed earlier, this would result in a geometric skew,  which explains why a max-margin classifier would rely on the spuriously correlated feature.  In Fig~\ref{fig:obesity_stat}, we show that the spurious component  $\nicefrac{w_{\sp}}{\|\rvw\|}$ of the gradient descent trained classifier converges to zero very slowly depending on the level of statistical skew.\protect\footnotemark}
        \label{fig:obesity}
\end{figure}

\textbf{Geometric skews.} We sample a biased training dataset that has $581$ datapoints for which $x_{\sp{}} = y$ and $10$ datapoints for which $x_{\sp{}} = -y$. We then train a max-margin classifier on this dataset, and observe that its spurious component $\nicefrac{w_{\sp{}}}{\|\rvw\|}$ is as large as $0.12$ (about $15$ times larger than the component of the max-margin on the original dataset). But more importantly, the accuracy of this model on the test dataset where $x_{\sp{}} = y$ is $99.5\%$ while the accuracy on a test dataset where $x_{\sp{}} = -y$ is only $57.20\%$. In other words, the classifier learned here relies on the spurious feature, and suffers from poor accuracy on datapoints where the spurious feature does not align with the label. Why does this happen? Our theory says that this must be because of the fact that as we increase the number of training datapoints, it requires greater and greater $\ell_2$ max-margin norm to fit the data using only the invariant feature space (and ignoring the spurious feature). Indeed, we verify that this increase does happen, in Fig~\ref{fig:obesity_norms}.

\textbf{Statistical skews.} To demonstrate the effect of statistical skews, we take a similar approach as in earlier sections. We consider a dataset where we have the same number of unique data in the majority group (where $x_{\sp{}} = y$) and in the minority group (where $x_{\sp{}} = -y$). 
However, the majority group contains many duplicates of its unique points, outnumbering the minority group. In particular, we consider two different datasets of $500$ datapoints each, and in one dataset, the majority group forms $0.75$ fraction of the data, and in the other it forms a $0.85$ fraction of the data. Note that in both these datasets, the max-margin classifier does not rely on the spurious feature since there's no geometric skew. 

To demonstrate the effect of the statistical skew, we train a linear classifier to minimize the logistic loss using SGD with a learning rate of $0.01$ and batch size of $32$ for as many as $10k$ epochs (which is well beyond the number of epochs required to fit the dataset to zero error). We then verify in Fig~\ref{fig:obesity_stat} that gradient descent takes a long time to let the spurious component of the classifier get close to its final value which is close to zero. This convergence rate, as we saw in our earlier experiments, is slower when the statistical skew is more prominent.

\textbf{Important note.} We must caution the reader that this demonstration is merely intended to showcase that our theoretical insights can help understand the workings of a classifier in a practically important, non-image classification dataset. However, we make no recommendations about how to deal with such high-risk tasks in practice. Such tasks require the practitioner to make careful ethical and social considerations which are beyond the scope of this paper. Indeed, empirical benchmarks for evaluating OoD generalization \citep{gulrajani20search} are largely based on low-risk image classification tasks as it provides a safe yet reasonable test-bed for developing algorithms.

\textbf{Remark on synthetic vs. natural spurious feature shifts.} It would be a valuable exercise to validate our insights on datasets with naturally-embedded spurious features. However, in order to test our insights, it is necessary to have datasets where one can explicitly quantify and manipulate the spurious features e.g., we need this power to be able to discard the spurious feature and examine the $\ell_2$ norms of the max-margin in the invariant feature space. Currently though, OoD research lacks such datasets: we either have MNIST-like datasets with synthetic but quantifiable spurious features, or realistic datasets like PACS \citep{asadi19towards}, VLCS \citep{fang13unbiased} where it is not even clear what the spurious feature is. 
The lack of datasets with natural yet quantifiable spurious features is a gap that is beyond the scope of this paper, and is worth being bridged by the OoD community in the future.

\end{document}